\documentclass[12pt, draftclsnofoot, onecolumn]{IEEEtran}
\usepackage[caption=false]{subfig}
\usepackage[short]{optidef}
\usepackage{amsmath,amsfonts, amsthm, amssymb}
\usepackage{xcolor}
\usepackage{array}
\usepackage{subfig}
\usepackage{textcomp}
\usepackage{stfloats}
\usepackage{url}
\usepackage{verbatim}
\usepackage{graphicx}
\usepackage{cite}
\usepackage{multirow}
\usepackage{makecell}
\usepackage{enumitem}
\usepackage{cleveref}
\usepackage{dsfont}
\usepackage{bm}
\usepackage{booktabs} 
\usepackage[linesnumbered,ruled,vlined]{algorithm2e}

\DeclareMathOperator*{\argmin}{arg\,min}
\newcommand{\twonormsquare}[1]{\big\lVert#1\big\rVert^2}
\newcommand{\norm}[1]{\big\lVert#1\big\rVert}
\newcommand{\innerproduct}[2]{\left\langle #1, #2 \right\rangle}
\newcommand{\expnumber}[2]{{#1}\mathrm{e}{#2}}

\DeclareMathOperator*{\ssum}{\textstyle\sum}

\newtheorem{lemma}{Lemma}

\newtheorem{definition}{Definition}
\newtheorem{fact}{Fact}
\newtheorem{assumption}{Assumption}
\newtheorem{theorem}{Theorem} 
\newtheorem{corollary}{Corollary}
\allowdisplaybreaks

\begin{document}

\title{Asynchronous Multi-Model Dynamic Federated Learning over Wireless Networks: Theory, Modeling, and Optimization}

\author{Zhan-Lun Chang,~\IEEEmembership{Member,~IEEE,}
        Seyyedali Hosseinalipour,~\IEEEmembership{Member,~IEEE,} 
        Mung Chiang,~\IEEEmembership{Fellow,~IEEE,} Christopher G. Brinton,~\IEEEmembership{Senior Member,~IEEE}
}



\maketitle

\begin{abstract}

Federated learning (FL) has emerged as a key technique for distributed machine learning (ML). Most literature on FL has focused on ML model training for (i) a \textit{single task/model}, with (ii) a \textit{synchronous} scheme for updating model parameters, and (iii) a \textit{static} data distribution setting across devices, which is often not realistic in practical wireless environments. To address this, we develop {\tt DMA-FL} considering \textit{d}ynamic FL with \textit{m}ultiple downstream tasks/models over an \textit{a}synchronous model update architecture. We first characterize convergence via introducing scheduling tensors and rectangular functions to capture the impact of system parameters on learning performance. Our analysis sheds light on the joint impact of device training variables (e.g., number of local gradient descent steps), asynchronous scheduling decisions (i.e., when a device trains a task), and dynamic data drifts on the performance of ML training for different tasks. Leveraging these results, we formulate an optimization for jointly configuring resource allocation and device scheduling to strike an efficient trade-off between energy consumption and ML performance. Our solver for the resulting non-convex mixed integer program employs constraint relaxations and successive convex approximations with convergence guarantees. Through numerical experiments, we reveal that {\tt DMA-FL} substantially improves the performance-efficiency tradeoff.

\end{abstract}


\section{Introduction}
The proliferation of intelligent Internet-of-Things (IoT) devices (e.g., mobile phones and smart vehicles) has caused unprecedented growth in the amount of generated data at the network edge \cite{zhang2018synergy}. There is a strong demand to leverage this data to enable machine learning (ML)-driven services for both user applications (e.g., object tracking for self-driving cars) and network optimization (e.g., wireless signal denoising). Latency and privacy constraints associated with transferring the collected data to a central location for training (i.e., at the edge or cloud servers) have led to research on distributing ML over the network edge \cite{hosseinalipour2020federated}.

\subsection{Federated Learning (FL) and Practical Considerations}
Federated learning (FL) in particular has attracted significant attention in recent years as a solution for distributed ML \cite{mcmahan2017communication,smith2017federated}. The basic premise of FL is to conduct ML model training through two processes repeated in sequence:
\begin{enumerate}[label = (\roman*)]
    \item \textit{Local updates}: Devices update their local models based on their local datasets, often through gradient descent iterations.
    
    \item \textit{Global aggregations}: The local models of the devices are pulled by a server periodically to obtain a new global model, usually through weighted averaging, which is then synchronized across the devices to begin the next local training round.
\end{enumerate}

The common implementation described here is often referred to as the FedAvg algorithm~\cite{mcmahan2017communication}. Several research directions have focused on expanding this framework to account for realistic factors of the wireless edge. One line of work has considered various dimensions of heterogeneity, e.g., variations among device resources and statistical diversity across local datasets~\cite{nishio2019client}. Another direction has considered security vulnerabilities of FL, e.g., model poisoning and backdoor attacks~\cite{10191260,li2022multitentacle}. Three other important factors also warrant careful consideration:

\subsubsection{Multiple Tasks/Models} Many contemporary edge intelligence settings require IoT devices engaging in training \textit{multiple tasks simultaneously} (e.g., smart cars need models for lane tracking, pedestrian detection, and asphalt condition classification~\cite{bello2018new,karuppuswamy2000detection,chen2015deepdriving}). Each of these tasks may require training a separate neural network (NN) on disparate datasets. Nevertheless, current implementations of FL are mainly focused on the system design for training a single task. Multiple tasks will induce competition for the limited resources of devices. These resources must be carefully allocated given comparative task attributes such as target performance, model size, and relative importance.

\subsubsection{Asynchronous Aggregations} Conventional FL considers a synchronized aggregation process where all sampled devices upload their local models simultaneously. However, the server may receive the models at different times due to two types of resource heterogeneity in (i) computation capabilities, leading to different local model training times, and (ii) device-server channel conditions, leading to latency in uplink transmissions. Waiting for all  models to arrive at the server can introduce prohibitive service delays in the presence of stragglers. This has motivated recent investigations into  asynchronous FL\cite{xie2019asynchronous,lian2018asynchronous,zheng2017asynchronous, chen2020asynchronous}, where the server sequentially updates the global model upon reception of any new local model. However, the impact of the \textit{order} in which the devices upload their models to the server (i.e., device scheduling) has yet to be carefully investigated. 

\subsubsection{Dynamic Data Statistics} In a realistic system, the data generated at the network edge is  \textit{time-varying} (e.g., images captured by an IoT camera at different times of day, or under a changing environment). Thus, the \textit{drift} of the performance of the global model under data variations should be explicitly taken into account. In particular, efficient device scheduling should consider both model drift and resource constraints, where devices with faster data variations and more abundant resources engage in more rapid model training and a higher frequency of model exchange with the server. 

The integration of multi-task, asynchronous operation, and dynamic data considerations in heterogeneous FL creates unique challenges. With dynamic datasets and non-iid data statistics, the order in which devices participate for each task becomes important, especially in a multi-task learning context. Specifically, for a task whose data changes rapidly, more frequent transmission of the models between the device and the server is needed to obtain good model performance. Similarly, devices with datasets that have unique properties may need to upload their local models sooner than others, since increasing the period of local model training will further bias their local models. Resource heterogeneity adds yet another challenge in the asynchronous setting: the capabilities of devices should be considered jointly with the importance and drift of their local models in determining device scheduling. Further, in the multi-task setting, the available resources of each device need to be carefully divided/allocated to different tasks. Our objective in this work is to develop the first FL solution which systematically models and optimizes over these interdependencies.


\section{Related Work}\label{sec:relatedW}

\textbf{Single-Model Synchronous FL}: Many research efforts have been devoted to ``conventional'' FL, which is characterized by a single model and synchronous operation, i.e. as in the initial FedAvg algorithm \cite{mcmahan2017communication}. See e.g., \cite{imteaj2021survey} for a recent survey of techniques. One key direction of these works has been on characterizing the model training process under heterogeneous system conditions and employing the results to improve FL convergence speeds, e.g., as in \cite{nishio2019client, abad2020hierarchical}.

In this category, the most direct precursors to our work are those considering device scheduling optimization in conventional FL, e.g., \cite{shi2020joint,taik2021data,amiri2021convergence}. \cite{shi2020joint} proposed a joint device scheduling and resource allocation policy to maximize the model accuracy within a given total training time budget for latency constrained wireless FL. \cite{taik2021data} proposed data-aware wireless scheduling algorithm to minimize the completion time and the transmission energy consumption. \cite{amiri2021convergence} developed innovative scheduling and resource allocation policies that determine the subset of devices to transmit in each round and how resources should be allocated among participating devices.   

\textbf{Multi-Task/Model Synchronous FL}: A few recent works have considered multi-model/task FL \cite{smith2017federated,sattler2020clustered,zhou2022efficient,nguyen2020toward}.  \cite{smith2017federated} proposed a system-level optimization to address the high communication cost, node heterogeneity, and fault tolerance aspects of FL with multiple tasks to train. \cite{sattler2020clustered} leveraged geometric properties of loss functions to cluster clients into groups with trainable data distributions, with each cluster corresponding to a different task. \cite{zhou2022efficient} proposed a reinforcement learning-based technique to tackle the device-task assignment problem. \cite{nguyen2020toward} investigated joint network resource optimization and hyperparameter control for multi-task FL. However, these works focus on the traditional synchronous FL setting.   

\textbf{Asynchronous FL}: Asynchronous FL has been investigated in \cite{xie2019asynchronous,ma2021fedsa,lian2018asynchronous,zheng2017asynchronous, chen2020asynchronous, chen2019communication}. \cite{xie2019asynchronous} for the first time analyzed the model convergence of FL under asynchronous arrivals from devices. \cite{ma2021fedsa} analyzed the semi-asynchronous FL where the parameter server aggregates a certain number of local models by their arrival order in each round. \cite{lian2018asynchronous} proposed decentralized stochastic gradient descent (SGD) to solve the communication bottleneck that arises in asynchronous FL with a congested server. \cite{zheng2017asynchronous} leveraged efficient approximation techniques to develop a methodology that compensates for uplink transmission delays in asynchronous FL. \cite{chen2020asynchronous} proposed an asynchronous FL framework with feature representation learning at the server and dynamic local updates at the clients to handle straggler effects. This work is based on a strong assumption in the convergence analysis that the server will select a single device to transmit the global model to and then quickly receive a model update from it to conduct the next aggregation. \cite{chen2019communication} proposed a synchronous learning strategy on the clients and a temporally weighted aggregation of the local models on the server to make use of the previously trained local models.

\textbf{Online/Dynamic FL}: There exist some research on dynamic/online FL \cite{gauthier2021resource,giorgas2020online,li2019online}. \cite{gauthier2021resource} made use of partial-sharing-based communication to reduce the communication overhead in online asynchronous FL. \cite{giorgas2020online} proposed an experimental technique to boost performance.  \cite{li2019online} studied multi-task FL with the aim of learning model parameters for new incoming devices. Nevertheless,
the ``online" aspect of FL considered in \cite{gauthier2021resource,giorgas2020online,li2019online} refers to dataset/model sampling or device arrival dynamics, which is different than the notion of  data variations we are interested in. The few recent works \cite{hosseinalipour2023parallel,dynamicFedl} which have considered online FL in our context  focused on the conventional single-task and synchronous FL settings. \cite{durmus2021federated} proposed a dynamic regularizer for each device at each round, so that in the limit the global and device solutions are aligned.

None of these existing works have taken advantage of the full potential of asynchronous FL, as they have not investigated the impact of the \textit{orders} in which the devices (i) receive the global model and (ii) return their local models to the server. We refer to this as the scheduling of the devices for FL. In this work, we study device scheduling for fully-asynchronous and multi-model FL, without any restrictive conditions on device participation.

\subsection{Outline and Summary of Contributions}
In this work, we propose {\tt DMA-FL}, a novel methodology for \underline{d}ynamic/online \underline{m}ulti-model \underline{a}synchronous FL over heterogeneous networks. In doing so, we make the following contributions:
\begin{itemize}
    \item We formulate {\tt DMA-FL} to consider ML over real-world edge settings where devices (i) continuously collect and discard their local data (ii) have heterogeneous communication/ computation resources, and (iii) train multiple ML models locally (\Cref{Sec:model}).
    
    \item We analytically characterize the ML model convergence performance of {\tt DMA-FL} (\Cref{sec:analysis}). Through the introduction of rectangular (\textit{rect}) functions scheduling tensors to capture device scheduling, we relax current assumptions used to derive performance bounds for asynchronous FL. Our analysis reveals insights into the effect several system and learning parameters on model training performance, including asynchronous scheduling decisions, network-wide task staleness, concept drifts incurred during idle and active training periods, and the number of gradient iterations conducted.
    \item Leveraging these convergence results, we formulate the joint device scheduling and resource allocation optimization problem for {\tt DMA-FL}, considering the tradeoff between multi-task ML model performance and network resource consumption (\Cref{Sec:optimization}). We investigate the behavior of the resulting NP-Hard problem by scrutinizing the ML convergence bound, providing new insights into the scheduling variables and resource allocation. Our solution methodology employs a series of transformations, decompositions, and successive convex programming to guarantee convergence to a stationary point of a relaxed version of the original non-convex mixed integer program.
    
    \item Through numerical experiments (\Cref{sec:num}), we reveal the superiority of {\tt DMA-FL} over baseline methods in terms of ML model performance and network resource savings, in the presence of dynamic data variations and multiple tasks. We also show how {\tt DMA-FL} can adapt to account for varying task importance and keep track of drastic data variations via reducing the idle time and local model training time.
\end{itemize}

\begin{figure*}
    \centering
    \includegraphics[width=\linewidth]{./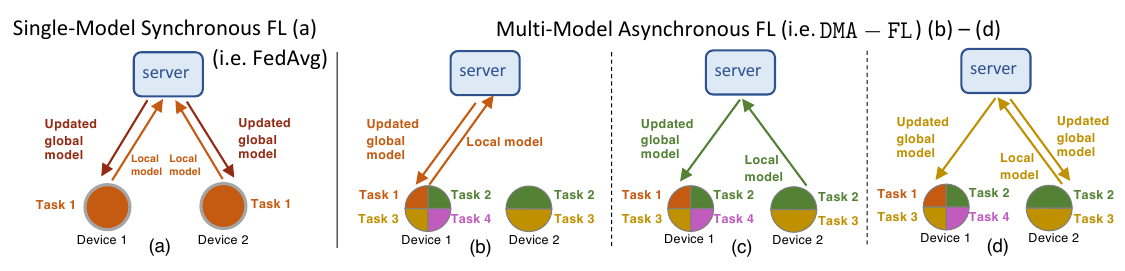}
\caption{Difference in architectures between single-model synchronous FL and our proposed {\tt DMA-FL} methodology. In the single-model synchronous FL, the server needs to wait for all the trained local model for a single task before it can perform model aggregation, after which the new global model is broadcast back to all devices. In contrast, for {\tt DMA-FL}, the server performs the global aggregation instantly when receiving one trained local model for any task. The new global model for that task can be transmitted to one or more devices which train model for that task.}
\label{fig:architecture_overview}
\end{figure*}

\section{System Model}\label{Sec:model}
\subsection{Setup and Overview}
\noindent  We consider an edge network of $I$ devices collected via the set $\mathcal{I} = \{1,\cdots, I\}$ connected upstream to a server. There are $J$ ML tasks (e.g., facial recognition, keyboard next word prediction)  gathered via the set $\mathcal{J} = \{1, \cdots, J\}$. Each task $j\in\mathcal{J}$ is associated with a unique ML model (e.g., a neural network) characterized by $M_j\in\mathbb{Z}^+$ parameters. The devices engage in uplink/downlink FL model transfers to/from the server to train the $J$ models. We consider asynchrony in this process both across the devices (i.e., devices engage in
uplink/downlink transfer of their models with the server at different times) and across the tasks (i.e., each device only updates a portion of its tasks at each uplink/downlink communication).

In {\tt DMA-FL}, training of each task $j\in\mathcal{J}$ is conducted via a sequence of global aggregations, indexed by $g \in \mathcal{G}_j = \{1,\cdots, G_j\}$ with $G_j$ denoting the total number of model aggregations for the respective task.\footnote{For notational simplicity, we have dropped the task index $j$ from the aggregation index $g$.} An aggregation for task $j$ is triggered when the server receives model parameters on that task from a device, at which point the server updates the current global model based on the single received model.

We elaborate on the difference in architectures between single-model synchronous FL (i.e. FedAvg) and {\tt DMA-FL} in Fig. \ref{fig:architecture_overview}. The server in single-model synchronous FL has to wait for all trained local models before performing global aggregation to generate an updated global model, which is broadcast back to sampled devices. In contrast, in {\tt DMA-FL}, if the server gets any trained local model for task $j$, it performs the global aggregation to yield an updated global model for task $j$, which can be transmitted to multiple devices activating their local training if they are not training task $j$ at that moment.

As part of {\tt DMA-FL}, we develop fully-asynchronous FL, a generalization of asynchronous FL~\cite{xie2019asynchronous,lian2018asynchronous,zheng2017asynchronous, chen2020asynchronous} where for the training of each task $j$, (i) there is no constraint on the number of devices receiving the global model from the server at any time and (ii) the order in which the devices return their updated models to the server can be different compared to the order in which they had received the global model (e.g., due to different communication/computation resources). This makes our convergence bounds, obtained for \textit{each task} (Sec.~\ref{subsec:interpret}), unique from the existing literature.

\subsection{ML Model Training}
\subsubsection{Task Formulation} 
In {\tt DMA-FL}, we consider the set of datapoints $\mathcal{D}_j^{(g)}$ for each task $j \in \mathcal{J}$. Each edge device $i \in \mathcal{I}$ contains a subset of these datapoints $\mathcal{D}_{i,j}^{(g)}$, where $\mathcal{D}_j^{(g)} = \cup_{i \in \mathcal{I}} \mathcal{D}_{i,j}^{(g)}$. For the global model for task $j$ at the $g$-th global aggregation, $\forall \mathbf{w}_j^{(g)}\in\mathbb{R}^{M_j}$, we define the \textit{global loss} function for task $j$ across these datapoints as
\begin{equation} \label{def:global_loss_function}
    F_j\left(\mathbf{w}_j^{(g)}, \mathcal{D}_j^{(g)} \right) \triangleq \frac{1}{ |\mathcal{I}| }\sum_{i\in\mathcal{I}} F_{i,j}(\mathbf{w}_j^{(g)}, \mathcal{D}_{i,j}^{(g)}), 
\end{equation}
Also,  $ F_{i,j}\big(\mathbf{w}, \mathcal{D}_{i,j}^{(g)}\big) \triangleq \frac{1}{D_{i,j}^{(g)}}\sum_{d \in \mathcal{D}_{i,j}^{(g)}}L_{i,j}\big(\mathbf{w}, d\big)$ is the \textit{local loss} function at device $i$ used to measure the performance of model parameter $\mathbf{w}$ for task $j$. Here,  $L_{i,j}\big(\mathbf{w}, d\big)$ measures the loss of data point $d$ under model parameter $\mathbf{w}$ and  $D_{i,j}^{(g)}=|\mathcal{D}_{i,j}^{(g)}|$.

The goal of ML model training is to minimize the instantaneous global loss function, which is used in real-time for the downstream tasks at the devices. Temporal variations of data make the local and global loss functions time-varying as well. Thus, for each task $j$, the optimal global model parameters form a sequence $\{(\mathbf{w}_j^{(g)})^\star\}_{g=1}^{G_j}$ where
\begin{equation}\label{eq:globGoal}
(\mathbf{w}_j^{(g)})^\star = \argmin_{\mathbf{w} \in \mathbb{R}^{M_j}} F_j(\mathbf{w}, \mathcal{D}_j^{(g)}),~ ~ g\in \mathcal{G}_j.
\end{equation}
While each task has its own dataset, training processes of tasks are coupled due to the limited computation/communication resources of devices, which have to be shared. We will address the resource allocation and device scheduling in Sec.~\ref{Sec:optimization}. We next introduce the local training and global aggregation steps of {\tt DMA-FL} to solve~\eqref{eq:globGoal} for a given scheduling decision.

\subsubsection{Local Updates and Global Aggregations\label{subsubsec:local_training_global_aggregation}}
Model training for each task $j \in \mathcal{J}$ starts with broadcasting a global model $\mathbf{w}_j^{(0)}$ from the server, which begins aggregation interval $g = 0$. We let  $\mathcal{T}_{i,j}^{\mathsf{L}, (g)}$ denote the \underline{l}ocal period  capturing the time period that begins when device $i$ receives the global model parameter $\mathbf{w}_j^{(g)}$ from the server and ends when the device transmits its updated local model parameter back to the server. As depicted in Fig. \ref{fig:local_period_def}, we divide the local period   $\mathcal{T}_{i,j}^{\mathsf{L}, (g)}$ into {four} parts: (i) \textit{\underline{Id}le period}  $\mathcal{T}_{i,j}^{\mathsf{ID},(g)}$: The device remains idle on this task. (ii) \textit{\underline{D}ownlink} transmission period  $\mathcal{T}_{i,j}^{\mathsf{D},(g)}$: The server sends the global model $\mathbf{w}_j^{(g)}$ to the device (iii) \textit{Local \underline{c}omputation period}  $\mathcal{T}_{i,j}^{\mathsf{C},(g)}$: The device performs local model training. (iv) \textit{\underline{U}plink transmission period} $\mathcal{T}_{i,j}^{\mathsf{U}, (g)}$: The device uploads its updated local model.
In the following, we explain the purpose and operation of each period.


\begin{figure*}
    \centering
    \includegraphics[width=0.8\linewidth]{./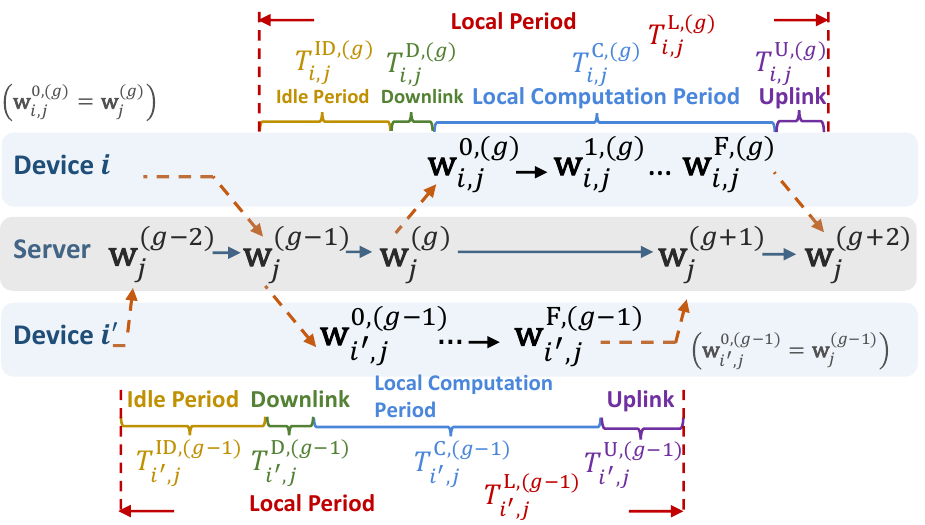}
\caption{Example timeline of local periods for task $j$ at two devices $i$ and $i'$. Since we consider asynchronous FL, the server updates the global model from $\bm{w}_j^{(g)}$ to $\bm{w}_j^{(g+1)}$ for any $g \in \mathcal{G}_j$ whenever it receives any trained local model. The definition of each local period encompasses the time span between two consecutive uplink transmissions. It comprises four distinct periods, namely the idle period, downlink transmission period, local computation period, and uplink transmission period.}
\label{fig:local_period_def}
\end{figure*}

\textbf{Idle Time and Downlink Transmission.}
To conserve energy and network resources, we consider that devices may wish to remain idle temporarily in-between local training. Formally, each device $i$, after sending its updated local model of task $j$ to the server, remains idle for a period of time  $T_{i,j}^{\mathsf{ID},(g)}=|\mathcal{T}_{i,j}^{\mathsf{ID},(g)}|$ before receiving the global model $\mathbf{w}_j^{(g)}$ from the server. Reception of $\mathbf{w}_j^{(g)}$ will then trigger local model training. The length of the idle period  will be optimized in {Sec.~\ref{sec:optimization_problem}}. 

\textbf{Local Model Training.} 
 During the computation period  $\mathcal{T}_{i,j}^{\mathsf{C}, (g)}$, device $i$ utilizes mini-batch stochastic gradient descent (SGD) for local ML model training. Due to heterogeneous computation capabilities, we consider that devices employ (i) different numbers of SGD iterations and (ii) different mini-batch sizes in their training processes. Formally, let $e_{i,j}^{(g)}$ denote the number of SGD iterations in  $\mathcal{T}_{i,j}^{\mathsf{C},(g)}$.
 At SGD iteration  $\ell \in \{1,\cdots, e_{i,j}^{(g)}\}$, device $i$ updates its model $\mathbf{w}_{i,j}^{\ell, (g)}$ for task $j$ as
\begin{equation}\label{eq:locIter}
\mathbf{w}^{\ell,(g)}_{i,j} = \mathbf{w}_{i,j}^{\ell-1,(g)} - \eta_j^{(g)} \nabla F_{i,j}^{\mathsf{R}}\big(\mathbf{w}_{i,j}^{\ell-1,(g)}, \mathcal{B}_{i,j}^{\ell,(g)}\big) 
\end{equation}
where $\eta_j^{(g)}$ is the learning rate for global aggregation $g$ and  $\mathcal{B}^{\ell,(g)}_{i,j}$ is the mini-batch dataset, containing ${B}^{\ell,(g)}_{i,j}=|\mathcal{B}^{\ell,(g)}_{i,j}|$ datapoints, and $F_{i,j}^{\mathsf{R}}$ is the \underline{r}egularized local loss function, defined as\footnote{Regularized loss functions have been suggested in existing works and achieved notable success upon having non-iid data across the devices \cite{xie2019asynchronous}.}
\begin{equation}\label{eq:regularized_local_loss_function}
 F_{i,j}^{\mathsf{R}}\big(\mathbf{w}_{i,j}^{\ell-1,(g)}, \mathcal{D}_{i,j}^{(g)}\big) = F_{i,j}\big(\mathbf{w}_{i,j}^{\ell-1,(g)}, \mathcal{D}_{i,j}^{(g)}\big) + \frac{\rho}{2} \left\lVert \mathbf{w}_{i,j}^{\ell-1,(g)} - \mathbf{w}_{i,j}^{0,(g)} \right\rVert^2
\end{equation}

\noindent where $\mathbf{w}_{i,j}^{0,(g)}$ is the initial model received from the server and $\rho\in\mathbb{R}^+$ is the regularization weight. 
We refer to the \underline{f}inal model obtained after the SGD iterations as
\begin{equation}
\mathbf{w}_{i,j}^{\mathsf{F},(g)} \triangleq \mathbf{w}_{i,j}^{l, (g)}\big|_{l = e_{i,j}^{(g)}}
\end{equation}

\textbf{Uplink Transfer and Global Aggregation.}  In the \textit{uplink period} $\mathcal{T}_{i,j}^{\mathsf{U}, (g)}$, device $i$ transmits its final local model  $\mathbf{w}_{i,j}^{\mathsf{F},(g)}$ to the server.  Due to the asynchronous nature of {\tt DMA-FL}, as depicted in Fig. \ref{fig:local_period_def}, the server may have received updates to the model for task $j$ during $\mathcal{T}_{i,j}^{\mathsf{L},(g)}$ from other devices $i' \neq i$. Thus, let $g' \in \mathcal{G}_j$ denote the updated global aggregation index for task $j$ at the server, with  $\mathbf{w}_j^{(g')}$ denoting the current global model parameter. The server aggregates model parameter of task $j$ as \cite{xie2019asynchronous} 
\begin{equation} \label{eq:global_aggregation_rule}
\mathbf{w}_{j}^{(g'+1)} = (1-\alpha_j)\mathbf{w}_{j}^{(g')} + \alpha_j \mathbf{w}_{i,j}^{\mathsf{F},(g)} ,
\end{equation}
where $\alpha_j\in (0,1)$ is the aggregation weight for task $j$. Weighted sum used in~\eqref{eq:global_aggregation_rule} is commonly seen in the asynchronous FL literature \cite{xie2019asynchronous,chen2019communication, chen2020asynchronous}. It weights the existing (global) information with the new (local) information for the task.

Fig. \ref{fig:local_period_def} and Fig. \ref{fig:system_architecture} illustrate these processes across multiple tasks in {\tt DMA-FL}. {\tt DMA-FL} implements a general multi-model training structure where at each time instance, each device is either idle or engaged in ML model training for one or more of its local tasks.

\begin{figure*}
    \centering
    \includegraphics[width=0.8\linewidth]{./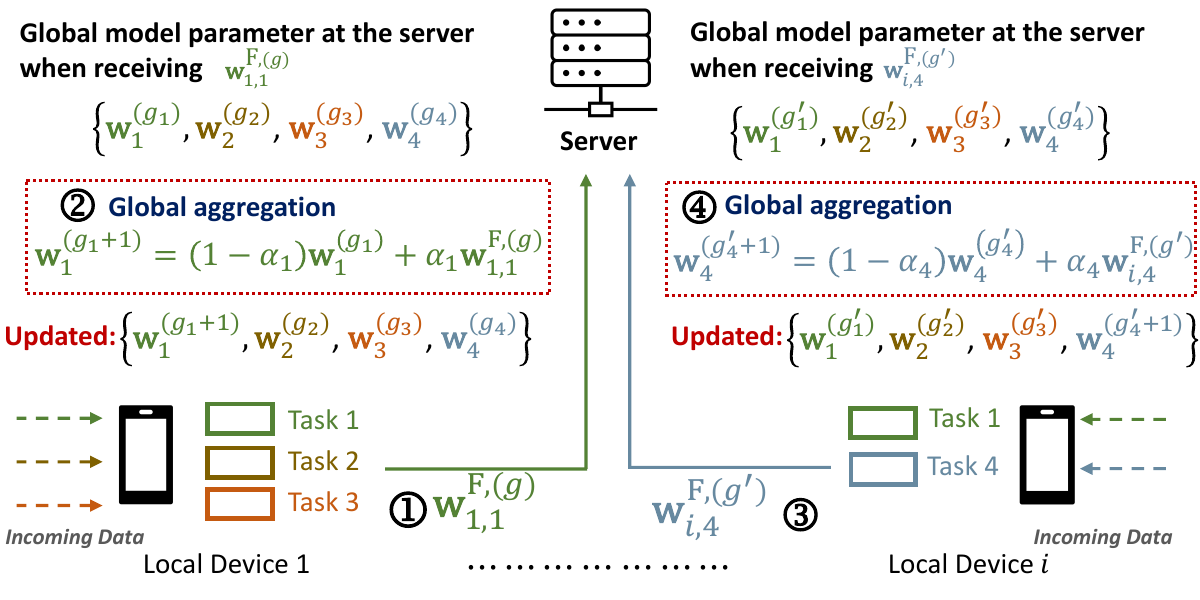}
\caption{Illustration of asynchronous local updates and the corresponding global aggregations at the server for multiple tasks in {\tt DMA-FL} when dynamic data variations are present. The server updates the global model for task $j$ with task dependent aggregation weight $\alpha_j$  only if it receives the trained local model for task $j$.  Models for other tasks at the server are not changed until its corresponding trained local model is transmitted to the server.}
\label{fig:system_architecture}
\end{figure*}

\section{Convergence Analysis}\label{sec:analysis}

\noindent We next conduct a convergence analysis for {\tt DMA-FL}. Given a known device scheduling and resource allocation, training of each task can be analyzed individually. As a result,
we carry out the analysis in Sec.~\ref{subsec:interpret} focusing on a single task $j\in\mathcal{J}$. We later optimize the device scheduling and resource allocation over multiple tasks in Sec. \ref{sec:optimization_problem}.

Our convergence analysis generalizes existing results in asynchronous FL  via (i) removing the strict assumptions on device participation, and (ii) characterizing the model training performance under an arbitrary device participation order. Our analysis will draw a connection between device scheduling and network resource allocation, which are incorporated in the convergence bounds.

\subsection{Assumptions and Definitions} \label{subsec:assumptions_definitions}
Our analysis employs the following assumptions, which are mainly extensions of existing ones in literature~\cite{xie2019asynchronous,lian2018asynchronous,zheng2017asynchronous, chen2020asynchronous,fallah2020personalized,9562522,9488906} to our setting.
\begin{assumption}[Boundedness]
The global loss function $F_j, \forall j$ is bounded below,
\begin{equation}
    \min_{\mathbf{w} \in \mathbb{R}^{M_j}} F_j(\mathbf{w}) > - \infty
\end{equation}
\end{assumption}
\begin{assumption}[Smooth Global and Local Loss Functions]
The local loss functions  $F_{i,j}, \; \forall i,j$ are $\beta_j$-smooth: 
\begin{equation}
   \norm{ \nabla F_{i,j}(\mathbf{w})\hspace{-.4mm} -\hspace{-.4mm} \nabla F_{i,j}(\mathbf{w}')} \hspace{-.4mm}\leq \hspace{-.4mm}\beta_j\hspace{-.4mm} \norm{\mathbf{w} - \mathbf{w}'}\hspace{-.4mm}, ~ \forall \mathbf{w}, \mathbf{w}' \in \mathbb{R}^{M_j} \hspace{-1.9mm} 
\end{equation}
\end{assumption}
\begin{assumption}[Data Variability]
     The local data variability at device $i$ for task $j$ is bounded by a finite constant  $\Theta_{i,j} \geq 0$, satisfying the following  $\forall d, d' \in\mathcal{D}_{i,j}^{(g)},$ $\forall \mathbf{w}\in\mathbb{R}^{M_j}$: 
     \begin{equation}
         \norm{\nabla L_{i,j}^{\mathsf{R}}(\mathbf{w}, d) - \nabla L_{i,j}^{\mathsf{R}}(\mathbf{w}, d')} \leq \Theta_{i,j} \norm{d - d'}.
     \end{equation}
     We further define  $\Theta_j = \max_{i\in\mathcal{I}}{\Theta_{i,j}}$. 
\end{assumption}
\begin{definition}[Weak Convexity]
A differentiable function $f(\bm{x})$ is $\rho$-weakly convex if, for $\rho \geq 0$, function $h(\bm{x}) = f(\bm{x})+\frac{\rho}{2}\twonormsquare{\bm{x}}$ is convex. $f(\bm{x})$ is convex if $\rho = 0$ and non-convex if $\rho > 0$.
\end{definition}

As stated in Definition 1, weak convexity only requires that the loss function becomes convex after the addition of a regularization term. As a result, our theoretical analysis covers a larger class of ML algorithms than the convex class; for example, two-layer neural networks \cite{richards2021learning}  and generative adversarial networks \cite{liu2021first} with smooth activation functions are weakly convex. Several works in the ML domain have also considered weak convexity as one of their main assumptions when conducting analysis~\cite{goujon2023learning, zhu2023distributionally}.

\begin{assumption}[Weak Convex Global Loss Function]
The global function  $F_j$, $\; \forall j \in \mathcal{J}$ is  $\rho$-weakly convex.
\end{assumption}
\begin{assumption}[Model Dissimilarity]
For any local model  $\mathbf{w}_{i,j}^{\ell, (g)}$, realized over global dataset $\mathcal{D}_j^{(g)}$ and  local dataset  $\mathcal{D}_{i,j}^{(g)}$  at device $i$ for task $j$, the difference between the gradient of the global loss  $F_j$ and the local loss $F_{i,j}$ is bounded by a constant  $\delta_{i,j}^{(g)}$ satisfying
\begin{equation}
\twonormsquare{\nabla F_j(\mathbf{w}_{i,j}^{\ell, (g)}, \mathcal{D}_j^{(g)}) - \nabla F_{i,j}(\mathbf{w}_{i,j}^{\ell, (g)}, \mathcal{D}_{i,j}^{(g)})} \leq \delta_{i,j}^{(g)}.
\end{equation}
\end{assumption}
The values of $\delta_{i,j}^{(g)}$ are indicative of the dataset heterogeneity (non-i.i.d.-ness) across devices. This will play an important role in tuning the number of local SGD updates across the devices and the device scheduling decisions (see Sec.~\ref{sec:optimization_problem}). Roughly speaking, more SGD iterations in-between global aggregations can be tolerated in devices that have small $\delta_{i,j}^{(g)}$ without risking local model bias.

\subsection{Data Evolution and  Device Scheduling} 

We introduce metrics to capture temporal data dynamics and device scheduling in our analysis.
\subsubsection{Data Evolution}
 We consider that device $i$'s dataset for task $j$ will evolve over training period as it collects more data. We let $\mathcal{D}_{i,j}(t)$ represent the local dataset at wall clock time $t \in \mathcal{T}_{i,j}^{L,(g)}$.
 We are interested in how the data evolution impacts current model performance, which we refer to as \textit{concept drift}. Intuitively, a device may collect data at a higher rate when it is not currently allocating resources for model training. We differentiate between these as \textit{idle} and \textit{active} concept drifts, where we expect the former to be larger in general.
 
\begin{definition}(Idle and Active Concept Drift) \label{def:concept_drift_active_concept_drift}
 We define the \textit{\underline{id}le concept drift}  $\Delta_{i,j}^{\mathsf{ID}}(t)$ in the unit of second for  $t \in \mathcal{T}_{i,j}^{\mathsf{ID},(g)} \cup  \mathcal{T}_{i,j}^{\mathsf{D},(g)} \cup \mathcal{T}_{i,j}^{\mathsf{U},(g)}$, i.e., during the idle and sync of training interval $g$, as the maximum potential variation of local loss performance. Formally, $\forall \mathbf{w}$, we have
    \vspace{-2mm}
    \begin{equation} \label{def:concept_drift}
      \hspace{-2mm}  \frac{1}{|\mathcal{I}|}\big( F_{i,j}\left(\mathbf{w}, \mathcal{D}_{i,j}(t)\right) -F_{i,j}\left(\mathbf{w}, \mathcal{D}_{i,j}(t-1)\right) \big) \leq \Delta_{i,j}^{\mathsf{ID}}(t).      \hspace{-2mm} 
    \end{equation}
    Similarly, we define the \textit{\underline{ac}tive concept drift}  $\Delta_{i,j}^{\mathsf{AC}}(t)$ for  $t \in \mathcal{T}_{i,j}^{\mathsf{C},(g)}$, i.e., during the computation interval,  as follows:
    \begin{equation} \label{def:active_concept_drift}
      \hspace{-2mm}  \frac{1}{|\mathcal{I}|}\big( F_{i,j}\left(\mathbf{w}, \mathcal{D}_{i,j}(t)\right) -F_{i,j}\left(\mathbf{w}, \mathcal{D}_{i,j}(t-1)\right) \big) \leq \Delta_{i,j}^{\mathsf{AC}}(t).      \hspace{-2mm} 
    \end{equation}
\end{definition}
A large value of $\Delta_{i,j}^{\mathsf{ID}}(t)$ or $\Delta_{i,j}^{\mathsf{AC}}(t)$ implies a large deviation in the local loss during the local period. Intuitively, to have a better global model, device-task pairs with higher concept drift should update the server more frequently to track their local data variations.

Let $\mathbf{w}_j^{(g'-1)}$ denote the global model parameter for task $j$ at the server
when device $i$ transmits the model parameter  $\mathbf{w}_{i,j}^{\mathsf{F},(g)}$ after training in local period  $\mathcal{T}_{i,j}^{\mathsf{L}, (g)}$ to obtain the global model $\mathbf{w}_j^{(g')}$. The combined impact of the idle and active concept drift between the local loss function where the model training started,  $F_{i,j}(\mathbf{w}, \mathcal{D}_{i,j}^{(g)})$, and the one where the local training concluded, $F_{i,j}(\mathbf{w}, \mathcal{D}_{i,j}^{(g')})$, is given by
\begin{equation} \label{def:concept_active_concept_two_local}
\begin{aligned}
    &\frac{1}{|\mathcal{I}|}\left( F_{i,j}(\mathbf{w}, \mathcal{D}_{i,j}^{(g')}) -F_{i,j}(\mathbf{w}, \mathcal{D}_{i,j}^{(g)}) \right) \leq \sum_{t \in  \mathcal{T}_{i,j}^{\mathsf{ID},(g)} \cup  \mathcal{T}_{i,j}^{\mathsf{D},(g)} \cup \mathcal{T}_{i,j}^{\mathsf{U},(g)} }\Delta_{i,j}^{\mathsf{ID}}(t)+ \sum_{t \in \mathcal{T}_{i,j}^{\mathsf{C},(g)}}\Delta_{i,j}^{\mathsf{AC}}(t) .
\end{aligned}
\end{equation}
    
\subsubsection{Device Scheduling}
We propose \textit{scheduling tensors} to capture asynchronous device scheduling, which are later optimized in Sec. \ref{sec:optimization_problem}.
\begin{definition}[Device Scheduling] \label{def:device_scheduling_matrices}

We define $R_{i,j}^{(g)} = 1$ if device $i$ \underline{r}eceives $\mathbf{w}_j^{(g)}$ for task $j$ (i.e., before $g$ is changed) at $g$-th global aggregation and $0$ otherwise. Similarly,  $U_{i,j}^{(g)} = 1$ if device $i$ \underline{u}ploads its local model parameter at $g$-th global aggregation and $0$ otherwise. We further define the device scheduling tensor  $\bm{X}=[X_{i,j}^{g,g'}]_{g,g'\in\mathcal{G}_j,i\in\mathcal{I},j\in\mathcal{J}}$, where
\begin{equation}\label{eq:device_scheduling_cases}
    \hspace{-1mm} X_{i,j}^{g,g'} = 
    \begin{cases}
        1 & \text{if device $i$ receives $\mathbf{w}_j^{(g)}$ and uploads $\mathbf{w}_{i,j}^{\mathsf{F},(g)}$ to complete aggregation $g'$} \\
        0 & \text{otherwise}.
    \end{cases}   
\end{equation}
\end{definition}

\begin{definition}[Staleness]
For task $j$, the staleness $K_j$ is the maximum number of global aggregations passed by any device without having reported its local model to the server, i.e., $K_j = \max_{i\in\mathcal{I}, g' \in \mathcal{G}} \mathcal{K}_{i,j}^{(g')}$, where
$\mathcal{K}_{i,j}^{(g')}= \{|g'-g|: X^{g,g'}_{i,j}=1 \}$ is the staleness for device $i$ in global aggregation $g'$.
\end{definition}
\begin{fact}
For $0 \leq g' - g \leq K_j$, each element of scheduling tensor $X_{i,j}^{g, g'}$ can be constructed using $R_{i,j}^{(g)}$ and $U_{i,j}^{(g)}$ as 
\begin{equation} \label{const:device_scheduling_matrices}
X_{i,j}^{g, g'} = U_{i,j}^{(g)}R_{i,j}^{(g')}\prod_{k = g+1}^{g'-1}(1-U_{i,j}^{(k)}).
\end{equation}
\end{fact}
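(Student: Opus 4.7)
The plan is to prove this fact by a direct case analysis that exploits the $\{0,1\}$-valued nature of both sides of the identity. By Definition~\ref{def:device_scheduling_matrices}, $X_{i,j}^{g,g'}$ is the indicator of the joint occurrence of three events associated with device $i$ on task $j$: (a) a receive/upload event tied to aggregation index $g$, (b) a receive/upload event tied to aggregation index $g'$, and (c) the absence of any intermediate upload from device $i$ at an index $k \in \{g+1, \ldots, g'-1\}$, since such an intermediate upload would force a different pairing $(g, k)$ rather than $(g, g')$. I would begin by explicitly extracting these three conditions from the definition and by noting that, under the per-task asynchronous protocol of Sec.~\ref{subsubsec:local_training_global_aggregation}, every upload advances the global aggregation index by exactly one, so intermediate uploads are unambiguously detectable via the $U_{i,j}^{(k)}$ indicators.

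Next, I would match the three conditions to the three factors on the right-hand side. The two endpoint indicators $U_{i,j}^{(g)}$ and $R_{i,j}^{(g')}$ encode events (a) and (b) respectively, while the intermediate product $\prod_{k=g+1}^{g'-1}(1-U_{i,j}^{(k)})$ equals $1$ if and only if no intermediate upload occurred, capturing condition (c). The forward direction then follows immediately: if $X_{i,j}^{g,g'} = 1$, each of the three factors equals $1$ individually, so their product equals $1$. For the converse, if the right-hand side equals $1$, the binary nature of the factors forces each to equal $1$, which by construction realizes events (a)-(c) and hence $X_{i,j}^{g,g'} = 1$. Boundary cases require only a brief check: when $g' - g \leq 1$, the intermediate product is empty and equals $1$ by convention, collapsing the identity to the two-factor endpoint version, which is consistent with the definition.

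The main subtlety is not algebraic but modeling-level: I would need to justify the implicit assumption that each device processes at most one in-flight local training per task at any time, so that the receive-upload pairing $(g, g')$ is unambiguous and no additional $(1 - R_{i,j}^{(k)})$ factors are required in the product. This is precisely what the staleness constraint $0 \leq g' - g \leq K_j$, combined with the per-task asynchronous training structure of Sec.~\ref{Sec:model}, is designed to guarantee. Granted these modeling conventions, the proof reduces to the elementary indicator-decomposition argument above and fits in a handful of lines.
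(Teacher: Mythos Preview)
The paper offers no proof for this Fact; it is simply asserted. Your indicator-decomposition argument is the natural verification and is correct. One remark: your hedge ``receive/upload event tied to index $g$'' is prudent, because by Definition~\ref{def:device_scheduling_matrices} the device \emph{receives} at $g$ and \emph{uploads} to complete $g'$, so the expected endpoint factors would be $R_{i,j}^{(g)} U_{i,j}^{(g')}$, whereas the printed formula has $U_{i,j}^{(g)} R_{i,j}^{(g')}$ --- this appears to be a typographical slip in the paper that your argument safely skirts. Your observation that the absence of $(1-R_{i,j}^{(k)})$ factors relies on an at-most-one-in-flight-training assumption is exactly right, and the sequential local-period structure (idle $\to$ downlink $\to$ compute $\to$ uplink) of Sec.~\ref{Sec:model} is precisely what enforces it.
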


Finally, our convergence analysis in Sec.~\ref{subsec:interpret} and optimization in Sec.~\ref{sec:optimization_problem} will require integrating the impact of concept drift from~\eqref{def:concept_active_concept_two_local} into device scheduling. To capture this, we introduce the following rectangular (rect) functions.

\begin{definition}[Capturing Concept Drift via Rect Functions] \label{def:rectangular_functions}
We define rect function $y_{i,j}^{\mathsf{ID}, (g)}(t) =  \mathds{1}_{\mathcal{T}_{i,j}^{\mathsf{ID},(g)} \cup  \mathcal{T}_{i,j}^{\mathsf{D},(g)} \cup \mathcal{T}_{i,j}^{\mathsf{U},(g)}}(t)$, which is $1$ when  $t\in \mathcal{T}_{i,j}^{\mathsf{ID},(g)} \cup  \mathcal{T}_{i,j}^{\mathsf{D},(g)} \cup \mathcal{T}_{i,j}^{\mathsf{U},(g)} $, i.e., during idle drift. Similarly, the rect function during active drift is $y_{i,j}^{\mathsf{AC}, (g)}(t) = \mathds{1}_{\mathcal{T}_{i,j}^{\mathsf{C}, (g)}}(t)$, which is $1$ when $t\in \mathcal{T}_{i,j}^{\mathsf{C}, (g)}$  .
\end{definition}

Fig.~\ref{fig:rectangular_functions} visualizes these definitions. Using them, we can transform the variable summation bounds in \eqref{def:concept_active_concept_two_local} to fixed bounds:
\begin{align}
    \sum_{t \in \mathcal{T}_{i,j}^{\mathsf{ID},(g)} \cup  \mathcal{T}_{i,j}^{\mathsf{D},(g)} \cup \mathcal{T}_{i,j}^{\mathsf{U},(g)} }\Delta_{i,j}^{\mathsf{ID}}(t) &=  \sum_{t = 0}^{T^{G_j}} y_{i,j}^{\mathsf{ID}, (g)}(t) \Delta_{i,j}^{\mathsf{ID}}(t) \\
    \sum_{t \in \mathcal{T}_{i,j}^{\mathsf{C},(g)}}\Delta_{i,j}^{\mathsf{AC}}(t) &= \sum_{t =0}^{T^{G_j}} y_{i,j}^{\mathsf{AC}, (g)}(t) \Delta_{i,j}^{\mathsf{AC}}(t)
\end{align}
where $T^{G_j}$ is the total time for all $G_j$ global aggregations  of task $j$.

\begin{figure}[!t]
    \centering
    \includegraphics[width=0.8\linewidth]{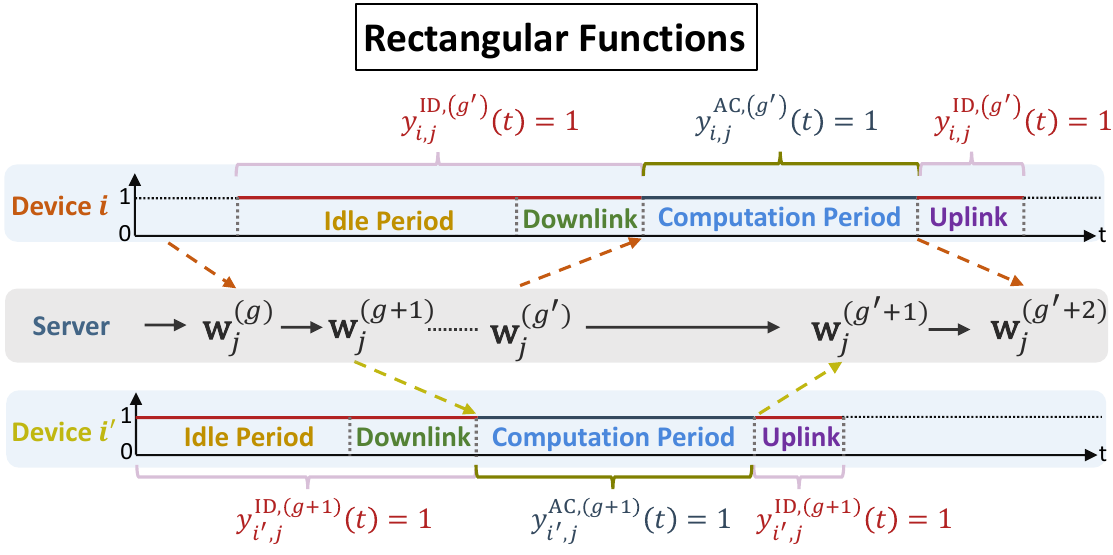}
    \caption{Illustration of how rectangular (rect) functions are used to capture the idle (ID) and active (AC) concept drift for different devices on a task. The aggregation index on the rect function corresponds to the index on the model received from the server.}
    \label{fig:rectangular_functions}
\end{figure}

\subsection{Model Performance Characterization}\label{subsec:interpret}
We are now ready to conduct our convergence analysis. We first find an expression on the difference between two global model parameters.

\begin{lemma}[Recursive Relationship between two Global Models] \label{lemma:recursive_diff_consecutive_global}
Let $\Psi_j^{g,g'}=\mathbf{w}_j^{(g)}-\mathbf{w}_j^{(g')}$ denote the difference between two global models obtained at  aggregations $g$ and $g'$, $g \leq g'$, for task $j$. Under an arbitrary device scheduling $\bm{X}$, we have
\begin{equation} \label{eq:recursive_consecutive_glboal} \hspace*{-7pt}
\Psi_j^{g,g'} = \alpha_j\sum_{k'= g}^{g'-1}\sum_{k = 0}^{k'}\left(\left(\sum_{i \in \mathcal{I}}X_{i,j}^{k, k'}\right)\Psi_j^{k,k'} - \sum_{i \in \mathcal{I}}X_{i,j}^{k, k'}\mathbf{a}_{i,j}^{(k)}  \right)\hspace*{-.7mm}, \hspace*{-6pt}
\end{equation}
where $\mathbf{a}_{i,j}^{(g)} = \eta_{j}^{(g)}\sum_{\ell = 0}^{e_{i,j}^{(g)}-1} \nabla F_{i,j}^{\mathsf{R}}\big(\mathbf{w}_{i,j}^{\ell, (g)}\big)$. If $g =  g'$, $\Psi_j^{g,g'} = \bm{0}$.
\end{lemma}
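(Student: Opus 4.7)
The plan is to establish the recursion by a single-step expansion of the server aggregation rule followed by telescoping. First, I would observe that by the definition of $X_{i,j}^{k,k'}$ in \eqref{eq:device_scheduling_cases}, the transition from $\mathbf{w}_j^{(k')}$ to $\mathbf{w}_j^{(k'+1)}$ is triggered by exactly one $(i,k)$ pair with $X_{i,j}^{k,k'}=1$, so that the update law \eqref{eq:global_aggregation_rule} can be rewritten in an indicator form that does not single out a particular device:
\begin{equation*}
\mathbf{w}_j^{(k'+1)} - \mathbf{w}_j^{(k')} = \alpha_j \sum_{k=0}^{k'} \sum_{i \in \mathcal{I}} X_{i,j}^{k,k'}\bigl(\mathbf{w}_{i,j}^{\mathsf{F},(k)} - \mathbf{w}_j^{(k')}\bigr),
\end{equation*}
since all but one term of the double sum vanish.

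Second, I would unroll the local SGD trajectory \eqref{eq:locIter} from $\mathbf{w}_{i,j}^{0,(k)}=\mathbf{w}_j^{(k)}$ to obtain the closed form $\mathbf{w}_{i,j}^{\mathsf{F},(k)} = \mathbf{w}_j^{(k)} - \mathbf{a}_{i,j}^{(k)}$, and then identify $\mathbf{w}_j^{(k)}-\mathbf{w}_j^{(k')} = \Psi_j^{k,k'}$ inside the parenthesis. This rewrites the one-step difference purely in the symbols appearing on the right-hand side of the claimed recursion.

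Third, I would telescope the one-step identity over $k' = g, g+1, \dots, g'-1$. The left-hand side collapses to $\mathbf{w}_j^{(g')} - \mathbf{w}_j^{(g)}$, and a sign flip using the definition $\Psi_j^{g,g'} = \mathbf{w}_j^{(g)} - \mathbf{w}_j^{(g')}$ produces the stated form. The degenerate case $g = g'$ corresponds to an empty outer sum, which is consistent with $\Psi_j^{g,g'} = \mathbf{0}$.

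The main obstacle, which is conceptual rather than computational, is the bookkeeping of the scheduling-tensor indices: specifically, justifying that $\sum_{k=0}^{k'}\sum_{i \in \mathcal{I}} X_{i,j}^{k,k'}$ collapses to a single nonzero entry for every $k'$ (so that the indicator form faithfully reproduces \eqref{eq:global_aggregation_rule}), and that the range $k \in \{0,\dots,k'\}$ is consistent with the staleness bound $k'-k \leq K_j$ inherited from \eqref{const:device_scheduling_matrices}. Once these indexing facts are in place, the proof reduces to straightforward algebraic telescoping on top of \eqref{eq:global_aggregation_rule} and \eqref{eq:locIter}, with no probabilistic or smoothness assumptions invoked.
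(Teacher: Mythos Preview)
Your proposal is correct and follows essentially the same approach as the paper, whose proof consists of the single sentence ``The result is obtained via expanding the global aggregation rule \eqref{eq:global_aggregation_rule} considering the device scheduling in \eqref{eq:device_scheduling_cases}.'' Your three steps---rewriting \eqref{eq:global_aggregation_rule} in indicator form via the scheduling tensor, unrolling the local SGD recursion \eqref{eq:locIter} to identify $\mathbf{w}_{i,j}^{\mathsf{F},(k)} = \mathbf{w}_j^{(k)} - \mathbf{a}_{i,j}^{(k)}$, and telescoping over $k'=g,\dots,g'-1$---are precisely the ``expansion'' the paper alludes to, just made explicit.
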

\begin{proof}
The result is obtained via expanding 
the global aggregation rule \eqref{eq:global_aggregation_rule} considering the device scheduling in \eqref{eq:device_scheduling_cases}. \qedhere
\end{proof}

Based on Lemma 1, we next present our main result, which characterizes the convergence of ML model training.
\begin{theorem}[Model Convergence]
Suppose that during model training $\twonormsquare{\nabla F_{i,j}(\mathbf{w},d)} \leq V_1$, $\lVert \nabla F_{i,j}^{\mathsf{R}}(\mathbf{w},d) \rVert^2 \leq V_2$, $ \forall d, \mathbf{w}, i, j$ for positive constants $V_1,V_2$. Let $\tilde{S}_{i,j}^{(g)}$ denote the variance of local dataset $\mathcal{D}_{i,j}^{(g)}$ and $e_j^{\mathsf{min}} \leq e_{i,j}^{(g)} \leq  e_j^{\mathsf{max}}, \; \forall i, j, g $ for two positive constants $e_j^{\mathsf{min}}$ and $ e_j^{\mathsf{max}}$. 
Also, let $\eta_j^{\mathsf{min}} = \min\{\eta_j^{(g)}\}_{g = 0}^{G_j-1}, \eta_j^{\mathsf{max}} = \max\{\eta_j^{(g)}\}_{g = 0}^{G_j-1}$. The cumulative average gradient of the global loss for task $j$, denoted $\mathsf{Conv}_j$, is bounded as (\ref{weighted_sum_weight_local_number_updates_concept_drift_integration_y}).
\end{theorem}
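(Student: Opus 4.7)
The plan is to start from $\beta_j$-smoothness of $F_j$ applied across consecutive aggregations. Since at aggregation index $g'$ exactly one scheduling pair $(i,g)$ has $X_{i,j}^{g,g'}=1$, the update rule \eqref{eq:global_aggregation_rule} gives $\mathbf{w}_j^{(g'+1)} - \mathbf{w}_j^{(g')} = -\alpha_j \mathbf{a}_{i,j}^{(g)}$, so smoothness yields
\begin{equation*}
F_j(\mathbf{w}_j^{(g'+1)}) \leq F_j(\mathbf{w}_j^{(g')}) - \alpha_j \langle \nabla F_j(\mathbf{w}_j^{(g')}), \mathbf{a}_{i,j}^{(g)}\rangle + \tfrac{\alpha_j^2 \beta_j}{2}\|\mathbf{a}_{i,j}^{(g)}\|^2.
\end{equation*}
Rearranging so that the inner product is on the left, telescoping over $g'\in\{0,\dots,G_j-1\}$, and dividing by $G_j\eta_j^{\mathsf{min}}$ produces the descent term (a). The residual sum of squared step norms $\|\mathbf{a}_{i,j}^{(g)}\|^2$ will be absorbed into the mini-batch noise, dissimilarity, and regularization terms below.

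The core step is converting $\langle \nabla F_j(\mathbf{w}_j^{(g')}), \mathbf{a}_{i,j}^{(g)}\rangle$ into the target quantity $\|\nabla F_j(\mathbf{w}_j^{(g)})\|^2$. I would first swap $\nabla F_j(\mathbf{w}_j^{(g')})$ for $\nabla F_j(\mathbf{w}_j^{(g)})$ using smoothness, paying a cost proportional to $\|\Psi_j^{g,g'}\|$; this cost will later be handled via Lemma~\ref{lemma:recursive_diff_consecutive_global}. Next I would write $\mathbf{a}_{i,j}^{(g)} = \eta_j^{(g)}\sum_{\ell}\nabla F_{i,j}^{\mathsf{R}}(\mathbf{w}_{i,j}^{\ell,(g)})$ and decompose the inner product into three deviations from $e_{i,j}^{(g)}\eta_j^{(g)}\nabla F_j(\mathbf{w}_j^{(g)})$: (i) the local-vs-global gradient gap, bounded by Assumption~6 to yield $\delta_{i,j}^{(g)}$ and hence term (b); (ii) the regularization offset $F_{i,j}^{\mathsf{R}} - F_{i,j}$, contributing the $\rho$-dependent expression $\tfrac{\rho}{2}\eta_j^{(g)}(e_{i,j}^{(g)}-1)+e_{i,j}^{(g)}$ in term (d) after bounding iterate drift $\|\mathbf{w}_{i,j}^{\ell,(g)}-\mathbf{w}_{i,j}^{0,(g)}\|$ recursively with $\sqrt{V_2}$; and (iii) the stochastic mini-batch perturbation, whose expectation under sampling-without-replacement produces the factor $(1-B_{i,j}^{(g)}/D_{i,j})(D_{i,j}-1)\Theta_j^2/(B_{i,j}^{(g)}D_{i,j})$ through Assumption~3, multiplied by the dataset variance $\tilde{S}_{i,j}^{(g)}$, giving term (c). Young's inequality throughout splits cross terms into a part absorbed by the LHS $\|\nabla F_j(\mathbf{w}_j^{(g)})\|^2$ and a residual feeding the other four.

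The hardest piece is term (e), which encodes the cost of asynchrony. After the smoothness substitution in the previous paragraph, one is left with a sum of $\|\Psi_j^{g,g'}\|$ and $\|\Psi_j^{g,g'}\|^2$ contributions. By Lemma~\ref{lemma:recursive_diff_consecutive_global}, each $\Psi_j^{k,k'}$ expands into at most $K_j$ nonzero scheduling pairs weighted by $\alpha_j$ and an $\mathbf{a}^{(k)}$ term bounded by $e_j^{\mathsf{max}}\eta_j^{\mathsf{max}}\sqrt{V_2}$. I would proceed by induction on the staleness gap $g'-g\leq K_j$: each level of recursion introduces a multiplicative factor of $K_j\alpha_j$, so summing the geometric series produces $((K_j\alpha_j)^{g'}-1)/(K_j\alpha_j-1)$. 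The linear $\|\Psi\|$ contribution yields the square-root term in (e) through Cauchy--Schwarz with $V_1V_2$, while the quadratic $\|\Psi\|^2$ contribution yields the $(\beta_j+2)$-scaled term with $V_2$ alone. Collecting (a)--(e), weighting by $X_{i,j}^{g,g'}$, and normalizing by $1/(G_j\eta_j^{\mathsf{min}})$ then recovers the bound \eqref{weighted_sum_weight_local_number_updates_concept_drift_integration_y}.
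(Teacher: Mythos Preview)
Your opening descent inequality contains an error that propagates through the whole argument. From \eqref{eq:global_aggregation_rule} one has
\[
\mathbf{w}_j^{(g'+1)} - \mathbf{w}_j^{(g')} \;=\; \alpha_j\bigl(\mathbf{w}_{i,j}^{\mathsf{F},(g)} - \mathbf{w}_j^{(g')}\bigr) \;=\; \alpha_j\bigl(\Psi_j^{g,g'} - (\text{accumulated stochastic step})\bigr),
\]
because the device initialized its local SGD at $\mathbf{w}_j^{(g)}$, not $\mathbf{w}_j^{(g')}$. Hence staleness enters \emph{twice}: in the step direction and in the gradient evaluation point. You only account for the latter (``swap $\nabla F_j(\mathbf{w}_j^{(g')})$ for $\nabla F_j(\mathbf{w}_j^{(g)})$''). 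The missing piece contributes an extra $\alpha_j\langle\nabla F_j(\mathbf{w}_j^{(g')}),\Psi_j^{g,g'}\rangle$ in the linear term and an $\alpha_j^2\|\Psi_j^{g,g'}\|^2$ in the quadratic, both of which must be fed through Lemma~\ref{lemma:recursive_diff_consecutive_global} in addition to the swap cost you already identify. This is repairable, but the first display as written is wrong.

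Even with that fix, your route is structurally different from the paper's and will not reproduce the stated bound verbatim. The paper does \emph{not} apply the smoothness descent lemma to the global aggregation step. It runs a two-stage argument: (i) telescope $\beta_j$-smoothness over the \emph{local} SGD iterates $\mathbf{w}_{i,j}^{\ell,(g)}$ to bound $\mathsf{E}[F_j(\mathbf{w}_{i,j}^{\mathsf{F},(g)}) - F_j(\mathbf{w}_j^{(g)})]$, which is where terms (b), (c) and the $\rho^2$ half of (d) originate; then (ii) handle the aggregation $\mathbf{w}_j^{(g')}\mapsto\mathbf{w}_j^{(g'+1)}$ using \emph{weak convexity} (Assumption~4), applying Jensen to the regularized global loss $F_j^{\mathsf{R}}$ on the convex combination $(1-\alpha_j)\mathbf{w}_j^{(g')}+\alpha_j\mathbf{w}_{i,j}^{\mathsf{F},(g)}$. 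The $\tfrac{\rho}{2}\|\cdot\|^2$ penalty from that Jensen step is precisely what produces the $e_{i,j}^{(g)}\cdot e_{i,j}^{(g)}$ portion of term (d) and the ``$+2$'' in the $(\beta+2)$ factor of term (e). Your proposal never invokes Assumption~4; a pure-smoothness approach can yield a bound of the same flavor, but the specific coefficient $\tfrac{\rho}{2}\eta_j^{(g)}(e_{i,j}^{(g)}-1)+e_{i,j}^{(g)}$ in (d) will not drop out of your decomposition.
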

\begin{proof}
Please refer to Appendix A of our technical report \cite{chang2023asynchronous}.
\end{proof}

\begin{corollary}[Special Cases of (\ref{weighted_sum_weight_local_number_updates_concept_drift_integration_y})] \label{corollary:simplified_left_hand_side}
If whenever the server updates the global model, it activates \textit{at least} one device for local model training with the updated model, quantity $\mathsf{Conv}_j$ bounded in \eqref{weighted_sum_weight_local_number_updates_concept_drift_integration_y}, reduces to a conventional minimum norm of the global loss function gradient \cite{xie2019asynchronous}, i.e., $\min_{g= 0}^{G_j-1}\mathsf{E}[\lVert\nabla F_j(\mathbf{w}_j^{(g)})\rVert^2]$. If the server activates exactly one device after every update, $\mathsf{Conv}_j$ further simplifies to the conventional average  gradient norm \cite{hosseinalipour2023parallel}, i.e.,  $\sum_{g=0}^{G_j-1}\mathsf{E}[\lVert\nabla F_j(\mathbf{w}_j^{(g)})\rVert^2]/G_j$.
\end{corollary}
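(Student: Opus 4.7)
The plan is to rewrite $\mathsf{Conv}_j$ as a weighted sum over the global model index $g$, identify the weights with scheduling quantities, and specialize under each stated condition. First, I would swap the order of summation in the definition of $\mathsf{Conv}_j$ to obtain
\[
\mathsf{Conv}_j = \frac{1}{G_j}\sum_{g=0}^{G_j-1} N_g \,\mathsf{E}\!\left[\twonormsquare{\nabla F_j(\mathbf{w}_j^{(g)})}\right],
\]
where $N_g \triangleq \sum_{g'=g}^{G_j-1}\sum_{i\in\mathcal{I}} X_{i,j}^{g,g'}$ counts the number of completed aggregations that were triggered by devices having received the global model $\mathbf{w}_j^{(g)}$. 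Since each aggregation is triggered by exactly one device upload, Fact 1 and the definition of $\bm{X}$ give $\sum_{g=0}^{G_j-1} N_g = G_j$.

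For the first claim, I would argue that the operational condition ``activates at least one device whenever the server updates the global model'' enforces $\sum_{i} R_{i,j}^{(g)} \geq 1$ for every $g$; translated through the product form in Fact 1, and using that every activation is eventually paired with an upload within the training window, this implies $N_g \geq 1$ for every $g\in\{0,\ldots,G_j-1\}$. Substituting into the weighted-sum expression and using non-negativity of the squared gradient norms yields $\mathsf{Conv}_j \geq \frac{1}{G_j}\sum_{g=0}^{G_j-1}\mathsf{E}[\twonormsquare{\nabla F_j(\mathbf{w}_j^{(g)})}] \geq \min_{g=0}^{G_j-1}\mathsf{E}[\twonormsquare{\nabla F_j(\mathbf{w}_j^{(g)})}]$. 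Combined with the upper bound of Theorem 1 on $\mathsf{Conv}_j$, this gives an upper bound on $\min_{g} \mathsf{E}[\lVert\nabla F_j(\mathbf{w}_j^{(g)})\rVert^2]$, recovering the ``conventional minimum gradient norm'' convergence metric used in prior asynchronous FL analyses.

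For the second claim, the stronger condition that \emph{exactly} one device is activated after every update forces $\sum_i R_{i,j}^{(g)} = 1$, and therefore $N_g = 1$ for every $g$. Substitution collapses the expression to $\mathsf{Conv}_j = \frac{1}{G_j}\sum_{g=0}^{G_j-1}\mathsf{E}[\twonormsquare{\nabla F_j(\mathbf{w}_j^{(g)})}]$, so Theorem 1 now upper bounds the conventional average gradient norm directly.

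The main obstacle is rigorously bridging the operational descriptions (``at least/exactly one device activated per update'') and the tensor-level constraints on $N_g$ through the $R_{i,j}^{(g)}, U_{i,j}^{(g)}$ product formula in Fact 1. The bookkeeping must handle boundary effects at $g=0$ (the initial broadcast) and near $g=G_j-1$ (the last aggregation within the window), so that each activation is correctly matched to an upload counted in some $N_g$ and no ``orphan'' activations shift the $N_g \geq 1$ argument. Once this correspondence is established, both claims reduce to elementary inequalities on the weighted sum.
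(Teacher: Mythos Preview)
Your proposal is correct and follows essentially the same route as the paper's proof: both rewrite $\mathsf{Conv}_j$ as a weighted sum $\tfrac{1}{G_j}\sum_g N_g\,\mathsf{E}[\lVert\nabla F_j(\mathbf{w}_j^{(g)})\rVert^2]$ with $N_g$ counting the number of devices activated at global model $g$, observe $\sum_g N_g=G_j$, and then use $N_g\geq 1$ (resp.\ $N_g=1$) to lower bound by the minimum (resp.\ equate to the average). Your extra bookkeeping about boundary effects and matching activations to uploads is more careful than the paper, which simply asserts $\sum_g N_g=G_j$ ``based on our formulation''; otherwise the arguments are identical.
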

\begin{proof}
    Please refer to Appendix B of our technical report\cite{chang2023asynchronous}.
\end{proof}
There are several critical distinctions between our convergence analysis and those in existing asynchronous FL. Firstly, our research introduces Lemma 1, an integral component of our work that delineates the recursive relationship between two global models. For example, in \cite{xie2019asynchronous}, this effect would have been encapsulated by the term $\lVert x_\tau - x_{t-1} \rVert$, but was not considered in detail: the authors bounded this term by assuming that the trained local model aggregated with global model $x_\tau$ is based on $x_\tau$. Secondly, our analysis integrates network/device characteristics such as CPU frequencies, mini-batch size, the number of local SGD iteration and scheduling decisions (via the scheduling tensor $\bm{X}$) into the convergence bound. Thirdly and most importantly, we capture data dynamics modeled by concept drift through term (f) in \eqref{weighted_sum_weight_local_number_updates_concept_drift_integration_y}. As a result, the techniques used in the proof and the final bounds are majorly different from \cite{xie2019asynchronous}.

\begin{table*}[!t]
\centering
\begin{minipage}{\textwidth}
{
\begin{equation} \label{weighted_sum_weight_local_number_updates_concept_drift_integration_y}
    \begin{aligned}
    & \mathsf{Conv}_j \triangleq \frac{1}{G_j}\sum_{g' = 0}^{G_j-1}\sum_{g=0}^{g'}\sum_{i \in \mathcal{I}} X_{i,j}^{g, g'}\mathsf{E}\left[\twonormsquare{\nabla F_j(\mathbf{w}_j^{(g)})}\right] \leq \underbrace{\frac{2}{G_j\alpha_j\eta_j^{\mathsf{min}}}\mathsf{E} \left[ F_j(\mathbf{w}_j^{(0)}) - F_j(\mathbf{w}_j^{(G_j)}) \right]}_{(a)} \\
    &+\underbrace{\frac{1}{G_j\eta_j^{\mathsf{min}}}\sum_{g' = 0}^{G_j-1}\sum_{g=0}^{g'}\eta_{j}^{(g)}\sum_{i \in \mathcal{I}} X_{i,j}^{g, g'}e_{i,j}^{(g)}\delta_{i,j}^{(g)}}_{(b)} + \underbrace{\frac{4\beta_j}{G_j\eta_j^{\mathsf{min}}} \sum_{g' = 0}^{G_j-1}\sum_{g=0}^{g'}\left( \eta_{j}^{(g)} \right)^2\sum_{i \in \mathcal{I}} X_{i,j}^{g, g'}e_{i,j}^{(g)}\left(1-\frac{B_{i,j}^{(g)}}{D_{i,j}^{(g)}}\right)\frac{\left( D_{i,j}^{(g)}-1 \right) \Theta_j^2}{B_{i,j}^{(g)}D_{i,j}^{(g)}}\tilde{S}_{i,j}^{(g)}}_{(c)} \\
    &+ \underbrace{ \frac{2\rho}{G_j\eta_j^{\mathsf{min}}}\sum_{g' = 0}^{G_j-1}\sum_{g=0}^{g'}\left(\eta_j^{(g)}\right)^2\sum_{i \in \mathcal{I}} X_{i,j}^{g, g'} e_{i,j}^{(g)}V_2\left(\frac{\rho}{2}\eta_j^{(g)}(e_{i,j}^{(g)}-1) + e_{i,j}^{(g)}\right)}_{(d)} \\
    & +\underbrace{ \frac{1}{G_j\eta_j^{\mathsf{min}}}\sum_{g' = 0}^{G_j-1}\left(\sqrt{\frac{4 K_j\alpha_j  \left(e_j^{\mathsf{max}}\right)^2\left(\eta_j^{\mathsf{max}}\right)^2 V_1 V_2\left(\left(K_j\alpha_j\right)^{g'}- 1\right)}{K_j\alpha_j -1}} + \frac{ (\beta+2)K_j\alpha_j  \left(e_j^{\mathsf{max}}\right)^2\left(\eta_j^{\mathsf{max}}\right)^2 V_2\left(\left(K_j\alpha_j\right)^{g'}-1\right)}{K_j\alpha_j-1}\right)}_{(e)} \\[-.4em]
    & +\underbrace{ \frac{2}{G_j\eta_j^{\mathsf{min}}}\sum_{g' = 0}^{G_j-1}\sum_{g=0}^{g'}\sum_{i \in \mathcal{I}} X_{i,j}^{g, g'} \left(\sum_{t=0}^{T^{G_j}}y_{i,j}^{\mathsf{AC}, (g)}(t)\Delta_{i,j}^{\mathsf{AC}}(t) + \sum_{t=0}^{T^{G_j}}y_{i,j}^{\mathsf{ID}, (g)}(t)\Delta_{i,j}^{\mathsf{ID}}(t)\right)}_{(f)}  
    \end{aligned}
\end{equation}
}
\hrule
\end{minipage}
\end{table*}

\textbf{Interpretation of Results}: The bound in \eqref{weighted_sum_weight_local_number_updates_concept_drift_integration_y} captures the joint impact of ML hyperparameters, resource allocation, and device scheduling on the ML model performance. 

\begin{itemize}
\item Term $(a)$ resembles results found in conventional FL bounds when asynchrony is not considered.
\item Term $(b)$ captures the joint impact of the scheduling decisions $X_{i,j}^{g,g'}$, model dissimilarity  $\delta_{i,j}^{(g)}$, and number of SGD iterations   $e_{i,j}^{(g)}$. As the devices with larger model dissimilarity conduct more SGD updates, the bound increases significantly. These devices should be scheduled to upload their local model sooner than others since increasing the period of local model training will further bias their local models. More generally, the presence of variable $X_{i,j}^{g,g'}$ throughout each term in the bound shows how the scheduling across devices will affect model training performance in dynamic asynchronous FL.
\item Term $(c)$ captures the impact of SGD noise through the local data variability $\Theta_j$, sampling variance  $\tilde{S}_{i,j}^{(g)}$, and mini-batch size  $B_{i,j}^{\ell,(g)}$. Given a fixed sampling variance and local data variability, a larger mini-batch size leads to a smaller bound.
\item Building upon $(b)$, term $(d)$ shows another way that larger $e_{i,j}^{(g)}$ impacts the bound, in terms of accumulated gradient norms $V_2$ when devices are scheduled (observed from the product of  $X_{i,j}^{g,g'}$, $e_{i,j}^{(g)}$, and $V_2$). The bound is reduced by scheduling devices with a smaller gradient accumulation. 
\item Term $(e)$ introduces the impact of network-wide staleness $K_j$. As $K_j$ increases, the bound gets worse, which is caused by including local models derived based on outdated global models in the aggregation. Both the aggregation weighting coefficient $\alpha_j$ and the staleness $K_j$ are directly related to the asynchronous operation of the system. Their coupling here shows that a smaller weighting coefficient could alleviate the impact of staleness on the global model.
\item Term $(f)$ shows the impact of concept drift on the bound. Specifically, we see that the bound increases with larger idle and active drift values ($\Delta_{i,j}^{\mathsf{ID}}(t)y_{i,j}^{\mathsf{ID}, (g)}(t)$ and $\Delta_{i,j}^{\mathsf{AC}}(t)y_{i,j}^{\mathsf{AC}, (g)}(t)$). This term reveals that reducing idle time $\mathcal{T}_{i,j}^{\mathsf{ID}, (g)}$ to accelerate the local training period  for devices with higher drifts will lead to improvements in convergence. However, reducing the idle time leads to more active periods of device computation and upstream/downstream communication, which in turn results in more energy consumption. Therefore, devices having smaller concept drifts can stay in the idle period longer to save the energy without damaging ML performance. This tradeoff will be explicitly considered in our formulation in Sec. V, where our optimization problem balances the gain in ML model performance with the associated network cost (in terms of energy). 
\end{itemize}

\section{Optimization Methodology}\label{Sec:optimization}
To obtain a resource allocation strategy for {\tt DMA-FL}, we first model the communication and computation processes under heterogeneity (Sec.~\ref{sec:compcom}). Then, we formulate resource-aware {\tt DMA-FL} as an optimization problem (Sec.~\ref{sec:optimization_problem}).  Finally, we investigate the characteristics of the optimization problem and obtain its solution (Sec.~\ref{subsec:soluProb}). 

Overall, our methodology leverages the relationships from Sec. IV to configure device scheduling and resource allocation for asynchronous FL, in the presence of dynamic data variations and device heterogeneity, according to the objective of striking a balance between multi-task ML quality and energy consumption.

\subsection{Computation and Communication Modeling}\label{sec:compcom}

\subsubsection{Local Model Computation}
For each device $i\in\mathcal{I}$, let $a_{i,j}$ denote the number of CPU cycles needed to process one data sample of task $j\in\mathcal{J}$. Since some tasks may involve training deep neural networks with up to billions of model parameters (e.g., consider AlexNet  \cite{krizhevsky2012imagenet}), while others may involve in simpler/shallower models, in general $a_{i,j}\neq a_{i,j'}$. The local computation time of model $j$ at device $i$ upon conducting $e_{i,j}^{(g)}$ mini-batch SGD iterations with mini-batch sizes $B_{i,j}^{(g)}$  is
\begin{equation} \label{eq:local_computation_time}
    T_{i,j}^{\mathsf{C},(g)} = R_{i,j}^{(g)}a_{i,j} e_{i,j}^{(g)} B_{i,j}^{(g)} / f_{i,j}^{(g)},
\end{equation} 
where $f_{i,j}^{(g)}$ is the respective CPU frequency of the device.  The computation energy consumption of the device is modeled as
\begin{equation} \label{eq:local_computation_energy}
    E_{i,j}^{\mathsf{C},(g)} =  R_{i,j}^{(g)} \xi_i e_{i,j}^{(g)} a_{i,j} B_{i,j}^{(g)}\big(f_{i,j}^{(g)}\big)^2
\end{equation}
where $\xi_i$ is the effective chipset capacitance \cite{tran2019federated}.

\subsubsection{Model Transmission}
For each device $i \in \mathcal{I}$, let $h_i^{\mathsf{U},(g)}$ denote its channel gain to the BS at the time of global aggregation $g$. The data rate of the device to the server is given by
\begin{equation}\label{eq:rateUP}
    r_{i}^{\mathsf{U},(g)} = B_i^{\mathsf{U}}\log \left(1 + { |h_i^{\mathsf{U},(g)}|^2 p_i^{\mathsf{U}}}\big/({N_0 B_i^{\mathsf{U}}})\right),
\end{equation}
where $B_i^{\mathsf{U}}$ is the uplink bandwidth allocated to the device, $p_i^{\mathsf{U}}$ is the uplink transmit power of the device, and $N_0$ is the noise spectral density.
Letting $\sigma_{j}$ denote the number of bits required to represent one of the $M_{j}$ elements of model $j$, the delay and energy consumption  of transmitting the local model parameter of task $j$ from device $i$  to the BS are modeled
\begin{equation}
T_{i,j}^{\mathsf{U},(g)} = U_{i,j}^{(g)}\sigma_{j} M_{j}/r_{i}^{\mathsf{U},(g)} ,~
E_{i,j}^{\mathsf{U},(g)} = p_i^{\mathsf{U}}T_{i,j}^{\mathsf{U},(g)}, \label{eq:uplink_transmission}
\end{equation}
respectively. Similarly, the downlink data rate from the BS to device $i$ at each  $g \in \{1, \cdots, G\}$ is given by
\begin{equation}\label{eq:rateDOWN}
    r_{i}^{\mathsf{D},(g)} = B_i^{\mathsf{D}}\log \left(1 + {|h_i^{\mathsf{D},(g)}|^2 p_i^{\mathsf{D}}}\big/({N_0 B_i^{\mathsf{D}}})\right),
\end{equation}
where $B_i^{\mathsf{D}}$ is downlink bandwidth, $h_i^{\mathsf{D},(g)}$ is the downlink channel gain, and $p^{\mathsf{D}}$ is the transmit power of the BS. Subsequently, the downlink delay and energy consumption of transmitting the model parameters of task $j$ to device $i$ at global aggregation $g$ are
\begin{equation}
T_{i,j}^{\mathsf{D},(g)} = R_{i,j}^{(g)}\sigma_{j} M_{j}/r_{i}^{\mathsf{D},(g)} ,~
E_{i,j}^{\mathsf{D},(g)} = p_i^{\mathsf{D}}T_{i,j}^{\mathsf{D},(g)}, \label{eq:downlink_transmission}
\end{equation}
respectively.
Consequently, the length of the local period is  
\begin{equation}\label{const:local_period_time}
T_{i,j}^{\mathsf{L}, (g)} = R_{i,j}^{(g)} T_{i,j}^{\mathsf{ID}, (g)} + T_{i,j}^{\mathsf{C},(g)}  + T_{i,j}^{\mathsf{U},(g)} + T_{i,j}^{\mathsf{D},(g)}. 
\end{equation}

\subsection{Optimization Problem}\label{sec:optimization_problem}
\subsubsection{Problem Formulation}
Before formulating our optimization problem, we need to first define the auxiliary functions
\begin{alignat}{2}
    &Q_{i,j}^{\mathsf{ID},(g)}(t) =  &&\Big(t-\ssum_{j \in \mathcal{J}}\ssum_{k=0}^{g-1}T_{i,j}^{\mathsf{L},(k)} \Big) \times \Big(t-\big(\ssum_{j \in \mathcal{J}}\ssum_{k=0}^{g}T_{i,j}^{\mathsf{L},(k)} + R_{i,j}^{(g)}T_{i,j}^{\mathsf{ID},(g)} \big)\Big) \label{const:auxiliary_idle} \\
    &Q_{i,j}^{\mathsf{AC},(g)}(t) =  &&\Big(t-\big(\ssum_{j \in \mathcal{J}}\ssum_{k=0}^{g-1}T_{i,j}^{\mathsf{L},(k)} + R_{i,j}^{(g)}T_{i,j}^{\mathsf{ID},(g)} + T_{i,j}^{\mathsf{D},(g)} \big)\Big) \Big(t-\big(\ssum_{j \in \mathcal{J}}\ssum_{k=0}^{g-1}T_{i,j}^{\mathsf{L},(k)} + R_{i,j}^{(g)}T_{i,j}^{\mathsf{ID},(g)}\nonumber \\
    & &&+ T_{i,j}^{\mathsf{D},(g)} + T_{i,j}^{\mathsf{C},(g)} \big)\Big) \label{const:auxiliary_active} 
\end{alignat}
These auxiliary functions are created to capture the start and the end of both active and idle concept drift. We formulate the joint device scheduling and resource allocation for {\tt DMA-FL} as the following optimization problem $\bm{\mathcal{P}}$, transforming the scheduling decisions to optimization constraints:
\begin{mini!}[2]
{}{c_1\ssum_{j \in \mathcal{J}}\gamma_j\mathsf{Conv}_j +\ssum_{i \in \mathcal{I}}\ssum_{j \in \mathcal{J}} \ssum_{g=0}^{G_j-1}\big[\chi_j \ssum_{j \in \mathcal{J}}\big(c_2(E_{i,j}^{\mathsf{U},(g)}+ E_{i,j}^{\mathsf{C},(g)})+ c_3 E_{i,j}^{\mathsf{D},(g)}\big) \big]\big/G_j}
{\label{eq:Example1}}
{(\bm{\mathcal{P}}):}
\addConstraint{\eqref{const:device_scheduling_matrices} ,\eqref{eq:local_computation_time}, \eqref{eq:local_computation_energy},
\eqref{eq:uplink_transmission},
\eqref{eq:downlink_transmission},
\eqref{const:local_period_time} \nonumber}{}
\addConstraint{\ssum_{g=0}^{G_j-1}T_{i,j}^{\mathsf{L}, (g)} + T_{i,j}^{\mathsf{ID}, \mathsf{F}}}{= T_{i,j}^{\mathsf{QoE}} \quad  \label{const:QoE}}{i \in \mathcal{I}, j \in \mathcal{J}}
\addConstraint{\ssum_{j \in \mathcal{J}} \ssum_{g=0}^{G_j-1}\Big(E_{i,j}^{\mathsf{U},(g)}+E_{i,j}^{\mathsf{C},(g)}\Big)}{\leq E^{\mathsf{B}}_i \quad \label{const:energy_budget}}{i\in\mathcal{I}}
\addConstraint{G_j \leq \ssum_{g= 0}^{G_j-1}\ssum_{i \in \mathcal{I}}R_{i,j}^{(g)}}{\leq G_j + K_j \quad \label{const:downlink_staleness}}{j \in \mathcal{J}}
\addConstraint{\ssum_{i \in \mathcal{I}} U_{i,j}^{(g)}}{= 1 \quad \label{const:uplink_transfer_only_one}}{j \in \mathcal{J}, g \in \mathcal{G}_j\setminus\{G_j\}}
\addConstraint{\ssum_{i \in \mathcal{I}}\ssum_{g=0}^{G_j -1} U_{i,j}^{(g)} \quad \label{const:uplink_equal_global_aggregation_num}}{= G_j}{j \in \mathcal{J}}
\addConstraint{(1- R_{i,j}^{(g)})T_{i,j}^{\mathsf{ID},(g)}}{ = 0 \quad \label{const:receive_idle}}{i \in \mathcal{I}, j \in \mathcal{J}, g \in \mathcal{G}_j}
\addConstraint{\ssum_{i \in \mathcal{I}}(U_{i,j}^{(g)}\ssum_{k = 0}^{g-1}T_{i,j}^{\mathsf{L}, (k)}) }{\leq \ssum_{i \in \mathcal{I}}(U_{i,j}^{(g+1)}\ssum_{k = 0}^{g}T_{i,j}^{\mathsf{L}, (k)}) \quad \label{const:make_sure_order}}{ j \in \mathcal{J}}
\addConstraint{Q_{i,j}^{\mathsf{ID},(g)}(t)y_{i,j}^{\mathsf{ID}, (g)}(t) }{\leq 0 \quad \label{const:idle_zero}}{i \in \mathcal{I}, j \in \mathcal{J}, g \in \mathcal{G}_j}
\addConstraint{Q_{i,j}^{\mathsf{ID},(g)}(t)(y_{i,j}^{\mathsf{ID}, (g)}(t) - 1) }{\leq 0 \quad \label{const:idle_one}}{i \in \mathcal{I}, j \in \mathcal{J}, g \in \mathcal{G}_j}
\addConstraint{Q_{i,j}^{\mathsf{AC},(g)}(t)y_{i,j}^{\mathsf{AC}, (g)}(t) }{\leq 0 \quad \label{const:active_zero}}{i \in \mathcal{I}, j \in \mathcal{J}, g \in \mathcal{G}_j}
\addConstraint{Q_{i,j}^{\mathsf{AC},(g)}(t)(y_{i,j}^{\mathsf{AC}, (g)}(t) - 1)}{\leq 0 \quad \label{const:active_one}}{i \in \mathcal{I}, j \in \mathcal{J}, g \in \mathcal{G}_j}
\addConstraint{f_{i}^{\mathsf{min}}\leq \textstyle\ssum_{j \in \mathcal{J}}f_{i,j}^{(g)}}{\leq f_{i}^{\mathsf{max}} \quad \label{const:CPU_lower_upper}}{i \in \mathcal{I}, g \in \mathcal{G}_j}
\addConstraint{e_j^{\mathsf{min}} \leq e_{i,j}^{(g)}}{\leq e_j^{\mathsf{max}} \quad \label{const:num_local_SGD_lower}}{i \in \mathcal{I}, j \in \mathcal{J}, g \in \mathcal{G}_j}
\addConstraint{1\leq B_{i,j}^{(g)}}{\leq D_{i,j}^{(g)}\quad \label{const:sample_size_lower_upper}}{i \in \mathcal{I}, j \in \mathcal{J}, g \in \mathcal{G}_j}
\addConstraint{T_{i,j}^{\mathsf{ID}, (g)}, T_{i,j}^{\mathsf{U}, (g)}, T_{i,j}^{\mathsf{D}, (g)}, T_{i,j}^{\mathsf{C}, (g)} }{\geq 0 \quad \label{const:idle_range}}{i \in \mathcal{I}, j \in \mathcal{J}, g \in \mathcal{G}_j}
\addConstraint{T_{i,j}^{\mathsf{ID}, \mathsf{F}}}{\geq 0 \quad \label{const:final_idle_range}}{i \in \mathcal{I}, j \in \mathcal{J}}
\addConstraint{R_{i,j}^{(g)}, U_{i,j}^{(g)}}{\in \{0,1\} \quad \label{const:downlink_transfer_range}}{i \in \mathcal{I}, j \in \mathcal{J}, g \in \mathcal{G}_j}
\end{mini!}
\begin{align*}
    \textrm{\textbf{variables}:} \quad &e_j^{\mathsf{max}}, e_j^{\mathsf{min}} \{\textrm{$\bm{f}^{(g)}, \bm{B}^{(g)}, \bm{e}^{(g)} , \bm{R}^{(g)}, \bm{U}^{(g)}, \bm{T}^{\mathsf{ID}, (g)}, \bm{T}^{\mathsf{L}, (g)},\bm{T}^{\mathsf{U}, (g)}, \bm{T}^{\mathsf{D}, (g)}, \bm{T}^{\mathsf{C}, (g)}, \bm{T}^{\mathsf{ID}, \mathsf{F}}$} \\
    &\textrm{$\bm{Q}^{\mathsf{ID}, (g)}, \bm{Q}^{\mathsf{AC}, (g)}\}$} 
\end{align*}

\noindent Problem $\bm{\mathcal{P}}$  optimizes the trade-off between multi-task ML quality (i.e., the first term in the objective) and  energy consumption (the second term). Constants $c_1,c_2,c_3\geq 0$ in the objective  weigh the importance of model performance, local energy consumption at devices, and the energy consumption at the BS (if BS is not a concern, $c_3=0$). The problem aims to find the optimal resource allocation  across global aggregations (i.e., CPU frequency 
$\bm{f}^{(g)}=\{f_{i,j}^{(g)}\}_{i\in\mathcal{I},j\in\mathcal{J}}$, mini-batch size  $\bm{B}^{(g)}=\{B_{i,j}^{(g)}\}_{i\in\mathcal{I},j\in\mathcal{J}}$, and number of SGD iterations $\bm{e}^{(g)} = \{e_{i,j}^{(g)}\}_{i\in\mathcal{I},j\in\mathcal{J}}$), the scheduling of the devices (i.e., the downlink and uplink transmission indicators $\bm{R}^{(g)} = \{R_{i,j}^{(g)}\}_{i\in\mathcal{I},j\in\mathcal{J}}$ and  $\bm{U}^{(g)}=\{U_{i,j}^{(g)}\}_{i\in\mathcal{I},j\in\mathcal{J}}$)
, the idle time and local period (i.e.  $\bm{T}^{\mathsf{ID}, (g)} = \{\mathcal{T}_{i,j}^{\mathsf{ID}, (g)}\}_{i \in \mathcal{I}, j \in \mathcal{J}}$ and $\bm{T}^{\mathsf{L}, (g)} = \{\mathcal{T}_{i,j}^{\mathsf{L}, (g)}\}_{i \in \mathcal{I}, j \in \mathcal{J}}$) and the rectangular functions capturing the idle/active concept drift as well as associated auxiliary functions (i.e. $\bm{y}^{\mathsf{AC}, (g)} =\{y_{i,j}^{\mathsf{AC}, (g)}(t)\}_{i \in \mathcal{I}, j \in \mathcal{J}}$, $\bm{y}^{\mathsf{ID}, (g)} = \{y_{i,j}^{\mathsf{ID}, (g)}(t)\}_{i \in \mathcal{I}, j \in \mathcal{J}}$, $\bm{Q}^{\mathsf{ID}, (g)} = \{Q_{i,j}^{\mathsf{ID}, (g)}\}_{i \in \mathcal{I}, j \in \mathcal{J}}$, and $\bm{Q}^{\mathsf{AC}, (g)} = \{Q_{i,j}^{\mathsf{AC}, (g)}\}_{i \in \mathcal{I}, j \in \mathcal{J}}$). Also, $\gamma_j$, in the first term of the objective function is the assigned \textit{weight/importance to model} $j$'s performance, enabling prioritization of different models (e.g., some models may be used for more important applications). Similarly, $\chi_j \geq 0$ is the assigned weight to the energy consumption of task $j$.
  
The problem captures a quality of experience (QoE) constraint in \eqref{const:QoE} by restricting the time window for the execution of each model $j$ to $T_j^{\mathsf{QoE}}$. It also considers an energy \underline{b}udget in \eqref{const:energy_budget} via $ E^{\mathsf{B}}_i$, $i\in\mathcal{I}$. Constraints \eqref{const:downlink_staleness}-\eqref{const:make_sure_order} ensure correct device scheduling, guaranteeing sequential reception of the models at the server, and the correct sequence of uplink and downlink transmission in device-BS communications. Constraints \eqref{const:idle_zero}-\eqref{const:active_one} ensure a correct \textit{policy} for the rect functions. 
In particular, variables $Q_{i,j}^{\mathsf{AC}, (g)}$ and  $Q_{i,j}^{\mathsf{ID}, (g)}$ are added to capture the start and the end of the active and idle concept drift in \eqref{const:auxiliary_idle} and \eqref{const:auxiliary_active}. In \eqref{const:idle_zero} and \eqref{const:idle_one}, we ensure $y_{i,j}^{\mathsf{ID}, (g)}(t)$ takes the value of $1$ when the devices are in the idle, uplink transmission, and downlink transmission period; and $0$  otherwise. Similarly, in \eqref{const:active_zero} and \eqref{const:active_one}, we guarantee  $y_{i,j}^{\mathsf{AC}, (g)}(t)$ is $1$ when devices are performing local model training; and $0$ otherwise. Constraints \eqref{const:CPU_lower_upper}-\eqref{const:downlink_transfer_range} ensure the feasibility of the problem. Note that we have $i \in \mathcal{I}, j \in \mathcal{J}$, $R_{i,j}^{(0)}T_{i,j}^{\mathsf{ID},(0)} = 0$.

\subsubsection{Nuances and Behavior of the Solution of $\bm{\mathcal{P}}$} We point out several important properties in problem $\bm{\mathcal{P}}$'s solution behavior: 
\begin{itemize}
    \item Upon increasing $\gamma_j$ for task $j$, the solution will allocate more communication and computation resources (i.e., higher CPU speed $f_{i,j}^{(g)}$, larger mini-batch size $B_{i,j}^{(g)}$, and more careful tuning of SGD iterations  $e_{i,j}^{(g)}$) across devices $i\in\mathcal{I}$ for training this task. Also, the scheduling variables will favor more frequent reception of model parameters of task $j$ at the server for global aggregations as compared to other tasks.
    \item Fixing $c_1, c_3$, upon increasing $c_2$, the solution would favor lower power consumption at the devices over a better model accuracy, which will reflect in the resource allocation (i.e., lower CPU speed, smaller mini-batch sizes, and fewer SGD iterations) and in device scheduling (i.e., less frequent uplink transmissions. 
    \item Fixing $c_1, c_2$, upon increasing $c_3$ the solution will favor more frequent model training at those devices with closer proximity to the BS.
    \item A small value of  $T_j^{\mathsf{QoE}}$ for task $j$ (i.e., a shorter time window of execution) implies that the training of task $j$ should be conducted faster  compared to other tasks via allocating more computation/communication resources and scheduling the devices to have more frequent updates of this task $j$.
    \item From $\mathsf{Conv}_j$ in the objective, the solution will incorporate the behaviors from our convergence analysis mentioned in~Sec.~\ref{subsec:interpret} to have more efficient model training.
    \item The optimization schedules the devices differently in the cold vs. warm model regimes. In particular,   inspecting~\eqref{weighted_sum_weight_local_number_updates_concept_drift_integration_y}, the solution favors allocation of resources to model $j$ with a larger initial error  $F_j(\mathbf{w}_j^{(0)})$  to compensate for a high loss. However, for warm models (i.e., those with lower initial errors) the solution favors the allocation of fewer network resources. 
\end{itemize}
\subsection{Solution Design} \label{subsec:soluProb}
Based on the behavior of~\eqref{weighted_sum_weight_local_number_updates_concept_drift_integration_y} and the constraints of $\bm{\mathcal{P}}$, we conclude that $\bm{\mathcal{P}}$ is a mixed-integer non-convex optimization. To overcome this, we first relax the integer variables and then solve the problem through successive convex approximations. Our choice of successive convex approximation was inspired by several works in the past decade which have established its theoretical guarantees \cite{liu2019stochastic, scutari2016parallel}, as well as its popularity for handling non-convex problems that arise in the wireless communications domain \cite{kaleva2012weighted, tian2022successive}. We provide the pseudo-code of the proposed {\tt DMA-FL} and successive convex approximation in Algorithm \ref{alg:sca} for clarity.

\begin{algorithm}[t]
    \label{alg:sca}
    \caption{Proposed {\tt DMA-FL} with Successive Convex Approximation}
    \SetKwInput{Input}{Input}
    \SetKwInput{Output}{Output}
    \Input{Optimization problem $\bm{\mathcal{P}}$, and step size $\varepsilon$}
    \Output{The converged solution $\bm{v}$}
    Transform $\bm{\mathcal{P}}$ via \eqref{const:force_downlink_transfer}, \eqref{const:force_uplink_transfer}, \eqref{const:force_ma_concept}, and \eqref{const:force_mi_concept} into $\widehat{\bm{\mathcal{P}}}$ shown in \eqref{problem:p_hat} \;
    Initialize a feasible point $\bm{v}_0$ of the problem $\widehat{\bm{\mathcal{P}}}$ \;
    Initial the iteration number $m = 0$\;
    \While{$\bm{v}_m$ has not converged}{
        Employ the proximal gradient method outlined in \eqref{convexified_nonconvex_objective}, (\ref{eq:convexified_nonconvex_inequality_constraint}), and \eqref{eq:convexified_nonconvex_equality_constraint} to convexify $\widehat{\bm{\mathcal{P}}}$ at the current solution $\bm{v}_{m}$ to obtain a surrogate problem $\widehat{\bm{\mathcal{P}}}^{(m)}$ expressed in \eqref{problem:p_hat_m}\;
        Solve the surrogate problem $\widehat{\bm{\mathcal{P}}}^{(m)}$ via convex optimization techniques to get $\widehat{\bm{v}}_{c}(\bm{v}_{m})$ \;
        Update the solution according to \eqref{sca_iterate_rule}: $\bm{v}_{m+1} = \bm{v}_{m} + \varepsilon (\widehat{\bm{v}}_{c}(\bm{v}_{m}) - \bm{v}_{m})$ \;
        $m \leftarrow m+1$ \;
    }
    \Return{The converged solution $\bm{v}$}

\end{algorithm}
\subsubsection{Transforming Integer Variables} Suppose we relax all the integer variables (i.e. $R_{i,j}^{(g)}$, $U_{i,j}^{(g)}$, $X_{i,j}^{g, g_1}$, $y_{i,j}^{\mathsf{ID},(g)}(t)$, and $y_{i,j}^{\mathsf{AC},(g)}(t)$) to continuous variables within range $[0,1]$. Then, we can force them to take binary values by incorporating the following constraints:
\begin{alignat}{2}
R_{i,j}^{(g)}(1-R_{i,j}^{(g)}) &\leq 0, \quad &&i \in \mathcal{I}, j \in \mathcal{J}, g \in \mathcal{G}_j \label{const:force_downlink_transfer} \\
 U_{i,j}^{(g)}(1-U_{i,j}^{(g)}) &\leq 0,\quad &&i \in \mathcal{I}, j \in \mathcal{J}, g \in \mathcal{G}_j  \label{const:force_uplink_transfer} \\
y_{i,j}^{\mathsf{ID},(g)}(t)\big(1 - y_{i,j}^{\mathsf{ID},(g)}(t)\big) &\leq 0,\quad &&i \in \mathcal{I}, j \in \mathcal{J}, g \in \mathcal{G}_j \label{const:force_ma_concept} \\
y_{i,j}^{\mathsf{AC},(g)}(t)\big(1 - y_{i,j}^{\mathsf{AC},(g)}(t)\big) &\leq 0,\quad &&i \in \mathcal{I}, j \in \mathcal{J}, g \in \mathcal{G}_j \label{const:force_mi_concept} 
\end{alignat}
The above constraints along with \eqref{const:uplink_transfer_only_one}, and \eqref{const:idle_zero}-\eqref{const:active_one} ensure that the indicated continuous variables take binary values in the feasible region. 
Those variables guarantee that only one device would upload/receive one of the task's model parameter to/from the server at any global aggregation and the rectangular functions takes correct values. Using \eqref{const:device_scheduling_matrices} along with the above two results, scheduling variable $X_{i,j}^{g, g_1}$ in turn takes binary values. 

\subsubsection{Decomposition of {$\bm{\mathcal{P}}$} into Convex and Non-convex Parts}  We denote the objective function of $\bm{\mathcal{P}}$ as $\bm{\mathcal{O}}(\bm{v})$, where $\bm{v}$ encapsulates all variables of $\bm{\mathcal{P}}$. The constraints of $\bm{\mathcal{P}}$ can be divided into four vectors: {c}onvex \underline{eq}ualities  $\mathbf{C}^{\mathsf{EQ}}(\bm{v})$ (i.e., \eqref{eq:uplink_transmission}, \eqref{eq:downlink_transmission}, \eqref{const:QoE}, \eqref{const:uplink_transfer_only_one} and \eqref{const:uplink_equal_global_aggregation_num}), {c}onvex \underline{i}n\underline{e}qualities   $\mathbf{C}^{\mathsf{IE}}(\bm{v})$ (i.e., \eqref{const:energy_budget}, \eqref{const:downlink_staleness}, and \eqref{const:CPU_lower_upper}-\eqref{const:final_idle_range}), {n}onconvex \underline{eq}ualities   $\mathbf{N}^{\mathsf{EQ}}(\bm{v})$ (i.e., \eqref{const:device_scheduling_matrices}, \eqref{eq:local_computation_time}, \eqref{eq:local_computation_energy}, \eqref{const:local_period_time}, and \eqref{const:receive_idle}), and {n}onconvex \underline{i}n\underline{e}qualities   $\mathbf{N}^{\mathsf{IE}}(\bm{v})$ (i.e., \eqref{const:make_sure_order}-\eqref{const:active_one}, and \eqref{const:force_downlink_transfer}- \eqref{const:force_mi_concept}). Thus, $\bm{\mathcal{P}}$ can be written as $\widehat{\bm{\mathcal{P}}}$ below
\begin{align} \label{problem:p_hat}
&(\widehat{\bm{\mathcal{P}}}): \min\limits_{\bm{v}}~~~\bm{\mathcal{O}}(\bm{v}) \\
& \textrm{s.t.}~~\mathbf{C}^{\mathsf{EQ}}(\bm{v})=\bm{0}, \mathbf{N}^{\mathsf{EQ}}(\bm{v}) =\bm{0}, \mathbf{C}^{\mathsf{IE}}(\bm{v})\leq \bm{0}, \mathbf{N}^{\mathsf{IE}}(\bm{v}) \leq \bm{0}. \nonumber
\end{align}


We next present our successive convex methodology, which is inspired by the method in \cite{scutari2016parallel}. 
\subsubsection{Successive Convex Approximation} We solve $\widehat{\bm{\mathcal{P}}}$ through a sequence of approximations indexed by $m$. 
Let $\bm{v}_{0}$ denote the initial point/solution of the method that satisfies constraints of $\widehat{\bm{\mathcal{P}}}$. At each iteration $m$, we convexify $\widehat{\bm{\mathcal{P}}}$ at the current solution $\bm{v}_{m}$ to obtain a surrogate problem $\widehat{\bm{\mathcal{P}}}^{(m)}$. Denoting the solution to $\widehat{\bm{\mathcal{P}}}^{(m)}$ as $\widehat{\bm{v}}_{c}(\bm{v}_{m})$, we update the variables as between iterations as
\begin{equation} \label{sca_iterate_rule}
  \bm{v}_{m+1} = \bm{v}_{m} + \varepsilon (\widehat{\bm{v}}_{c}(\bm{v}_{m}) - \bm{v}_{m}) .
\end{equation}
We leverage a proximal gradient method \cite{parikh2014proximal} to convexify $\widehat{\bm{\mathcal{P}}}$. To this end,  we relax and convexify the objective function  $\bm{\mathcal{O}}(\bm{v})$ and the non-convex constraint vectors (i.e.,  $\mathbf{N}^{\mathsf{EQ}}(\bm{v})$ and  $\mathbf{N}^{\mathsf{IE}}(\bm{v})$)  such that the relaxed constraints upper-bound the original ones. Specifically, at iteration $m$, given the current solution $\bm{v}_{m}$, we obtain the convex approximation of the objective  $\bm{\mathcal{O}}$ for $\lambda > 0$, denoted by  $\widehat{\bm{\mathcal{O}}}$, as
\begin{equation} \label{convexified_nonconvex_objective}
\widehat{\bm{\mathcal{O}}}(\bm{v}; \bm{v}_{m}) = \bm{\mathcal{O}}(\bm{v}_{m}) + \nabla\bm{\mathcal{O}}(\bm{v}_{m})^\top (\bm{v} - \bm{v}_{m})+ \frac{\lambda}{2}\twonormsquare{\bm{v} - \bm{v}_{m}}
\end{equation}
and the convex approximations of non-convex constraints as
\begin{alignat}{2}
& \widehat{\mathbf{N}}^{\mathsf{IE}}(\bm{v}; \bm{v}_{m}) &&= \mathbf{N}^{\mathsf{IE}}(\bm{v}_{m}) + \nabla\mathbf{N}^{\mathsf{IE}}(\bm{v}_{m})^\top (\bm{v} - \bm{v}_{m}) + \frac{L_{\mathsf{IE}}}{2}\twonormsquare{\bm{v} - \bm{v}_{m}} ,
 \label{eq:convexified_nonconvex_inequality_constraint} \\
& \widehat{\mathbf{N}}^{\mathsf{EQ}}(\bm{v}; \bm{v}_{m}) &&= \mathbf{N}^{\mathsf{EQ}}(\bm{v}_{m}) + \nabla\mathbf{N}^{\mathsf{EQ}}(\bm{v}_{m})^\top (\bm{v} - \bm{v}_{m})+ \frac{L_{\mathsf{EQ}}}{2}\twonormsquare{\bm{v} - \bm{v}_{m}}. \label{eq:convexified_nonconvex_equality_constraint}
\end{alignat}
In \eqref{eq:convexified_nonconvex_inequality_constraint} and \eqref{eq:convexified_nonconvex_equality_constraint}, $L_{\mathsf{IE}}$ and $L_{\mathsf{EQ}}$ are the Lipschitz constants that are characteristic of $\mathbf{N}^{\mathsf{IE}}$ and $\mathbf{N}^{\mathsf{EQ}}$ respectively. The above formulation implies that $\widehat{\mathbf{N}}^{\mathsf{IE}}(\bm{v}; \bm{v}_m) \geq \mathbf{N}^{\mathsf{IE}}(\bm{v})$ and $\widehat{\mathbf{N}}^{\mathsf{EQ}}(\bm{v}; \bm{v}_m) \geq \mathbf{N}^{\mathsf{EQ}}(\bm{v})$ \cite{bertsekas2015parallel}. The proximal-based relaxation in \eqref{convexified_nonconvex_objective} also ensures the strong convexity of the surrogate function $\widehat{\bm{\mathcal{O}}}(\bm{v}; \bm{v}_m)$. At each iteration $m$, we arrive at the following relaxed convex approximation of $\widehat{\bm{\mathcal{P}}}$:
\begin{align} \label{problem:p_hat_m}
(\widehat{\bm{\mathcal{P}}}^{(m)})\hspace{-.5mm}:&\max_{\bm{\Lambda},\bm{\Omega}\geq 0}\min_{\bm{v}}\widehat{\bm{\mathcal{O}}}(\bm{v}; \bm{v}_{m}) + \bm{\Lambda}^\top \widehat{\mathbf{N}}^{\mathsf{EQ}}(\bm{v}; \bm{v}_m) \bm{\Omega}^\top \widehat{\mathbf{N}}^{\mathsf{IE}}(\bm{v}; \bm{v}_m)  \\
&\textrm{s.t.} ~~\mathbf{C}^{\mathsf{EQ}}(\bm{v}; \bm{v}_m) = \bm{0}, \mathbf{C}^{\mathsf{IE}}(\bm{v}; \bm{v}_m) \leq \bm{0}, \nonumber
\end{align}
where  $\bm{\Lambda}$ and  $\bm{\Omega}$ are the Lagrangian multipliers associated with  $\widehat{\mathbf{N}}^{\mathsf{EQ}}(\bm{v}; \bm{v}_m)$ and  $\widehat{\mathbf{N}}^{\mathsf{IE}}(\bm{v}; \bm{v}_m)$, respectively. It can be verified that the objective of $\widehat{\bm{\mathcal{P}}}^{{(m)}}$ is strongly convex and the constraints of $\widehat{\bm{\mathcal{P}}}^{{(m)}}$ are convex. After this series of transformations we have designed, the final problem can be solved via convex optimization techniques. It can be shown the sequence $\{\bm{v}_m\}$ is feasible for $\widehat{\bm{\mathcal{P}}}$ and non-increasing, which asymptotically reaches a stationary solution of $\widehat{\bm{\mathcal{P}}}$. A formal proof of this for a similarly structured non-convex mixed integer program can be found in Appendix E of \cite{ganguly2023multi}.

\section{Numerical Evaluation}\label{sec:num}
\subsection{Simulation Setup}

\subsubsection{System settings} We incorporate the effect of fading in channel gains  $h_i^{\mathsf{U}, (g)}$,  $h_i^{\mathsf{D}, (g)}$ (in~\eqref{eq:rateUP} and~\eqref{eq:rateDOWN}). For the uplink channel, we consider $h_i^{\mathsf{U},(g)}= \sqrt{\beta_{i}^{(g)}} u_{i}^{(g)}$
 where $u_{i}^{(g)} \sim \mathcal{CN}(0,1)$ captures Rayleigh fading, and 
$\beta_{i}^{(g)} = \beta_0 - 10\widetilde{\alpha}\log_{10}(d^{(g)}_{i}/d_0)$~\cite{tse2005fundamentals}. Here, $\beta_0=-30$dB, $d_0=1$m, $\widetilde{\alpha}=3$, and $d^{(g)}_{i}$ is the  distance between device $i$ and the base station (BS), where the server resides, at each global aggregation $g$. The downlink channel gain  $h_i^{\mathsf{D},(g)}$ is generated using the same approach. In our system, 10 devices are randomly placed in a circular area with radius of 25m, with the BS in the center. Other specific settings are tabulated in Table~\ref{tab:param_values}.

\begin{table}
  \caption{Neural network architectures used for each task.}
  \centering
  \label{tab:nn_all_tasks}
  \resizebox{0.6\textwidth}{!}{
  \begin{tabular}{cccc}
    \toprule
    Tasks & Layer (type) & Output Shape & \# of Params \\
    \midrule
    \multirow{4}{*}{SVHN} &Conv2D-1 & [-1, 32, 28, 28] &   2,432\\
    & Conv2D-2 & [-1, 64, 10, 10] & 51,264 \\
    & Linear-3 & [-1, 256] & 409,856 \\
    & Linear-4 & [-1, 10] & 2,570 \\ \midrule
    MNIST & Linear-1 & [-1, 10] & 7,850\\ \midrule
    Fashion-MNIST & Linear-1 & [-1, 10] & 7,850\\
  \bottomrule
\end{tabular}}
\end{table}

\begin{table}
    \centering
  \caption{Default network/system characteristics employed in simulations.}
  \label{tab:param_values}
  \resizebox{0.8\textwidth}{!}{
  \begin{tabular}{cc|cc|cc}
    \toprule
     \textbf{Param} & \textbf{Value} & \textbf{Param} & \textbf{Value} & \textbf{Param} & \textbf{Value} \\\hline
    $V_1$ & 2 & $ V_2$ & 5 & $\rho$ & 1\\ 
    $\xi_i$ & $[\expnumber{2}{-22}, \expnumber{2}{-19}]$ & $a_{i,j}$ & $[\expnumber{5}{2}, \expnumber{5}{3}]$ & $\sigma_j$ & $4096$\\
    $p_i^{\mathsf{U}}$ & $250$mW & $p_i^{\mathsf{D}}$ & $100$ mW & $B_i^{\mathsf{U}}$ & $1$MHz \\
    $B_i^{\mathsf{D}}$ & $100$ KHz & $K_j$ & $5$ & $(c_1,c_2,c_3)$ & $(\expnumber{1}{-9}, 1, 1)$ \\
  \bottomrule 
\end{tabular}}
{\\ \vspace{2mm} * Interval [a,b] means sampling the value between $a$ and $b$ according to the uniform distribution.}
\end{table}

\subsubsection{Task settings and Dataset partition}  We consider three real-world classification tasks, based on the standard MNIST, Fashion-MNIST, and SVHN datasets. Each of these are popular datasets for image recognition employed in FL research \cite{li2022federated, durmus2021federated}. The specific NN architectures for different tasks are given in Table I. The higher complexity of the SVHN task is consistent with its significantly larger NN model.

We have run experiments considering three non-iid data partitioning strategies: \textit{(a) 2-label partitioning:} Every device has access to data from only $2$ of the $10$ labels for each task. The selection of labels is conducted randomly across devices. This type of partitioning is widely considered, e.g., in \cite{han2023federated}. \textit{(b) Dirichlet partitioning:} Each device is allocated its fraction of labels according to a Dirichlet distribution for each task, with parameter $\beta = 0.5$. This type of partitioning has also been widely considered, e.g., in \cite{durmus2021federated, li2022federated}. \textit{(c) Varied partitioning:} $n$ devices are allocated labels from $n$ classes (e.g., 3 devices have 3 classes). In each case, all local datasets have the same size.
Due to space limitations, we only show the simulation results obtained from the Dirichlet partitioning here. The results for the other two partitionings can be found in Appendix C of our technical report \cite{chang2023asynchronous}. The results from each case are qualitatively consistent.

For optimization parameters, $\gamma_j$ and $\chi_j$ are set to $1$ by default for all tasks $j$. All other parameters are measured using the assumptions and definitions in \Cref{subsec:assumptions_definitions}. We implemented the concept drift by adding more data to the local dataset across the global aggregation periodically until the end of the training. In our simulations, we adopted the polynomial formula in \cite{xie2019asynchronous} to vary the weighted coefficient $\alpha_j$ based on the staleness of the local model.

\subsection{Results and Discussions}

\begin{figure}[t]
\centering
\includegraphics[width = \textwidth]{./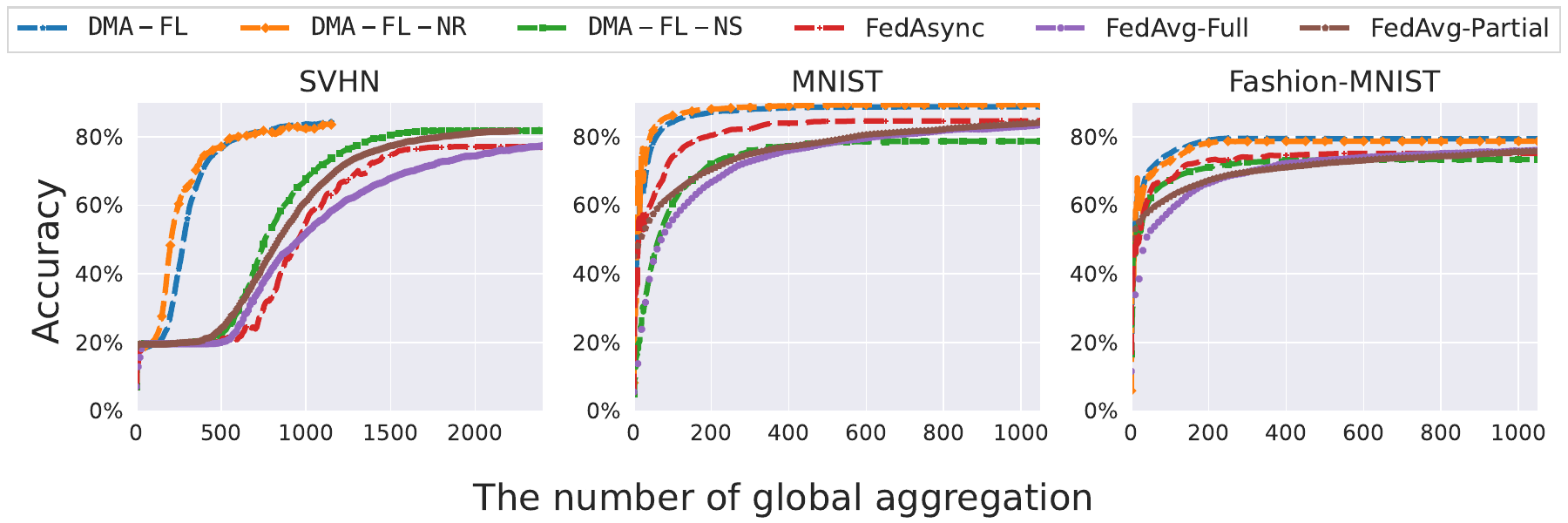}
\vspace{-12mm}
\caption{ML training convergence of all schemes for all tasks, with labels distributed according to Dirichlet distribution. {\tt DMA-FL} and {\tt DMA-FL-NR} outperform all other baselines on all tasks. As we will see in Figure \ref{fig:acc_energy_dirichlet}, the {\tt DMA-FL-NR} also incur significantly higher energy consumption on each task. Thus, {\tt DMA-FL} has the best trade-off between the performance and energy consumption.}
\label{fig:acc_iteration_dirichlet}
\vspace{3mm}
\includegraphics[width = \textwidth]{./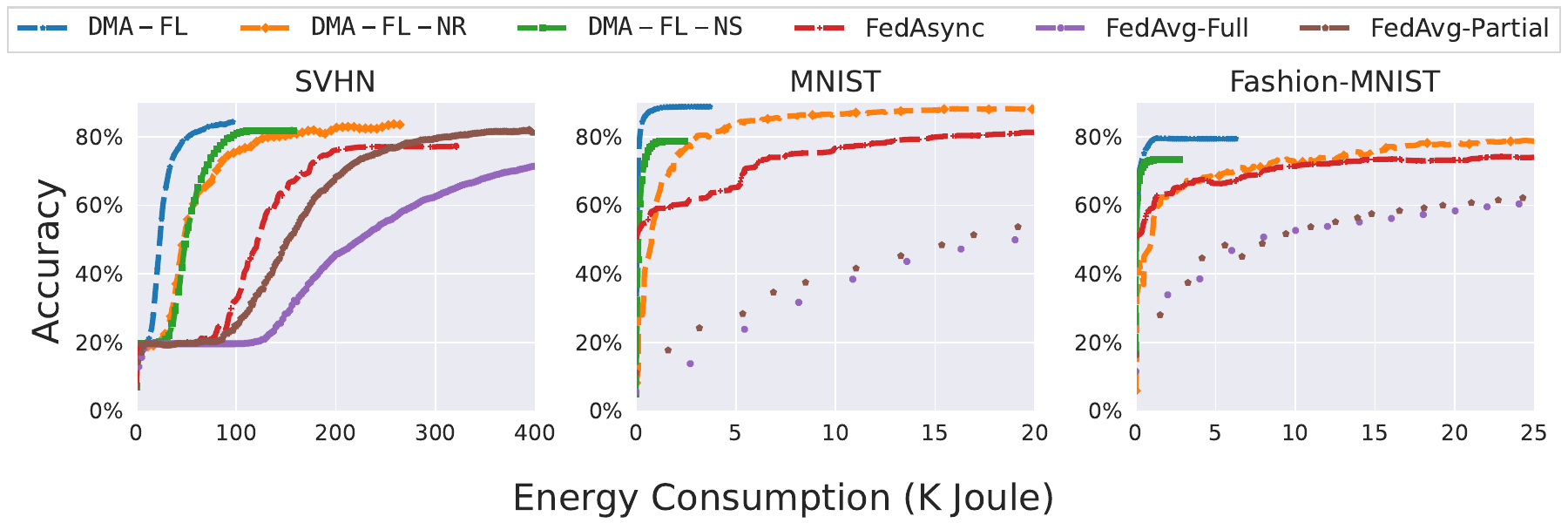}
\vspace{-12mm}
\caption{Accuracy vs. energy consumption trade-off obtained by each method across all tasks with labels distributed according to Dirichlet distribution. Our proposed scheme can reach a target level of accuracy with significantly less energy consumption in comparison with all the baselines on each task.}
\label{fig:acc_energy_dirichlet}
\end{figure}

\subsubsection{Comparison with baselines} We first compare the performance of {\tt DMA-FL} with optimized device scheduling and resource allocation (obtained through solving $\mathcal{P}$) against several baselines: 
\begin{itemize}

\item[(i)] FedAsync \cite{xie2019asynchronous}: This is conventional fully asynchronous FL. This scheme serves the baseline without optimization over both the device scheduling and resource allocation.
\item[(ii)] FedAvg-Full: This scheme is a variant of FedAvg \cite{mcmahan2017communication}. In this scheme, every device participates in the global aggregation in a synchronous manner. Each global aggregation is performed when the server receives all the trained local models.
\item[(iii)] FedAvg-Partial: This scheme is a variant of FedAvg employed in \cite{ma2021fedsa}. In this scheme, ach global aggregation is performed when the server receives a fixed number of the local models.
\item[(iv)] {\tt DMA-FL-NR}: This is our scheme with optimization over device scheduling variables (i.e., $\bm{R}^{(g)}$ and $\bm{U}^{(g)}$) but not resource allocation variables (i.e., CPU frequencies $\bm{f}^{(g)}$, mini-batch size $\bm{B}^{(g)}$, the number of local SGD $\bm{e}^{(g)}$, and idle time $\bm{T}^{\mathsf{ID}, (g)}$). This baseline helps us assess the importance of resource optimization in our problem setting.
\item[(v)] {\tt DMA-FL-NS}: This is our proposed scheme with optimization over resource allocation variables but not device scheduling variables. This baseline helps us assess the importance of device scheduling in our problem setting.
\end{itemize}


Fig. \ref{fig:acc_energy_dirichlet} compares the convergence behavior of the algorithms for a non-iid partitioning according to the Dirichlet distribution. Fig. \ref{fig:acc_energy_dirichlet} presents the corresponding energy consumption plots to reach the accuracy levels in Fig. \ref{fig:acc_iteration_dirichlet}. In Fig. \ref{fig:acc_iteration_dirichlet}, we see that {\tt DMA-FL} and {\tt DMA-FL-NR} obtain improvements in training performance over global aggregations compared to other baselines on all tasks. The improvements on SVHN are most substantial, consistent with this task being the most complex and thus having the largest loss contribution to $\bm{\mathcal{P}}$'s objective. Most importantly, in Fig. \ref{fig:acc_energy_dirichlet}, we see that {\tt DMA-FL} obtains a substantially better training accuracy vs. energy consumption tradeoff compared to each baseline. The marginal advantage of {\tt DMA-FL-NR} on SVHN from Fig. \ref{fig:acc_energy_dirichlet} comes with substantially higher energy consumption requirements to reach target accuracy levels in Fig. \ref{fig:acc_energy_dirichlet}. This validates the gains provided by {\tt DMA-FL}'s joint optimization of device scheduling and resource allocation.

We also see that the baseline asynchronous schemes ({\tt DMA-FL-NR}, and {\tt DMA-FL-NS}, and FedAsync) outperform the synchronous schemes (FedAvg-Full and FedAvg-Partial) in terms of energy consumption in Fig. \ref{fig:acc_energy_dirichlet}. The asynchronous training styles are more resource efficient inherently since they can skip engaging devices with higher energy consumption during specific model aggregation iterations. Among the asynchronous schemes, we can verify the benefits of optimized resource allocation in terms of improved energy efficiency by comparing {\tt DMA-FL} and {\tt DMA-FL-NS} with {\tt DMA-FL-NR} and FedAsync. {\tt DMA-FL} and {\tt DMA-FL-NS} obtain a better accuracy-energy tradeoff than the other asynchronous schemes in Fig. \ref{fig:acc_energy_dirichlet}. On the other hand, in Fig. \ref{fig:acc_iteration_dirichlet}, we see that {\tt DMA-FL-NR} performs closest to {\tt DMA-FL} (even outperforming it for MNIST), obtaining better convergence over global aggregations than {\tt DMA-FL-NS}. Since {\tt DMA-FL-NR} is not considering resource consumption, it optimizes the device scheduling for convergence speed, but consumes significant energy on each task. {\tt DMA-FL} balances both of these objectives to obtain the best overall performance.

\subsubsection{Impact of task importance} We next study the impact of task importance (i.e., $\gamma_j$) in the objective of $\mathcal{P}$ on model performance and resource savings. Figs. \ref{fig:task_importance} and \ref{fig:ratio_local_SGD} give the results.

In Fig. \ref{fig:task_importance_a}, we fixed the model importance of two models and increase the importance of one specific model. For SVHN, when it is emphasized, it has parameter $\gamma_1/\chi_1 = \expnumber{1}{7}$ and when it is regular (it is not emphasized), it has parameter $\gamma_1/\chi_1 = \expnumber{1}{1}$. For MNIST and Fashion-MNIST, when they are emphasized, they have parameters $\gamma_i/\chi_i = \expnumber{1}{9}, \, \forall i \in \{2,3\}$, and when they are regular, they have parameters  $\gamma_i/\chi_i = \expnumber{1}{-9}, \; \forall i \in \{2,3\}$. As can be seen, the performance of the emphasized task is significantly boosted as compared to the regular ones. However, this comes at the price of more energy consumption as shown in Fig. \ref{fig:task_importance_b}. In Fig. \ref{fig:task_importance_b}, the left y-axis is the energy consumption for the SVHN task, and the right y-axis is the energy consumption for MNIST and Fashion-MNIST; the need for two different y-axis scales is consistent with SVHN employing a more complex neural network architecture in Table~\ref{tab:nn_all_tasks}. 

\begin{figure*}
    \centering
    \setkeys{Gin}{width=0.5\linewidth}
    \subfloat[Accuracy vs the number of global aggregations]{\label{fig:task_importance_a}\includegraphics{./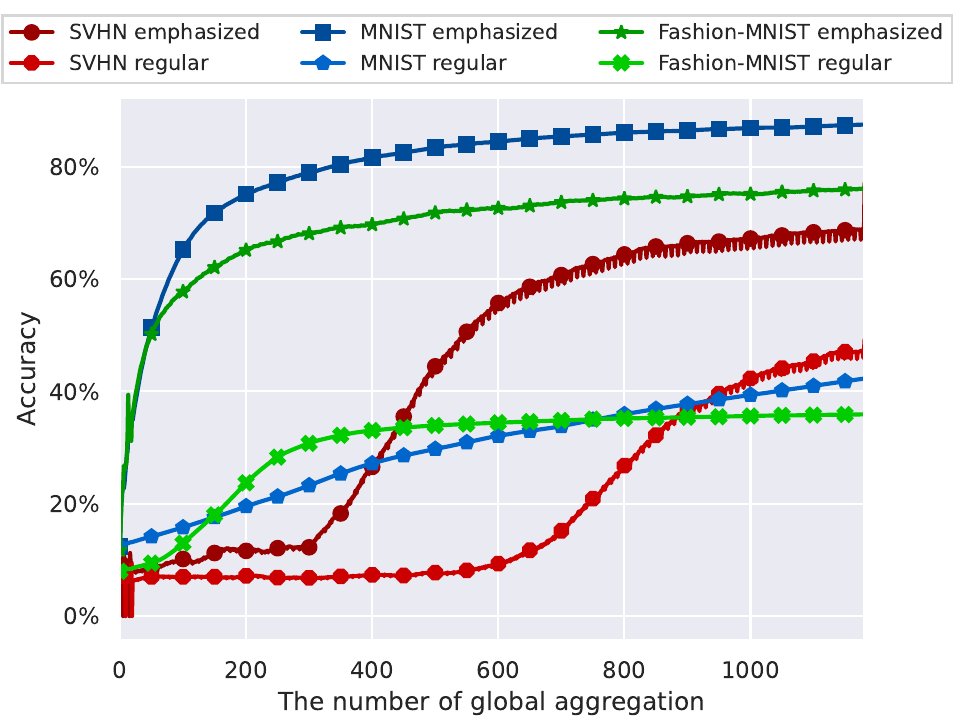}}
    \subfloat[Energy consumption vs the number of global aggregations]{\label{fig:task_importance_b}\includegraphics{./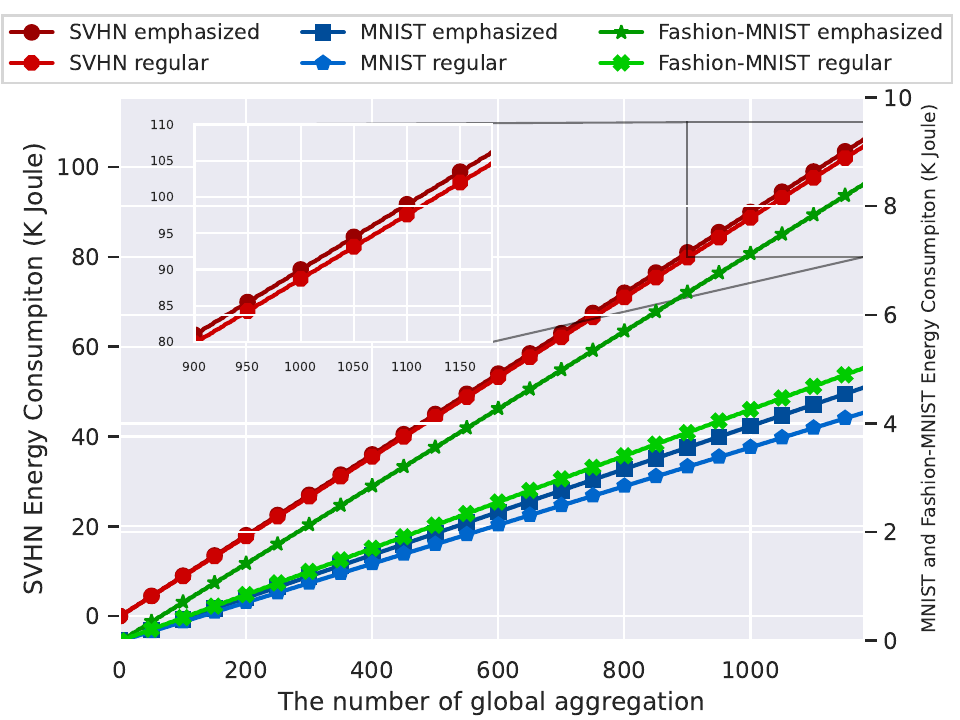}}
\caption{Impact of varying task importance in {\tt DMA-FL}. We see that each emphasized task experiences (a) a boost in performance (b) at the price of additional energy consumption.}
\vspace{-3mm}
\label{fig:task_importance}
\end{figure*}

\begin{figure*}[!t]
    \centering
    \setkeys{Gin}{width=0.33\linewidth}
    \subfloat[SVHN emphasized]{\label{fig:local_SGD_ratio_a}\includegraphics{./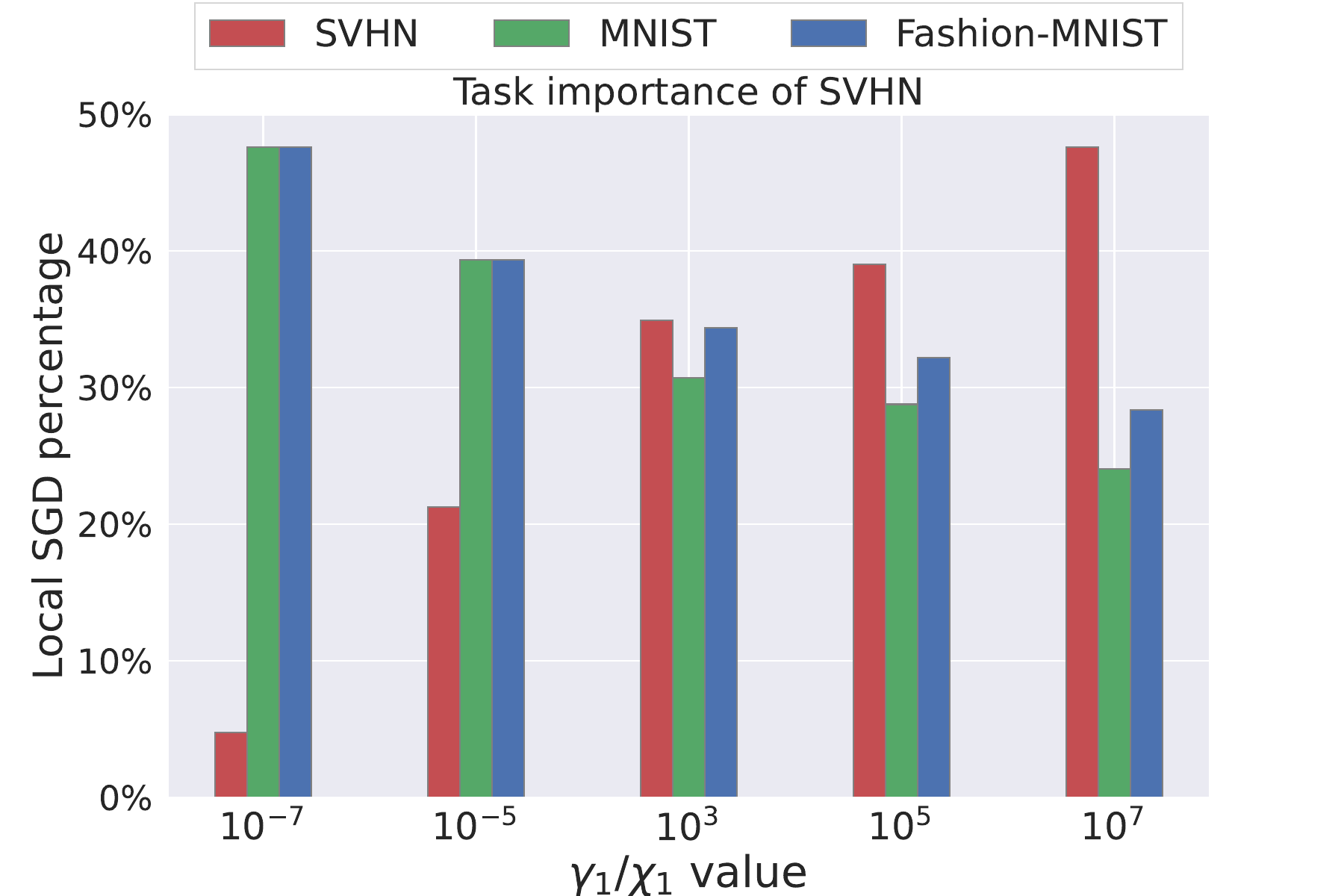}}
    \subfloat[MNIST emphasized]{\label{fig:local_SGD_ratio_b}\includegraphics{./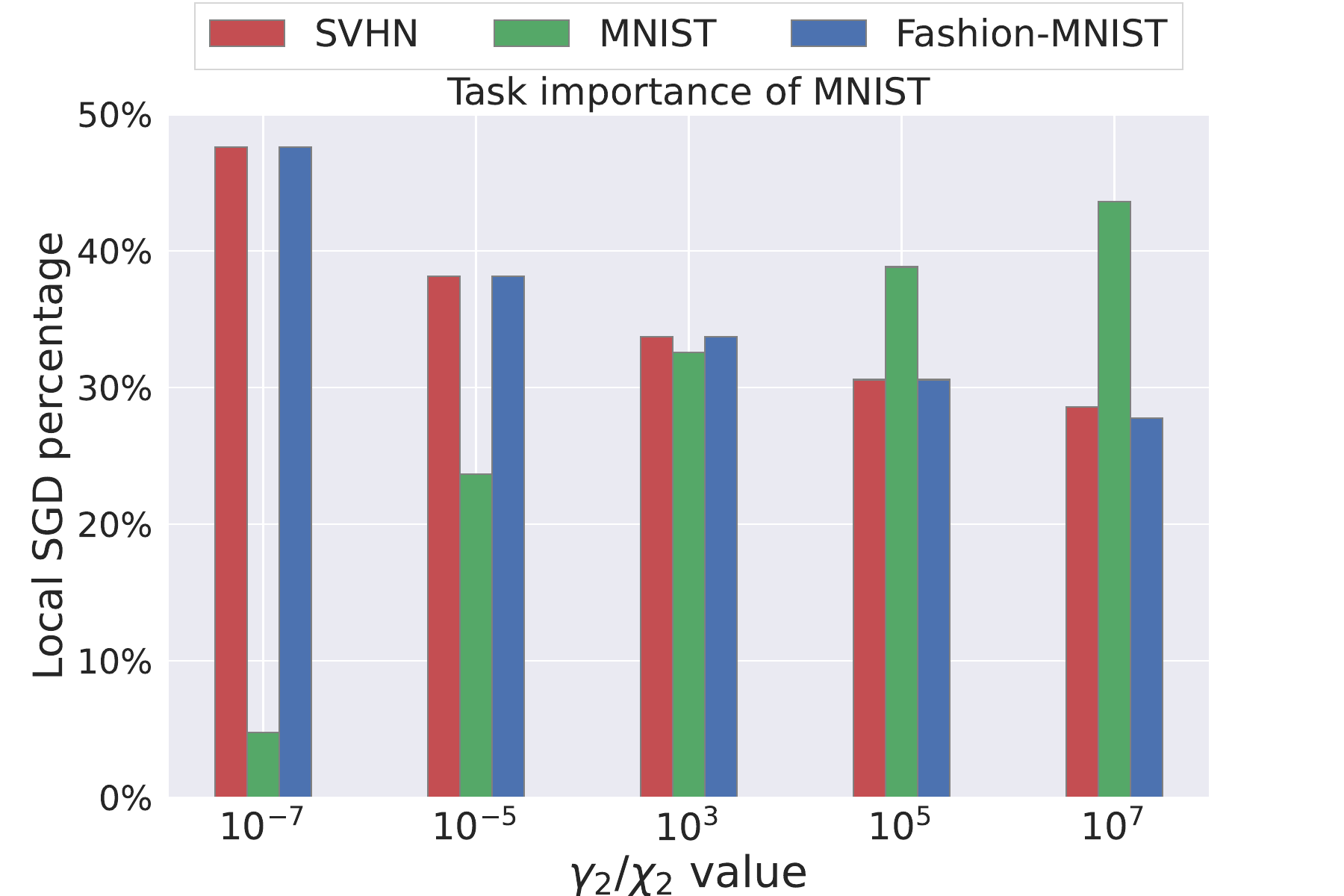}}
    \subfloat[Fashion-MNIST emphasized]{\label{fig:local_SGD_ratio_c}\includegraphics{./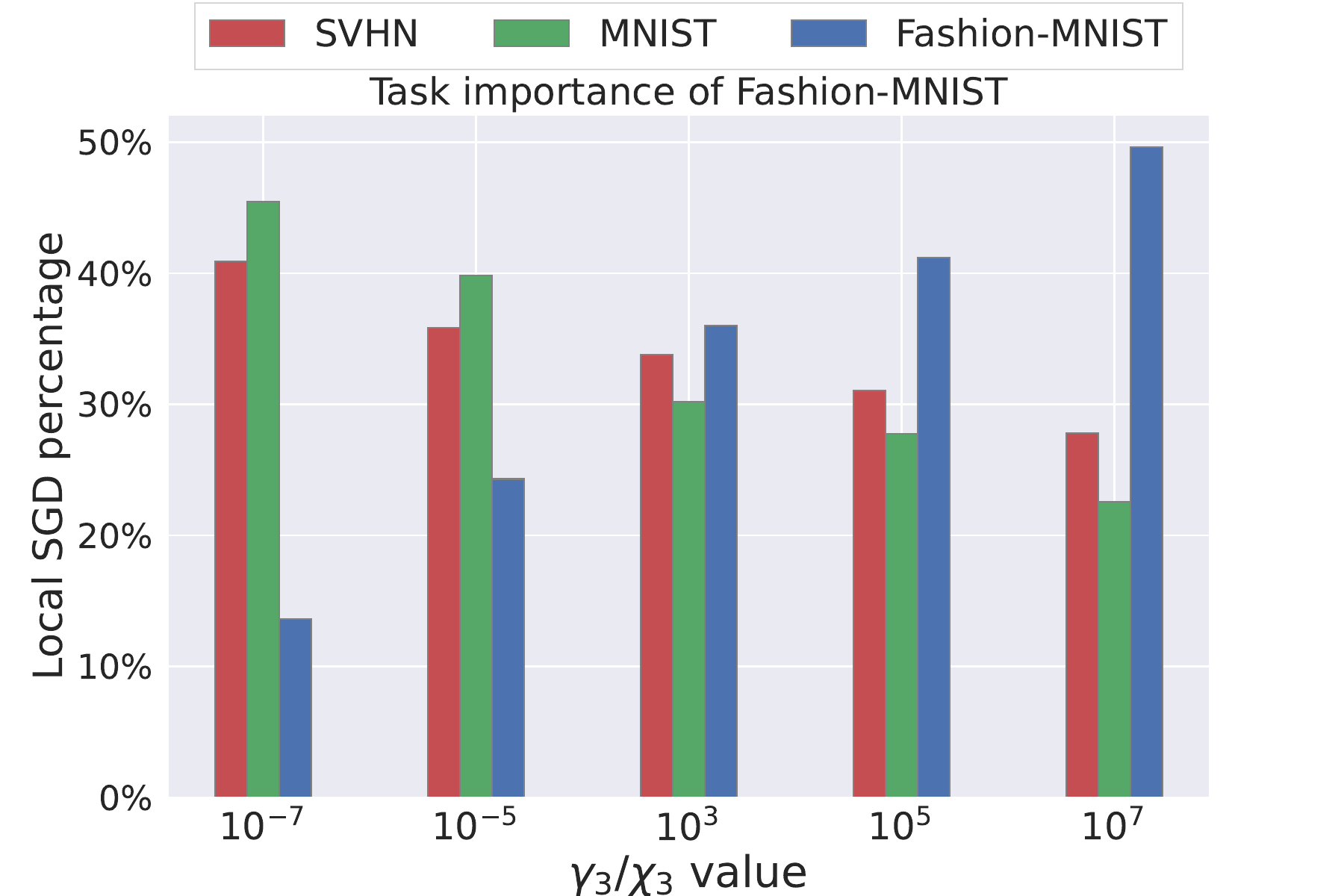}}
\caption{The percentage of local SGD iterations allocated to different tasks as the importance assigned to one task increases. We see that the local SGD percentage increases with the task importance in each case.}
\label{fig:ratio_local_SGD}
\vspace{-6mm}
\end{figure*}

Finally, we study the relationship between performance gain and resource allocation. Since the model performance relies heavily on the number of local SGD, we more closely analyze its behavior in Fig. \ref{fig:ratio_local_SGD}. This figure shows that as we increase the importance of one task (i.e. the value of $\gamma_i/\chi_i$ gets larger), the percentage of local SGD iterations allocated to that task increases, which is one of the reasons behind the performance gain in Fig. \ref{fig:task_importance_a}.

\subsubsection{Impact of concept drift}
Lastly, we study the impact of concept drift on resource allocation among three tasks, captured by term (f) in \eqref{weighted_sum_weight_local_number_updates_concept_drift_integration_y}. The results are shown in Figs. \ref{fig:local_computation_time} and \ref{fig:idle_time}. 

In Fig. \ref{fig:local_computation_time}, we increase the active concept drift for one task and fix that of the other two tasks. For all tasks shown, the local computation time of the task whose active concept drift is increased decreases because of product $y_{i,j}^{\mathsf{AC}, (g)}(t)\Delta_{i,j}^{\mathsf{AC}}(t)$ in term (f) and the definition of  $y_{i,j}^{\mathsf{AC}, (g)}(t)$ in Definition \ref{def:rectangular_functions}. In particular, since higher active concept drift of one task implies more drastic data variations during local computation period of that task, the local computation time of the task has been reduced to avoid the local model being trained on the outdated data resulting in poor model performance. Also, in view of the fixed energy budget $E_i^{\mathsf{B}}$ imposed by \eqref{const:energy_budget}, to have a higher overall model accuracy over all tasks, more resources have been allocated to other two tasks, leading to longer local computation time.

\begin{figure*}[!t]
    \centering
    \setkeys{Gin}{width=0.32\linewidth}
    \subfloat[SVHN]{\label{fig:local_computation_time_a}\includegraphics{./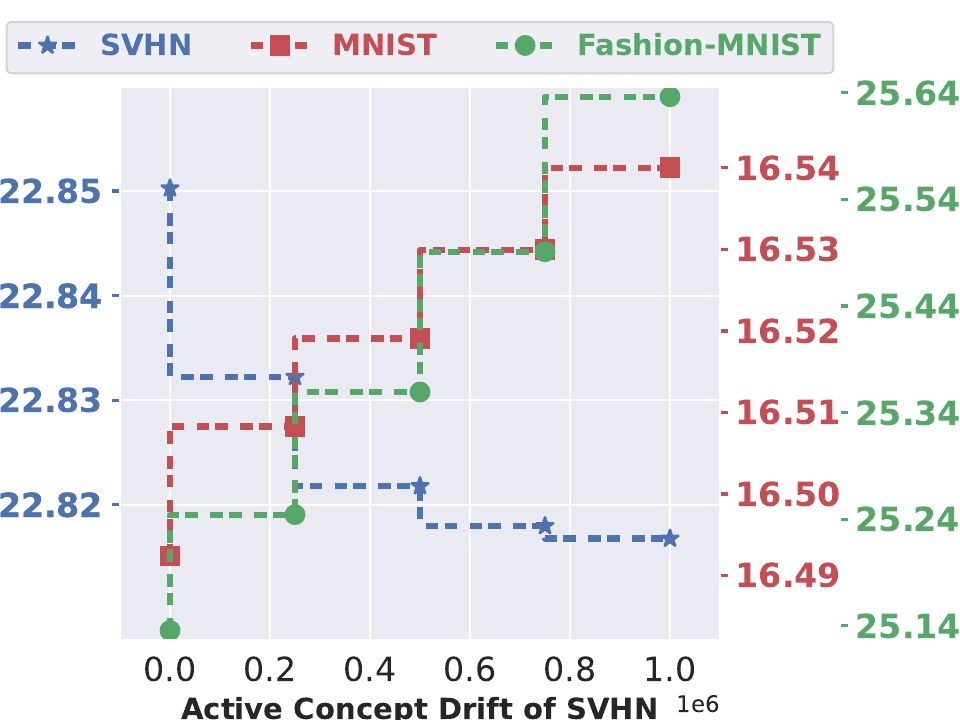}}
    \hspace{0.002\linewidth}
    \subfloat[MNIST]{\label{fig:local_computation_time_b}\includegraphics{./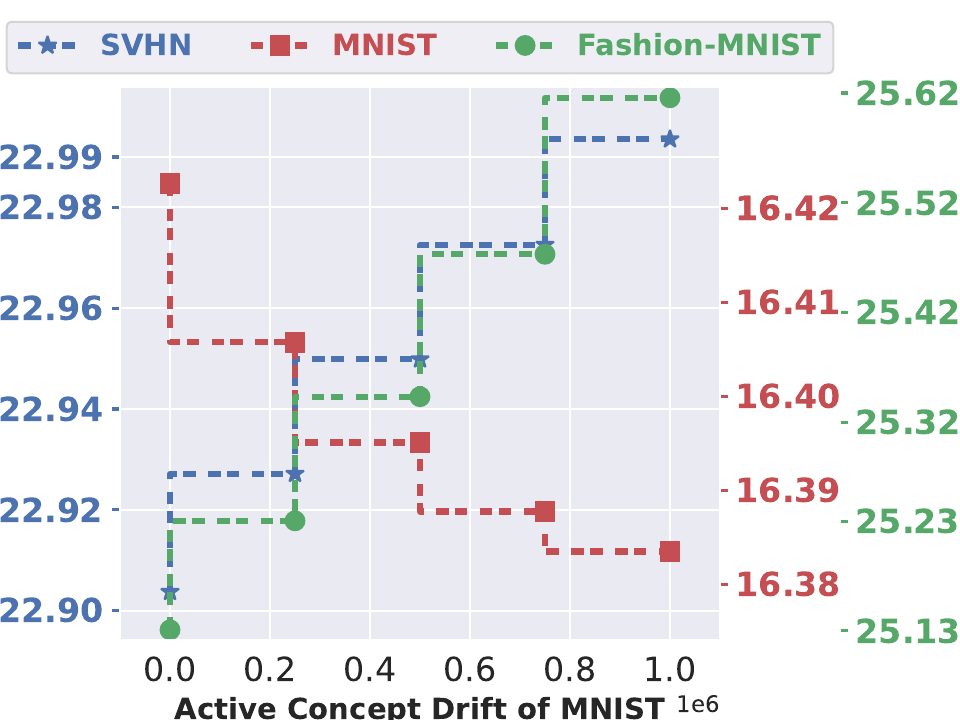}}
    \hspace{0.002\linewidth}
    \subfloat[Fashion-MNIST]{\label{fig:local_computation_time_c}\includegraphics{./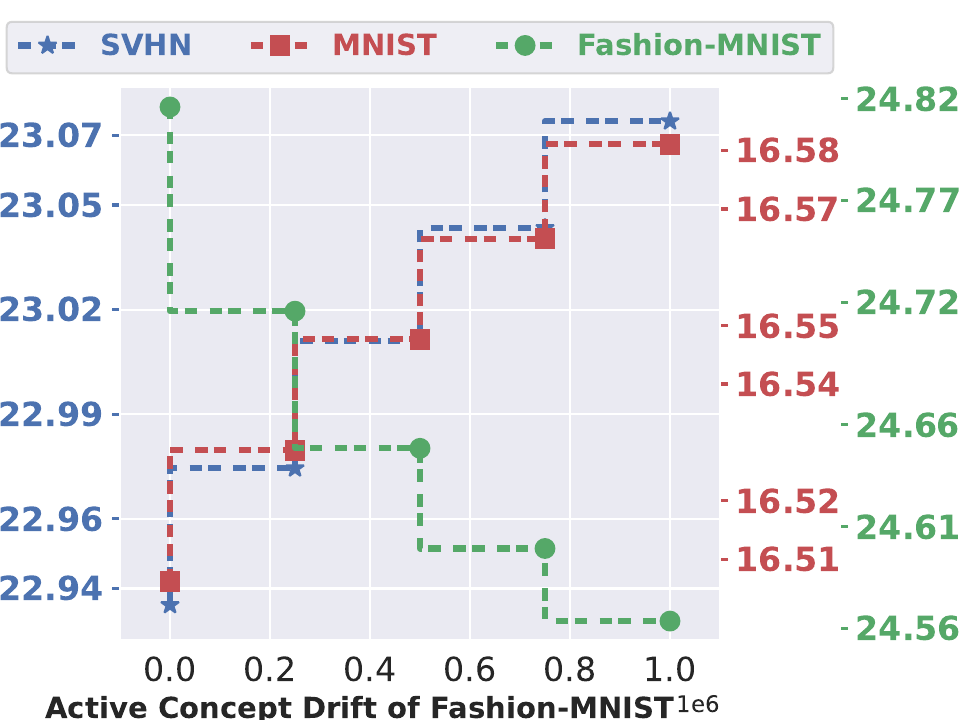}}
\caption{Local computation time of the three tasks under varying active concept drift. The local computation time of each task decreases as its active concept drift increases to keep track of the dynamic data variations.}
\label{fig:local_computation_time}
\vspace{-3mm}
\end{figure*}

\begin{figure*}[!t]
    \centering
    \setkeys{Gin}{width=0.32\linewidth}
    \subfloat[SVHN]{\label{fig:idle_time_a}\includegraphics{./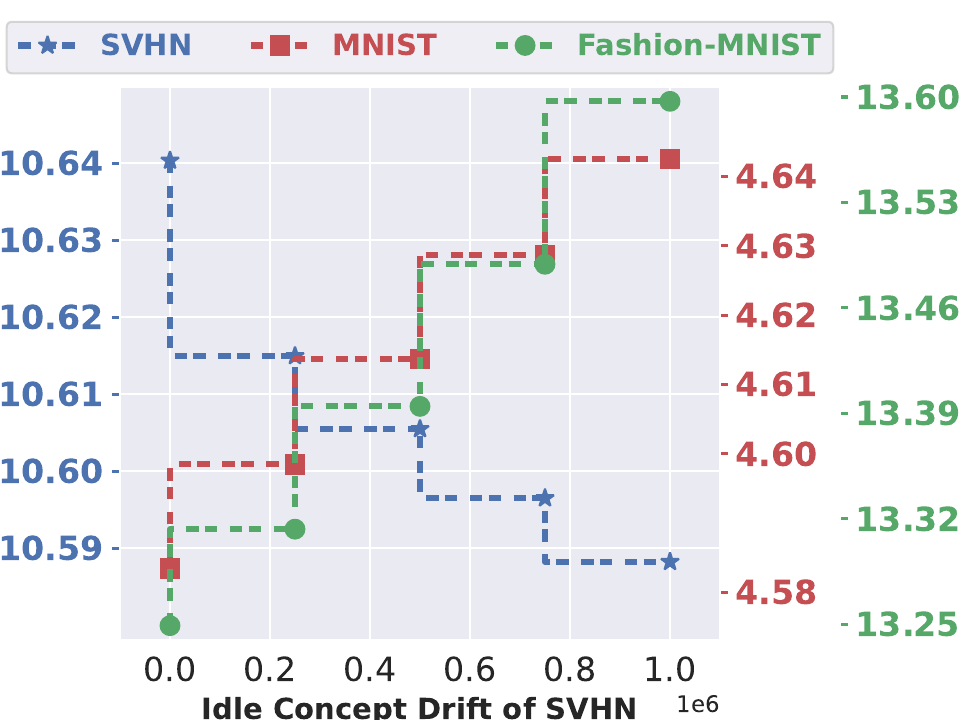}}
    \hspace{0.003\linewidth}
    \subfloat[MNIST]{\label{fig:idle_time_b}\includegraphics{./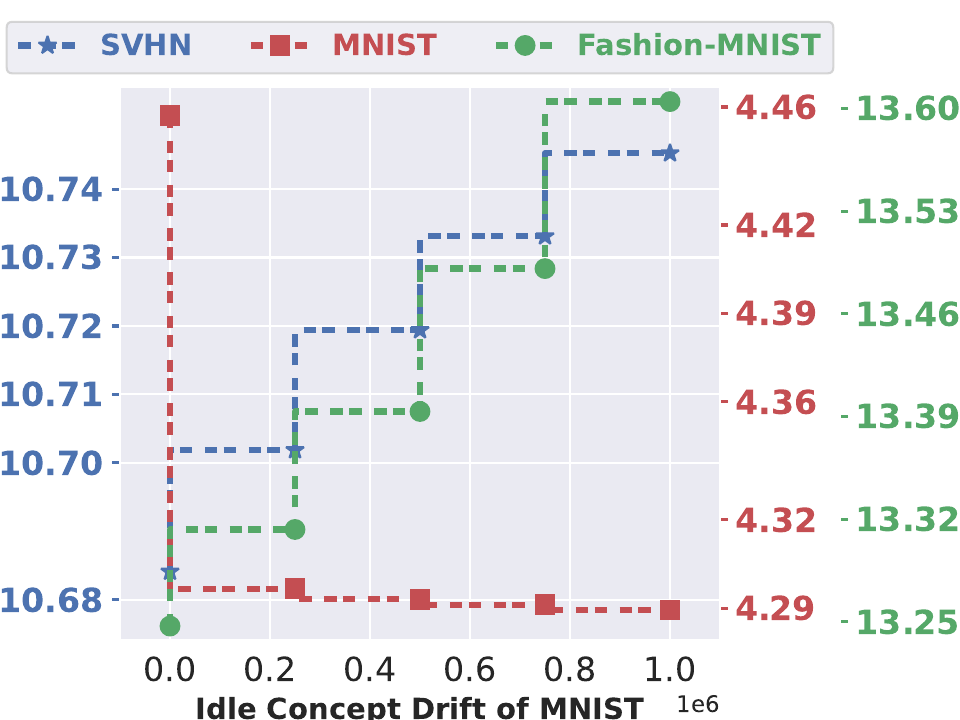}}
    \hspace{0.003\linewidth}
    \subfloat[Fashion-MNIST]{\label{fig:idle_time_c}\includegraphics{./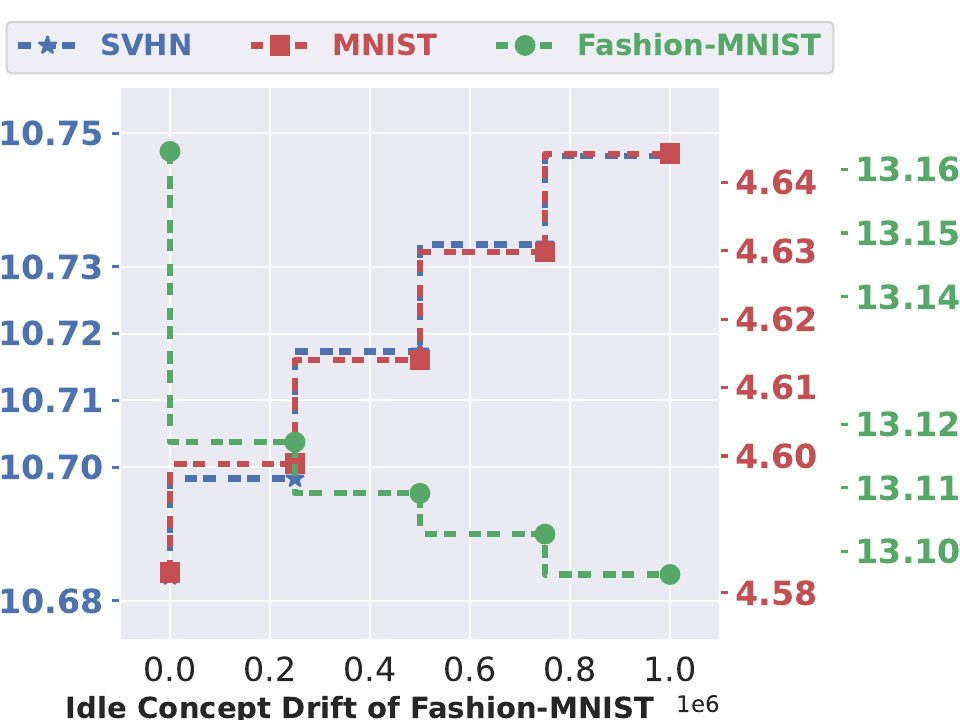}}
\caption{Idle time of the three tasks under varying idle concept drift. The idle time of each task decreases as its idle concept drift increases to keep track of the dynamic data variations.}
\label{fig:idle_time}
\vspace{-7mm}
\end{figure*}

In Fig \ref{fig:idle_time}, we increase the idle concept drift for one task and fix that of the other two tasks. Likewise, for all tasks shown, the idle time of the task whose idle concept drift is increased decreases due to the product  $y_{i,j}^{\mathsf{ID}, (g)}(t)\Delta_{i,j}^{\mathsf{ID}}(t)$ in term (f) and the definition of $y_{i,j}^{\mathsf{ID}, (g)}(t)$ in Definition \ref{def:rectangular_functions}. In particular, higher idle concept drift of one task means more severe data variations during idle period of that task. As a result, the idle time of that task has been decreased to mitigate the adverse impact on the model performance. Since there is the QoE constraint \eqref{const:QoE}, reducing the idle time leads to longer periods of device computation and upstream/downstream communication, which in turn results
in more energy consumption for that task. In view of the fixed energy budget in constraint \eqref{const:energy_budget}, periods of device computation and upstream/downstream communication have been reduced for the other two tasks (i.e. shorter  $T_{i,j}^{\mathsf{L}, (g)}$). Because of the QoE constraint \eqref{const:QoE}, shorter  $T_{i,j}^{\mathsf{L}, (g)}$ will then result in longer idle time for the other two tasks.

\section{Conclusion}
\noindent We introduced {\tt DMA-FL}, a methodology for dynamic/online multi-task asynchronous FL over heterogeneous networks. We introduced a new set of scheduling tensors and rectangular functions using which we carried out the ML convergence analysis of {\tt DMA-FL} capturing dynamic data variations (i.e. concept drift) under arbitrary device participation order. We then formulated resource-aware {\tt DMA-FL} that jointly  optimizes device scheduling and resource allocation aiming to minimize the loss of different ML models trained under {\tt DMA-FL} and devices energy consumption. We solved the problem through integer variable relaxation and successive convex approximations, which provide convergence to the stationary point. Through numerical simulations, we revealed the performance gains  of {\tt DMA-FL} compared to state of the art methods and studied the impact of different network settings on device scheduling and resource allocation.

\bibliographystyle{IEEEtran}
\bibliography{MMMT_asynchronous_FL}

\begin{thebibliography}{10}
\providecommand{\url}[1]{#1}
\csname url@samestyle\endcsname
\providecommand{\newblock}{\relax}
\providecommand{\bibinfo}[2]{#2}
\providecommand{\BIBentrySTDinterwordspacing}{\spaceskip=0pt\relax}
\providecommand{\BIBentryALTinterwordstretchfactor}{4}
\providecommand{\BIBentryALTinterwordspacing}{\spaceskip=\fontdimen2\font plus
\BIBentryALTinterwordstretchfactor\fontdimen3\font minus \fontdimen4\font\relax}
\providecommand{\BIBforeignlanguage}[2]{{%
\expandafter\ifx\csname l@#1\endcsname\relax
\typeout{** WARNING: IEEEtran.bst: No hyphenation pattern has been}%
\typeout{** loaded for the language `#1'. Using the pattern for}%
\typeout{** the default language instead.}%
\else
\language=\csname l@#1\endcsname
\fi
#2}}
\providecommand{\BIBdecl}{\relax}
\BIBdecl

\bibitem{zhang2018synergy}
N.~Zhang, P.~Yang, J.~Ren, D.~Chen, L.~Yu, and X.~Shen, ``Synergy of big data and 5g wireless networks: opportunities, approaches, and challenges,'' \emph{IEEE Wireless Commun.}, vol.~25, no.~1, pp. 12--18, 2018.

\bibitem{hosseinalipour2020federated}
S.~Hosseinalipour, C.~G. Brinton, V.~Aggarwal, H.~Dai, and M.~Chiang, ``From federated to fog learning: Distributed machine learning over heterogeneous wireless networks,'' \emph{IEEE Commun. Mag.}, vol.~58, no.~12, pp. 41--47, 2020.

\bibitem{mcmahan2017communication}
B.~McMahan, E.~Moore, D.~Ramage, S.~Hampson, and B.~A. y~Arcas, ``Communication-efficient learning of deep networks from decentralized data,'' in \emph{Artif. Intell. and Statist.}\hskip 1em plus 0.5em minus 0.4em\relax PMLR, 2017, pp. 1273--1282.

\bibitem{smith2017federated}
V.~Smith, C.-K. Chiang, M.~Sanjabi, and A.~S. Talwalkar, ``Federated multi-task learning,'' \emph{Adv. in Neural Inf. Process. Syst.}, vol.~30, 2017.

\bibitem{nishio2019client}
T.~Nishio and R.~Yonetani, ``Client selection for federated learning with heterogeneous resources in mobile edge,'' in \emph{Int. Conf. on Commun. (ICC)}.\hskip 1em plus 0.5em minus 0.4em\relax IEEE, 2019, pp. 1--7.

\bibitem{10191260}
H.~Mei, G.~Li, J.~Wu, and L.~Zheng, ``Privacy inference-empowered stealthy backdoor attack on federated learning under non-iid scenarios,'' in \emph{Int. Joint Conf. on Neural Netw. (IJCNN)}, 2023, pp. 1--10.

\bibitem{li2022multitentacle}
G.~Li, J.~Wu, S.~Li, W.~Yang, and C.~Li, ``Multitentacle federated learning over software-defined industrial internet of things against adaptive poisoning attacks,'' \emph{IEEE Trans. on Ind. Inform.}, vol.~19, no.~2, pp. 1260--1269, 2022.

\bibitem{bello2018new}
H.~Bello-Salau, A.~Aibinu, A.~Onumanyi, E.~Onwuka, J.~Dukiya, and H.~Ohize, ``New road anomaly detection and characterization algorithm for autonomous vehicles,'' \emph{Appl. Comput. and Inform.}, 2018.

\bibitem{karuppuswamy2000detection}
J.~Karuppuswamy, V.~Selvaraj, M.~M. Ganesh, and E.~L. Hall, ``Detection and avoidance of simulated potholes in autonomous vehicle navigation in an unstructured environment,'' in \emph{Intell. Robot. and Comput. Vision XIX: Algorithms, Techn., and Act. Vision}, vol. 4197.\hskip 1em plus 0.5em minus 0.4em\relax SPIE, 2000, pp. 70--80.

\bibitem{chen2015deepdriving}
C.~Chen, A.~Seff, A.~Kornhauser, and J.~Xiao, ``Deepdriving: Learning affordance for direct perception in autonomous driving,'' in \emph{Proceedings of the IEEE Int. Conf. on Comput. Vision}, 2015, pp. 2722--2730.

\bibitem{xie2019asynchronous}
C.~Xie, S.~Koyejo, and I.~Gupta, ``Asynchronous federated optimization,'' \emph{arXiv preprint arXiv:1903.03934}, 2019.

\bibitem{lian2018asynchronous}
X.~Lian, W.~Zhang, C.~Zhang, and J.~Liu, ``Asynchronous decentralized parallel stochastic gradient descent,'' in \emph{Int. Conf. on Mach. Learn.}\hskip 1em plus 0.5em minus 0.4em\relax PMLR, 2018, pp. 3043--3052.

\bibitem{zheng2017asynchronous}
S.~Zheng, Q.~Meng, T.~Wang, W.~Chen, N.~Yu, Z.-M. Ma, and T.-Y. Liu, ``Asynchronous stochastic gradient descent with delay compensation,'' in \emph{Int. Conf. on Mach. Learn.}\hskip 1em plus 0.5em minus 0.4em\relax PMLR, 2017, pp. 4120--4129.

\bibitem{chen2020asynchronous}
Y.~Chen, Y.~Ning, M.~Slawski, and H.~Rangwala, ``Asynchronous online federated learning for edge devices with non-iid data,'' in \emph{Int. Conf. on Big Data (Big Data)}.\hskip 1em plus 0.5em minus 0.4em\relax IEEE, 2020, pp. 15--24.

\bibitem{imteaj2021survey}
A.~Imteaj, U.~Thakker, S.~Wang, J.~Li, and M.~H. Amini, ``A survey on federated learning for resource-constrained iot devices,'' \emph{IEEE Internet of Things J.}, vol.~9, no.~1, pp. 1--24, 2021.

\bibitem{abad2020hierarchical}
M.~S.~H. Abad, E.~Ozfatura, D.~Gunduz, and O.~Ercetin, ``Hierarchical federated learning across heterogeneous cellular networks,'' in \emph{Int. Conf. on Acoust., Speech and Signal Process. (ICASSP)}.\hskip 1em plus 0.5em minus 0.4em\relax IEEE, 2020, pp. 8866--8870.

\bibitem{shi2020joint}
W.~Shi, S.~Zhou, Z.~Niu, M.~Jiang, and L.~Geng, ``Joint device scheduling and resource allocation for latency constrained wireless federated learning,'' \emph{IEEE Trans. on Wireless Commun.}, vol.~20, no.~1, pp. 453--467, 2020.

\bibitem{taik2021data}
A.~Ta{\"\i}k, Z.~Mlika, and S.~Cherkaoui, ``Data-aware device scheduling for federated edge learning,'' \emph{IEEE Trans. on Cogn. Commun. and Netw.}, vol.~8, no.~1, pp. 408--421, 2021.

\bibitem{amiri2021convergence}
M.~M. Amiri, D.~G{\"u}nd{\"u}z, S.~R. Kulkarni, and H.~V. Poor, ``Convergence of update aware device scheduling for federated learning at the wireless edge,'' \emph{IEEE Trans. on Wireless Commun.}, vol.~20, no.~6, pp. 3643--3658, 2021.

\bibitem{sattler2020clustered}
F.~Sattler, K.-R. M{\"u}ller, and W.~Samek, ``Clustered federated learning: Model-agnostic distributed multitask optimization under privacy constraints,'' \emph{IEEE Trans. on Neural Netw. and Learn. Syst.}, vol.~32, no.~8, pp. 3710--3722, 2020.

\bibitem{zhou2022efficient}
C.~Zhou, J.~Liu, J.~Jia, J.~Zhou, Y.~Zhou, H.~Dai, and D.~Dou, ``Efficient device scheduling with multi-job federated learning,'' in \emph{Proc. of the AAAI Conf. on Artif. Intell.}, vol.~36, no.~9, 2022, pp. 9971--9979.

\bibitem{nguyen2020toward}
M.~N. Nguyen, N.~H. Tran, Y.~K. Tun, Z.~Han, and C.~S. Hong, ``Toward multiple federated learning services resource sharing in mobile edge networks,'' \emph{arXiv preprint arXiv:2011.12469}, 2020.

\bibitem{ma2021fedsa}
Q.~Ma, Y.~Xu, H.~Xu, Z.~Jiang, L.~Huang, and H.~Huang, ``Fedsa: A semi-asynchronous federated learning mechanism in heterogeneous edge computing,'' \emph{IEEE J. on Sel. Areas in Commun.}, vol.~39, no.~12, pp. 3654--3672, 2021.

\bibitem{chen2019communication}
Y.~Chen, X.~Sun, and Y.~Jin, ``Communication-efficient federated deep learning with layerwise asynchronous model update and temporally weighted aggregation,'' \emph{IEEE Trans. on Neural Netw. and Learn. Syst.}, vol.~31, no.~10, pp. 4229--4238, 2019.

\bibitem{gauthier2021resource}
F.~Gauthier, V.~C. Gogineni, S.~Werner, Y.-F. Huang, and A.~Kuh, ``Resource-aware asynchronous online federated learning for nonlinear regression,'' \emph{arXiv preprint arXiv:2111.13931}, 2021.

\bibitem{giorgas2020online}
K.~Giorgas and I.~Varlamis, ``Online federated learning with imbalanced class distribution,'' in \emph{24th Pan-Hellenic Conf. on Inform.}, 2020, pp. 91--95.

\bibitem{li2019online}
R.~Li, F.~Ma, W.~Jiang, and J.~Gao, ``Online federated multitask learning,'' in \emph{2019 IEEE Int. Conf. on Big Data (Big Data)}.\hskip 1em plus 0.5em minus 0.4em\relax IEEE, 2019, pp. 215--220.

\bibitem{hosseinalipour2023parallel}
S.~Hosseinalipour, S.~Wang, N.~Michelusi, V.~Aggarwal, C.~G. Brinton, D.~J. Love, and M.~Chiang, ``Parallel successive learning for dynamic distributed model training over heterogeneous wireless networks,'' \emph{IEEE/ACM Trans. on Netw.}, 2023.

\bibitem{dynamicFedl}
E.~Rizk, S.~Vlaski, and A.~H. Sayed, ``Dynamic federated learning,'' in \emph{IEEE 21st Int. Workshop on Signal Process. Advances in Wireless Commun. (SPAWC)}, 2020, pp. 1--5.

\bibitem{durmus2021federated}
A.~E. Durmus, Z.~Yue, M.~Ramon, M.~Matthew, W.~Paul, and S.~Venkatesh, ``Federated learning based on dynamic regularization,'' in \emph{Int. Conf. on Learn. Rep.}, 2021.

\bibitem{fallah2020personalized}
A.~Fallah, A.~Mokhtari, and A.~Ozdaglar, ``Personalized federated learning with theoretical guarantees: A model-agnostic meta-learning approach,'' \emph{Adv. in Neural Inform. Process. Syst.}, vol.~33, pp. 3557--3568, 2020.

\bibitem{9562522}
F.~P.-C. Lin, S.~Hosseinalipour, S.~S. Azam, C.~G. Brinton, and N.~Michelusi, ``Semi-decentralized federated learning with cooperative d2d local model aggregations,'' \emph{IEEE J. on Sel. Areas in Commun.}, vol.~39, no.~12, pp. 3851--3869, 2021.

\bibitem{9488906}
S.~Wang, M.~Lee, S.~Hosseinalipour, R.~Morabito, M.~Chiang, and C.~G. Brinton, ``Device sampling for heterogeneous federated learning: Theory, algorithms, and implementation,'' in \emph{IEEE Conf. on Comput. Commun. (INFOCOM)}, 2021, pp. 1--10.

\bibitem{richards2021learning}
D.~Richards and M.~Rabbat, ``Learning with gradient descent and weakly convex losses,'' in \emph{Int. Conf. on Artif. Intell. and Statist.}\hskip 1em plus 0.5em minus 0.4em\relax PMLR, 2021, pp. 1990--1998.

\bibitem{liu2021first}
M.~Liu, H.~Rafique, Q.~Lin, and T.~Yang, ``First-order convergence theory for weakly-convex-weakly-concave min-max problems,'' \emph{The J. of Mach. Learn. Res.}, vol.~22, no.~1, pp. 7651--7684, 2021.

\bibitem{goujon2023learning}
A.~Goujon, S.~Neumayer, and M.~Unser, ``Learning weakly convex regularizers for convergent image-reconstruction algorithms,'' \emph{arXiv preprint arXiv:2308.10542}, 2023.

\bibitem{zhu2023distributionally}
L.~Zhu, M.~G{\"u}rb{\"u}zbalaban, and A.~Ruszczy{\'n}ski, ``Distributionally robust learning with weakly convex losses: Convergence rates and finite-sample guarantees,'' \emph{arXiv preprint arXiv:2301.06619}, 2023.

\bibitem{chang2023asynchronous}
Z.-L. Chang, S.~Hosseinalipour, M.~Chiang, and C.~G. Brinton, ``Asynchronous multi-model dynamic federated learning over wireless networks: Theory, modeling, and optimization,'' \url{https://arxiv.org/pdf/2305.13503.pdf}, 2023.

\bibitem{krizhevsky2012imagenet}
A.~Krizhevsky, I.~Sutskever, and G.~E. Hinton, ``Imagenet classification with deep convolutional neural networks,'' \emph{Adv. in Neural Inform. Process. Syst.}, vol.~25, 2012.

\bibitem{tran2019federated}
N.~H. Tran, W.~Bao, A.~Zomaya, M.~N. Nguyen, and C.~S. Hong, ``Federated learning over wireless networks: Optimization model design and analysis,'' in \emph{IEEE Conf. on Comput. Commun. (INFOCOM)}.\hskip 1em plus 0.5em minus 0.4em\relax IEEE, 2019, pp. 1387--1395.

\bibitem{liu2019stochastic}
A.~Liu, V.~K. Lau, and B.~Kananian, ``Stochastic successive convex approximation for non-convex constrained stochastic optimization,'' \emph{IEEE Trans. on Signal Process.}, vol.~67, no.~16, pp. 4189--4203, 2019.

\bibitem{scutari2016parallel}
G.~Scutari, F.~Facchinei, and L.~Lampariello, ``Parallel and distributed methods for constrained nonconvex optimization—part i: Theory,'' \emph{IEEE Trans. on Signal Process.}, vol.~65, no.~8, pp. 1929--1944, 2016.

\bibitem{kaleva2012weighted}
J.~Kaleva, A.~T{\"o}lli, and M.~Juntti, ``Weighted sum rate maximization for interfering broadcast channel via successive convex approximation,'' in \emph{Global Commun. Conf. (GLOBECOM)}.\hskip 1em plus 0.5em minus 0.4em\relax IEEE, 2012, pp. 3838--3843.

\bibitem{tian2022successive}
C.~Tian, A.~Liu, G.~Huang, and W.~Luo, ``Successive convex approximation based off-policy optimization for constrained reinforcement learning,'' \emph{IEEE Trans. on Signal Process.}, vol.~70, pp. 1609--1624, 2022.

\bibitem{parikh2014proximal}
N.~Parikh and S.~Boyd, ``Proximal algorithms,'' \emph{Found. and trends{\textregistered} in Optim.}, vol.~1, no.~3, pp. 127--239, 2014.

\bibitem{bertsekas2015parallel}
D.~Bertsekas and J.~Tsitsiklis, \emph{Parallel and distributed computation: numerical methods}.\hskip 1em plus 0.5em minus 0.4em\relax Athena Scientific, 2015.

\bibitem{ganguly2023multi}
B.~Ganguly, S.~Hosseinalipour, K.~T. Kim, C.~G. Brinton, V.~Aggarwal, D.~J. Love, and M.~Chiang, ``Multi-edge server-assisted dynamic federated learning with an optimized floating aggregation point,'' \emph{IEEE Trans. on Netw.}, 2023.

\bibitem{tse2005fundamentals}
D.~Tse and P.~Viswanath, \emph{Fundamentals of wireless communication}.\hskip 1em plus 0.5em minus 0.4em\relax Cambridge university press, 2005.

\bibitem{li2022federated}
Q.~Li, Y.~Diao, Q.~Chen, and B.~He, ``Federated learning on non-iid data silos: An experimental study,'' in \emph{Int. Conf. on Data Eng. (ICDE)}.\hskip 1em plus 0.5em minus 0.4em\relax IEEE, 2022, pp. 965--978.

\bibitem{han2023federated}
D.-J. Han, D.-Y. Kim, M.~Choi, D.~Nickel, J.~Moon, M.~Chiang, and C.~G. Brinton, ``Federated split learning with joint personalization-generalization for inference-stage optimization in wireless edge networks,'' \emph{IEEE Trans. on Mob. Comput.}, 2023.

\end{thebibliography}
\newpage
\onecolumn
\appendices
\section{Proof of the convergence bound (\ref{weighted_sum_weight_local_number_updates_concept_drift_integration_y})} \label{apx:convergence_bound}
This definition of smoothness is equivalent to the following
\begin{equation}
    F_{i,j}(\mathbf{w}) \leq F_{i,j}(\mathbf{w}') + \innerproduct{\nabla F_{i,j}(\mathbf{w}')}{\mathbf{w} - \mathbf{w}'} + \frac{\beta}{2}\twonormsquare{\mathbf{w} - \mathbf{w}'} \quad \forall \mathbf{w}, \mathbf{w}' \in \mathbb{R}^{M_{i,j}} 
\end{equation}
Because $F_j(\cdot, \cdot)$ is $\beta$-smoothness, for any $\ell$, we have
\begin{align}  
    &\mathsf{E}\left[ F_j(\mathbf{w}_{i,j}^{\ell,(g)}) \right] \nonumber \\
    &\leq \mathsf{E}\left[ F_j(\mathbf{w}_{i,j}^{\ell-1,(g)}) \right] + \mathsf{E}\left[\innerproduct{\nabla F_j(\mathbf{w}_{i,j}^{\ell-1,(g)})}{\mathbf{w}_{i,j}^{\mathsf{F},(g)} - \mathbf{w}_{i,j}^{\ell-1,(g)}}\right] +\frac{\beta}{2} \mathsf{E}\left[ \twonormsquare{\mathbf{w}_{i,j}^{\ell,(g)} - \mathbf{w}_{i,j}^{\ell-1,(g)}}  \right]\nonumber \\
    & \leq \mathsf{E}\left[ F_j(\mathbf{w}_{i,j}^{\ell-1,(g)}) \right] - \eta_{j}^{(g)} \underbrace{\mathsf{E}\left[\innerproduct{\nabla F_j(\mathbf{w}_{i,j}^{\ell-1,(g)})}{ \nabla F_{i,j}^{\mathsf{R}}(\mathbf{w}_{i,j}^{\ell-1,(g)}, \mathcal{B}_{i,j}^{\ell-1,(g)})}\right]}_{(a)} \label{start_no_surrogate} \\
    &+ \frac{\beta}{2}\left( \eta_{j}^{(g)} \right)^2 \underbrace{\mathsf{E}\left[ \twonormsquare{\nabla F_{i,j}^{\mathsf{R}}(\mathbf{w}_{i,j}^{\ell-1,(g)}, \mathcal{B}_{i,j}^{\ell-1,(g)})}  \right]}_{(b)} \nonumber 
\end{align} 
For any two real valued vectors $\bm{a}$ and $\bm{b}$ with the same length, we have $2\innerproduct{\bm{a}}{\bm{b}} = \twonormsquare{\bm{a}} + \twonormsquare{\bm{b}} - \twonormsquare{\bm{a}-\bm{b}}$. Using this fact, we can rewrite $(a)$-term as follows.
\begin{equation}
    \begin{aligned} \label{expanded_a_term}
    &\mathsf{E}\left[\innerproduct{\nabla F_j(\mathbf{w}_{i,j}^{\ell-1,(g)})}{ \nabla F_{i,j}^{\mathsf{R}}(\mathbf{w}_{i,j}^{\ell-1,(g)}, \mathcal{B}_{i,j}^{\ell-1,(g)})}\right] \\
    &= \mathsf{E}\left[\innerproduct{\nabla F_j(\mathbf{w}_{i,j}^{\ell-1,(g)})}{ \nabla F_{i,j}^{\mathsf{R}}(\mathbf{w}_{i,j}^{\ell-1,(g)}, \mathcal{D}_{i,j}^{\ell-1})}\right] \\
    &= \frac{1}{2}\mathsf{E}\left[\twonormsquare{\nabla F_j(\mathbf{w}_{i,j}^{\ell-1,(g)})}\right] + \frac{1}{2}\mathsf{E}\left[\twonormsquare{\nabla F_{i,j}^{\mathsf{R}}(\mathbf{w}_{i,j}^{\ell-1,(g)}, \mathcal{D}_{i,j}^{\ell-1})}\right]  \\
    &- \frac{1}{2}\mathsf{E}\left[\twonormsquare{\nabla F_j(\mathbf{w}_{i,j}^{\ell-1,(g)}) - \nabla F_{i,j}^{\mathsf{R}}(\mathbf{w}_{i,j}^{\ell-1,(g)}, \mathcal{D}_{i,j}^{\ell-1})}\right] 
\end{aligned}
\end{equation}
Now we have to bound $(b)$ in (\ref{start_no_surrogate}). 
\begin{align} 
    & \mathsf{E}\left[ \twonormsquare{\nabla F_{i,j}^{\mathsf{R}}(\mathbf{w}_{i,j}^{\ell-1,(g)}, \mathcal{B}_{i,j}^{\ell-1,(g)})} \right]  \label{surrogate_b_term}\\
    & = \mathsf{E}\left[ \left\lVert\frac{1}{B_{i,j}^{\ell-1,(g)}}\sum_{d \in \mathcal{B}_{i,j}^{\ell-1,(g)}}\nabla L_{i,j}^{(g)}\big(\mathbf{w}_{i,j}^{\ell-1,(g)},d\big)\right\rVert^2 \right]\nonumber \\
    & = \mathsf{E}\left[\left\lVert\frac{\sum\limits_{d \in \mathcal{B}_{i,j}^{\ell-1,(g)}} \nabla L_{i,j}^{(g)}(\mathbf{w}_{i,j}^{\ell-1,(g)},d) }{B_{i,j}^{\ell-1,(g)}} - \frac{\sum\limits_{d \in \mathcal{D}_{i,j}^{\ell-1}} \nabla L_{i,j}^{(g)}(\mathbf{w}_{i,j}^{\ell-1,(g)},d) }{D_{i,j}^{\ell-1}} + \frac{\sum\limits_{d \in \mathcal{D}_{i,j}^{\ell-1}} \nabla L_{i,j}^{(g)}(\mathbf{w}_{i,j}^{\ell-1,(g)},d) }{D_{i,j}^{\ell-1}}\right\rVert^2 \right] \nonumber \\
    & \leq 2\mathsf{E}\left[ \left\lVert\frac{\sum_{d \in \mathcal{B}_{i,j}^{\ell-1,(g)}} \nabla L_{i,j}^{(g)}(\mathbf{w}_{i,j}^{\ell-1,(g)},d) }{B_{i,j}^{\ell-1,(g)}} - \frac{\sum_{d \in \mathcal{D}_{i,j}^{\ell-1}} \nabla L_{i,j}^{(g)}(\mathbf{w}_{i,j}^{\ell-1,(g)},d) }{D_{i,j}^{\ell-1}}\right\rVert^2 \right] \nonumber\\
    &+ 2 \mathsf{E}\left[ \left\lVert\frac{\sum_{d \in \mathcal{D}_{i,j}^{\ell-1}} \nabla L_{i,j}^{(g)}(\mathbf{w}_{i,j}^{\ell-1,(g)},d) }{D_{i,j}^{\ell-1}}\right\rVert^2 \right]\nonumber\\
    & \stackrel{(i)}{=} 2(1-\frac{B_{i,j}^{\ell-1,(g)}}{D_{i,j}^{\ell-1}})\frac{S_{i,j}^{\ell-1,(g)}}{B_{i,j}^{\ell-1,(g)}} +  2 \mathsf{E}\left[ \left\lVert\frac{\sum_{d \in \mathcal{D}_{i,j}^{\ell-1}} \nabla L_{i,j}^{(g)}(\mathbf{w}_{i,j}^{\ell-1,(g)},d) }{D_{i,j}^{\ell-1}}\right\rVert^2 \right]\nonumber \\
    & = 2(1-\frac{B_{i,j}^{\ell-1,(g)}}{D_{i,j}^{\ell-1}})\frac{S_{i,j}^{\ell-1,(g)}}{B_{i,j}^{\ell-1,(g)}} + 2 \mathsf{E}\left[ \left\lVert\nabla F_{i,j}^{\mathsf{R}}(\mathbf{w}_{i,j}^{\ell-1,(g)}, \mathcal{D}_{i,j}^{\ell-1})\right\rVert^2 \right] \nonumber
\end{align}
In $(i)$, we make use of the variance of sample mean in page 53 of the book ``Sampling: Design and Analysis''. $S_{i,j}^{\ell-1,(g)}$ denotes the variance of the gradients of regularized loss function evaluated at the particular local descent iteration $\ell-1$ in the particular local training period $g$ for the parameter $\mathbf{w}_{i,j}^{\ell-1,(g)}$. We can further relate the variance of the gradient to sample variance of data using the local data variability as follows
\begin{align} 
    &S_{i,j}^{\ell-1,(g)} = \frac{1}{D_{i,j}^{\ell-1}-1} \sum_{d \in \mathcal{D}_{i,j}^{\ell-1}} \left\lVert\nabla L_{i,j}^{(g)}(\mathbf{w}_{i,j}^{\ell-1,(g)},d) - \sum_{d' \in \mathcal{D}_{i,j}^{\ell-1}}\frac{\nabla L_{i,j}^{(g)}(\mathbf{w}_{i,j}^{\ell-1,(g)},d')}{D_{i,j}^{\ell-1}}\right\rVert^2\label{sample_variance_gradient_data_asynchronous_no_surrogate} \\
    & = \frac{1}{\left( D_{i,j}^{\ell-1}-1 \right) \left( D_{i,j}^{\ell-1} \right)^2} \sum_{d \in \mathcal{D}_{i,j}^{\ell-1}} \left\lVert  D_{i,j}^{\ell-1}\nabla L_{i,j}^{(g)}(\mathbf{w}_{i,j}^{\ell-1,(g)},d) - \sum_{d' \in \mathcal{D}_{i,j}^{\ell-1}}\nabla L_{i,j}^{(g)}(\mathbf{w}_{i,j}^{\ell-1,(g)},d')\right\rVert^2 \nonumber\\
    & \stackrel{(i)}{\leq} \frac{D_{i,j}^{\ell-1}-1}{\left( D_{i,j}^{\ell-1}-1 \right) \left( D_{i,j}^{\ell-1} \right)^2} \sum_{d \in \mathcal{D}_{i,j}^{\ell-1}}\sum_{d' \in \mathcal{D}_{i,j}^{\ell-1}} \left\lVert\nabla L_{i,j}^{(g)}(\mathbf{w}_{i,j}^{\ell-1,(g)},d) - \nabla L_{i,j}^{(g)}(\mathbf{w}_{i,j}^{\ell-1,(g)},d')\right\rVert^2 \nonumber\\
    & = \frac{\left( D_{i,j}^{\ell-1}-1 \right) \Theta^2}{\left( D_{i,j}^{\ell-1}-1 \right) \left( D_{i,j}^{\ell-1} \right)^2}  \sum_{d \in \mathcal{D}_{i,j}^{\ell-1}}\sum_{d' \in \mathcal{D}_{i,j}^{\ell-1}} \twonormsquare{d - d'} \nonumber  \\
    & = \frac{\left( D_{i,j}^{\ell-1}-1 \right) \Theta^2}{\left( D_{i,j}^{\ell-1}-1 \right) \left( D_{i,j}^{\ell-1} \right)^2}  \sum_{d \in \mathcal{D}_{i,j}^{\ell-1}}\sum_{d' \in \mathcal{D}_{i,j}^{\ell-1}} \twonormsquare{d - d' - \lambda_{i,j}^{\ell-1,(g)}+\lambda_{i,j}^{\ell-1,(g)}} \nonumber \\
    & = \frac{\left( D_{i,j}^{\ell-1}-1 \right) \Theta^2}{\left( D_{i,j}^{\ell-1} \right)^2} \sum_{d \in \mathcal{D}_{i,j}^{\ell-1}}\sum_{d' \in \mathcal{D}_{i,j}^{\ell-1}} \frac{\twonormsquare{d - \lambda_{i,j}^{\ell-1,(g)}} + \twonormsquare{d' - \lambda_{i,j}^{\ell-1,(g)}} + 2 \innerproduct{d - \lambda_{i,j}^{\ell-1,(g)}}{d' - \lambda_{i,j}^{\ell-1,(g)}}  }{D_{i,j}^{\ell-1}-1} \nonumber\\
    & \stackrel{(ii)}{=} \frac{\left( D_{i,j}^{\ell-1}-1 \right) \Theta^2}{\left( D_{i,j}^{\ell-1} \right)^2} \sum_{d \in \mathcal{D}_{i,j}^{\ell-1}}\sum_{d' \in \mathcal{D}_{i,j}^{\ell-1}} \frac{\twonormsquare{d - \lambda_{i,j}^{\ell-1,(g)}} + \twonormsquare{d' - \lambda_{i,j}^{\ell-1,(g)}}  }{D_{i,j}^{\ell-1}-1} \nonumber \\
    & = \frac{\left( D_{i,j}^{\ell-1}-1 \right) \Theta^2}{\left( D_{i,j}^{\ell-1} \right)^2}  \frac{D_{i,j}^{\ell-1}\sum_{d \in \mathcal{D}_{i,j}^{\ell-1}}\twonormsquare{d - \lambda_{i,j}^{\ell-1,(g)}} + D_{i,j}^{\ell-1}\sum_{d' \in \mathcal{D}_{i,j}^{\ell-1}}\twonormsquare{d' - \lambda_{i,j}^{\ell-1,(g)}}  }{D_{i,j}^{\ell-1}-1} \nonumber \\
    & = \frac{2\left( D_{i,j}^{\ell-1}-1 \right) \Theta^2}{D_{i,j}^{\ell-1}} \tilde{S}_{i,j}^{\ell-1,(g)} \nonumber
\end{align} 
Plugging (\ref{sample_variance_gradient_data_asynchronous_no_surrogate}) into (\ref{surrogate_b_term}), we have
\begin{equation}\label{square_no_surrogate}
\begin{aligned}
    &\mathsf{E}\left[ \twonormsquare{\nabla F_{i,j}^{\mathsf{R}}(\mathbf{w}_{i,j}^{\ell-1,(g)}, \mathcal{B}_{i,j}^{\ell-1,(g)})} \right] \\
    &\leq  4(1-\frac{B_{i,j}^{\ell-1,(g)}}{D_{i,j}^{\ell-1}})\frac{\left( D_{i,j}^{\ell-1}-1 \right) \Theta^2}{B_{i,j}^{\ell-1,(g)}D_{i,j}^{\ell-1}}\tilde{S}_{i,j}^{\ell-1,(g)} + 2 \mathsf{E}\left[ \twonormsquare{\nabla F_{i,j}^{\mathsf{R}}(\mathbf{w}_{i,j}^{\ell-1,(g)}, \mathcal{D}_{i,j}^{\ell-1})} \right] 
\end{aligned}
\end{equation}
Plugging (\ref{expanded_a_term}) and (\ref{square_no_surrogate}) into (\ref{start_no_surrogate}), we have
\begin{align}
    &\mathsf{E}\left[ F_j(\mathbf{w}_{i,j}^{\ell,(g)}) \right] \nonumber\\
    & \leq \mathsf{E}\left[ F_j(\mathbf{w}_{i,j}^{\ell-1,(g)}) \right] - \frac{\eta_{j}^{(g)}}{2}\mathsf{E}\left[\twonormsquare{\nabla F_j(\mathbf{w}_{i,j}^{\ell-1,(g)})}\right] \nonumber\\
    &+ \frac{\eta_{j}^{(g)}}{2}\underbrace{\mathsf{E}\left[\twonormsquare{\nabla F_j(\mathbf{w}_{i,j}^{\ell-1,(g)}) - \nabla F_{i,j}^{\mathsf{R}}(\mathbf{w}_{i,j}^{\ell-1,(g)}, \mathcal{D}_{i,j}^{\ell-1})}\right]}_{(a)} \label{global_local_gradient_loss_different} \\
    & + \left( \beta \left( \eta_{j}^{(g)} \right)^2 - \frac{\eta_{j}^{(g)}}{2} \right) \mathsf{E}\left[\twonormsquare{\nabla F_{i,j}^{\mathsf{R}}(\mathbf{w}_{i,j}^{\ell-1,(g)}, \mathcal{D}_{i,j}^{\ell-1})}\right] \nonumber\\
    &+ 2 \beta \left( \eta_{j}^{(g)} \right)^2\left(1-\frac{B_{i,j}^{\ell-1,(g)}}{D_{i,j}^{\ell-1}}\right)\frac{\left( D_{i,j}^{\ell-1}-1 \right) \Theta^2}{B_{i,j}^{\ell-1,(g)}D_{i,j}^{\ell-1}}\tilde{S}_{i,j}^{\ell-1,(g)} \nonumber 
\end{align}
Term (a) in \eqref{global_local_gradient_loss_different} can be further expanded as follows.
\begin{align}
&\mathsf{E}\left[\twonormsquare{\nabla F_j(\mathbf{w}_{i,j}^{\ell-1,(g)}) - \nabla F_{i,j}^{\mathsf{R}}(\mathbf{w}_{i,j}^{\ell-1,(g)}, \mathcal{D}_{i,j}^{\ell-1})}\right] \nonumber \\ 
&  = \mathsf{E}\left[\twonormsquare{\nabla F_j(\mathbf{w}_{i,j}^{\ell-1,(g)}) - \nabla F_{i,j}(\mathbf{w}_{i,j}^{\ell-1,(g)}, \mathcal{D}_{i,j}^{\ell-1})- \rho\left(\mathbf{w}_{i,j}^{\ell-1, (g)} -  \mathbf{w}_{i,j}^{(g)} \right)}  \right] \nonumber\\ 
&\leq 2 \mathsf{E}\left[\twonormsquare{\nabla F_j(\mathbf{w}_{i,j}^{\ell-1,(g)}) - \nabla F_{i,j}(\mathbf{w}_{i,j}^{\ell-1,(g)}, \mathcal{D}_{i,j}^{\ell-1})}  \right] + 2\rho^2 \mathsf{E}\left[\twonormsquare{\mathbf{w}_{i,j}^{\ell-1, (g)} -  \mathbf{w}_{i,j}^{(g)} }\right] \nonumber\\
& \leq 2 \mathsf{E}\left[\twonormsquare{\nabla F_j(\mathbf{w}_{i,j}^{\ell-1,(g)}) - \nabla F_{i,j}(\mathbf{w}_{i,j}^{\ell-1,(g)}, \mathcal{D}_{i,j}^{\ell-1})}  \right] + 2\rho^2 \mathsf{E}\left[\twonormsquare{\sum_{k=0}^{\ell-2}\nabla F_{i,j}^{\mathsf{R}}(\mathbf{w}_{i,j}^{k,(g)}, \mathcal{B}_{i,j}^{k,(g)}) }\right] \nonumber \\
& \leq 2 \mathsf{E}\left[\twonormsquare{\nabla F_j(\mathbf{w}_{i,j}^{\ell-1,(g)}) - \nabla F_{i,j}(\mathbf{w}_{i,j}^{\ell-1,(g)}, \mathcal{D}_{i,j}^{\ell-1})}  \right]  + 2\rho^2\left(\eta_j^{(g)}\right)^2 (\ell -1) V_2 \label{global_reg_local_diff_norm}
\end{align}
Plugging \eqref{global_reg_local_diff_norm} into term (a) in \eqref{global_local_gradient_loss_different}, we have
\begin{align}
    &\mathsf{E}\left[ F_j(\mathbf{w}_{i,j}^{\ell,(g)}) \right] \nonumber \\
    & \leq \mathsf{E}\left[ F_j(\mathbf{w}_{i,j}^{\ell-1,(g)}) \right] - \frac{\eta_{j}^{(g)}}{2}\mathsf{E}\left[\twonormsquare{\nabla F_j(\mathbf{w}_{i,j}^{\ell-1,(g)})}\right] \nonumber \\
    & + \frac{\eta_{j}^{(g)}}{2}\mathsf{E}\left[\twonormsquare{\nabla F_j(\mathbf{w}_{i,j}^{\ell-1,(g)}) - \nabla F_{i,j}(\mathbf{w}_{i,j}^{\ell-1,(g)}, \mathcal{D}_{i,j}^{\ell-1})}\right] \label{global_local_gradient_loss_different_true} \\
    & + \left( \beta \left( \eta_{j}^{(g)} \right)^2 - \frac{\eta_{j}^{(g)}}{2} \right) \mathsf{E}\left[\twonormsquare{\nabla F_{i,j}^{\mathsf{R}}(\mathbf{w}_{i,j}^{\ell-1,(g)}, \mathcal{D}_{i,j}^{\ell-1})}\right] \label{make_negative}  \\ 
    & +  2 \beta \left( \eta_{j}^{(g)} \right)^2\left(1-\frac{B_{i,j}^{\ell-1,(g)}}{D_{i,j}^{\ell-1}}\right)\frac{\left( D_{i,j}^{\ell-1}-1 \right) \Theta^2}{B_{i,j}^{\ell-1,(g)}D_{i,j}^{\ell-1}}\tilde{S}_{i,j}^{\ell-1,(g)} + \rho^2\left(\eta_j^{(g)}\right)^3 (\ell -1) V_2 \nonumber
\end{align}
By making use of the assumption that
\begin{align}
    \mathsf{E}\left[\twonormsquare{\nabla F_j(\mathbf{w}_{i,j}^{\ell-1,(g)}) - \nabla F_{i,j}(\mathbf{w}_{i,j}^{\ell-1,(g)}, \mathcal{D}_{i,j}^{\ell-1})}\right] \leq \delta_{i,j}^{(g)}
\end{align}
we can upper bound the term in (\ref{global_local_gradient_loss_different_true}). Also, by choosing $\eta_{j}^{(g)} < 1/(2\beta)$, we can make the term in (\ref{make_negative}) negative. Therefore, we have
\begin{equation} \label{local_final_minus_prev}
\begin{aligned}
    &\mathsf{E}\left[ F_j(\mathbf{w}_{i,j}^{\ell,(g)}) - F_j(\mathbf{w}_{i,j}^{\ell-1,(g)}) \right] \\
    & \leq - \frac{\eta_{j}^{(g)}}{2}\mathsf{E}\left[\twonormsquare{\nabla F_j(\mathbf{w}_{i,j}^{\ell-1,(g)})}\right] + \frac{\eta_{j}^{(g)}}{2}\delta_{i,j}^{(g)} + \rho^2\left(\eta_j^{(g)}\right)^3 (\ell -1) V_2\\
    & + 2 \beta \left( \eta_{j}^{(g)} \right)^2\left(1-\frac{B_{i,j}^{\ell-1,(g)}}{D_{i,j}^{\ell-1}}\right)\frac{\left( D_{i,j}^{\ell-1}-1 \right) \Theta^2}{B_{i,j}^{\ell-1,(g)}D_{i,j}^{\ell-1}}\tilde{S}_{i,j}^{\ell-1,(g)} 
\end{aligned}
\end{equation}
By arranging the terms and telescoping, we have
\begin{equation}\label{local_no_surrogate}
\begin{aligned}
    & \mathsf{E}\left[ F_j(\mathbf{w}_{i,j}^{\mathsf{F},(g)}) - F_j(\mathbf{w}_j^{(g)}) \right] \leq -\frac{\eta_{j}^{(g)}}{2}\sum_{\ell = 0}^{e_{i,j}^{(g)}-1}\mathsf{E}\left[\twonormsquare{\nabla F_j(\mathbf{w}_{i,j}^{\ell,(g)})}\right] + \frac{\eta_{j}^{(g)}}{2}e_{i,j}^{(g)}\delta_{i,j}^{(g)} \\
    & + \frac{\rho^2}{2}\left(\eta_j^{(g)}\right)^3 e_{i,j}^{(g)}\left(e_{i,j}^{(g)}- 1\right) V_2 + 2 \beta \left( \eta_{j}^{(g)} \right)^2\sum_{\ell = 0}^{e_{i,j}^{(g)}-1}\left(1-\frac{B_{i,j}^{\ell,(g)}}{D_{i,j}^{\ell}}\right)\frac{\left( D_{i,j}^{\ell}-1 \right) \Theta^2}{B_{i,j}^{\ell,(g)}D_{i,j}^{\ell}}\tilde{S}_{i,j}^{\ell,(g)}   \\
\end{aligned}
\end{equation}
Before continuing our derivation, we define the regularized global loss function as
\begin{equation}
F_j^{\mathsf{R}, (g)}\big(\mathbf{w}\big) = F_j\big(\mathbf{w}\big) + \frac{\rho}{2}\twonormsquare{\mathbf{w} - \mathbf{w}_{i,j}^{(g)}}
\end{equation}
Then, we have
\begin{align} 
    &\mathsf{E} \left[ F_j(\mathbf{w}_j^{(G)}) - F_j(\mathbf{w}_j^{(G-1)}) \right] \leq \mathsf{E} \left[ F_j^{\mathsf{R}, (G-1)}(\mathbf{w}_j^{(G)}) - F_j(\mathbf{w}_j^{(G-1)}) \right] \nonumber \\
    & \leq \sum_{g=0}^{G-1}\sum_{i \in \mathcal{I}} X_{i,j}^{g, G-1}\mathsf{E} \left[F_j^{\mathsf{R}, (G-1)}\left((1-\alpha_j)\mathbf{w}_{j}^{(G-1)}+\alpha_j \mathbf{w}_{i,j}^{\mathsf{F},(g)}\right) - F_j(\mathbf{w}_j^{(G-1)}) \right] \nonumber \\
    & \leq \sum_{g=0}^{G-1}\sum_{i \in \mathcal{I}} X_{i,j}^{g, G-1} \mathsf{E} \left[ (1-\alpha_j) F_j^{\mathsf{R}, (G-1)}(\mathbf{w}_{j}^{(G-1)}) + \alpha_j F_j^{\mathsf{R}, (G-1)}(\mathbf{w}_{i,j}^{\mathsf{F},(g)}) -  F_j(\mathbf{w}_j^{(G-1)}) \right]\label{eq:global_loss_convex_from} \\
    & \leq \sum_{g=0}^{G-1}\sum_{i \in \mathcal{I}} X_{i,j}^{g, G-1} \mathsf{E} \Big[ (1-\alpha_j) F_j(\mathbf{w}_{j}^{(G-1)}) + \alpha_j F_j(\mathbf{w}_{i,j}^{\mathsf{F},(g)}) -  F_j(\mathbf{w}_j^{(G-1)}) \\
    &+ \frac{\alpha_j\rho}{2}\twonormsquare{\mathbf{w}_{i,j}^{\mathsf{F},(g)} - \mathbf{w}_{j}^{(G-1)}} \Big] \nonumber\\ 
    & \leq \sum_{g=0}^{G-1}\sum_{i \in \mathcal{I}} X_{i,j}^{g, G-1}\mathsf{E}\left[ -\alpha_j F_j(\mathbf{w}_{j}^{(G-1)}) + \alpha_j F_j(\mathbf{w}_{i,j}^{\mathsf{F},(g)})\right] \label{eq:global_loss_convex_to} \\
    & + \sum_{g=0}^{G-1}\sum_{i \in \mathcal{I}} X_{i,j}^{g, G-1}\frac{\alpha_j\rho}{2}\twonormsquare{\mathbf{w}_{i,j}^{\mathsf{F},(g)} - \mathbf{w}_{j}^{(G-1)}} \nonumber\\
    & \leq \sum_{g=0}^{G-1}\sum_{i \in \mathcal{I}} X_{i,j}^{g, G-1}\alpha_j\mathsf{E}\left[  \left( F_j(\mathbf{w}_{i,j}^{\mathsf{F},(g)}) -  F_j(\mathbf{w}_{j}^{(G-1)})\right) \right] + \sum_{g=0}^{G-1}\sum_{i \in \mathcal{I}} X_{i,j}^{g, G-1}\frac{\alpha_j\rho}{2}\twonormsquare{\mathbf{w}_{i,j}^{\mathsf{F},(g)} - \mathbf{w}_{j}^{(G-1)}}\nonumber\\
    & \leq \sum_{g=0}^{G-1}\sum_{i \in \mathcal{I}} X_{i,j}^{g, G-1}\alpha_j\mathsf{E}\left[  \left( F_j(\mathbf{w}_{i,j}^{\mathsf{F},(g)}) - F_j(\mathbf{w}_j^{(g)}) + F_j(\mathbf{w}_j^{(g)}) -  F_j(\mathbf{w}_{j}^{(G-1)})\right) \right]  \nonumber\\
    & + \sum_{g=0}^{G-1}\sum_{i \in \mathcal{I}} X_{i,j}^{g, G-1}\frac{\alpha_j\rho}{2}\twonormsquare{\mathbf{w}_{i,j}^{\mathsf{F},(g)} - \mathbf{w}_{j}^{(g)} + \mathbf{w}_{j}^{(g)} - \mathbf{w}_{j}^{(G-1)}}\nonumber\\
    & \leq \sum_{g=0}^{G-1}\sum_{i \in \mathcal{I}} X_{i,j}^{g, G-1}\alpha_j\mathsf{E}\left[ F_j(\mathbf{w}_{i,j}^{\mathsf{F},(g)}) - F_j(\mathbf{w}_j^{(g)})  \right] + \sum_{g=0}^{G-1}\sum_{i \in \mathcal{I}} X_{i,j}^{g, G-1}\alpha_j\mathsf{E}\left[ F_j(\mathbf{w}_j^{(g)}) -  F_j(\mathbf{w}_{j}^{(G-1)}) \right] \nonumber \\
    &+ \sum_{g=0}^{G-1}\sum_{i \in \mathcal{I}} X_{i,j}^{g, G-1}\alpha_j\rho\twonormsquare{\mathbf{w}_{i,j}^{\mathsf{F},(g)} - \mathbf{w}_{j}^{(g)}}  + \sum_{g=0}^{G-1}\sum_{i \in \mathcal{I}} X_{i,j}^{g, G-1}\alpha_j\rho\twonormsquare{\mathbf{w}_{j}^{(g)} - \mathbf{w}_{j}^{(G-1)}} \nonumber \\
    &  \leq \sum_{g=0}^{G-1}\sum_{i \in \mathcal{I}} X_{i,j}^{g, G-1}\alpha_j\mathsf{E}\left[ F_j(\mathbf{w}_{i,j}^{\mathsf{F},(g)}) - F_j(\mathbf{w}_j^{(g)})  \right] \nonumber\\
    &+ \sum_{g=0}^{G-1}\sum_{i \in \mathcal{I}} X_{i,j}^{g, G-1}\alpha_j\mathsf{E}\left[ \underbrace{F_j(\mathbf{w}_j^{(g)}) -  F_j(\mathbf{w}_{j}^{(G-1)})}_{(a)} \right] \label{intermediate_g_g_1_no_surrogate} \\
    &+ \sum_{g=0}^{G-1}\sum_{i \in \mathcal{I}} X_{i,j}^{g, G-1}\alpha_j\rho\left(\eta_j^{(g)}\right)^2 V_2 \left(e_{i,j}^{(g)}\right)^2  + \sum_{g=0}^{G-1}\sum_{i \in \mathcal{I}} X_{i,j}^{g, G-1}\alpha_j\rho\twonormsquare{\mathbf{w}_{j}^{(g)} - \mathbf{w}_{j}^{(G-1)}} \nonumber 
\end{align}
From \eqref{eq:global_loss_convex_from} to \eqref{eq:global_loss_convex_to}, we use the definition of the regularized loss function and the weakly convexity of the regularized global loss function.
Now we need to further upper bound the term $(a)$ in (\ref{intermediate_g_g_1_no_surrogate}). Using the smoothness, we have
\begin{equation}\label{difference_tao_g_1_no_surrogate} 
\begin{aligned}
    &F_j(\mathbf{w}_j^{(g)}) -  F_j(\mathbf{w}_j^{(G-1)}) \leq \innerproduct{\nabla F_j(\mathbf{w}_j^{(G-1)})}{\mathbf{w}_j^{(g)} -\mathbf{w}_j^{(G-1)}} + \frac{\beta}{2} \twonormsquare{\mathbf{w}_j^{(g)} -\mathbf{w}_j^{(G-1)}} \\
    & \leq \norm{\nabla F_j(\mathbf{w}_j^{(G-1)})} \norm{\mathbf{w}_j^{(g)} -\mathbf{w}_j^{(G-1)}} + \frac{\beta}{2} \twonormsquare{\mathbf{w}_j^{(g)} -\mathbf{w}_j^{(G-1)}} \\
    & \leq \sqrt{V_1} \norm{\mathbf{w}_j^{(g)} -\mathbf{w}_j^{(G-1)}} + \frac{\beta}{2} \twonormsquare{\mathbf{w}_j^{(g)} -\mathbf{w}_j^{(G-1)}} 
\end{aligned}
\end{equation}

We need to find the upper bound for $\twonormsquare{\mathbf{w}_j^{(g)} - \mathbf{w}_j^{(G-1)}}$. Based on the recursive relationship \Cref{lemma:recursive_diff_consecutive_global} and $\lVert \nabla F_{i,j}^{\mathsf{R}}(\mathbf{w},d) \rVert^2 \leq V_2$, we have 
\begin{equation}\label{tao_g_1_difference_norm_2_no_surrogate}
\twonormsquare{\mathbf{w}_j^{(g)} - \mathbf{w}_j^{(G-1)}} \leq 
\begin{cases}
\frac{K_j\alpha_j  \left(e_j^{\mathsf{max}}\right)^2\left(\eta_j^{\mathsf{max}}\right)^2V_2\left(1-\left(K_j\alpha_j\right)^{G-1}\right)}{1 - K_j\alpha_j}, &~\text{if} ~ K_j\alpha_j \neq 1  \\
 (G-1) \left(e_j^{\mathsf{max}}\right)^2\left(\eta_j^{\mathsf{max}}\right)^2V_2, &~\text{if}~ K_j\alpha_j = 1
\end{cases}
\end{equation}
Therefore, we also have
\begin{equation}\label{tao_g_1_difference_norm_1_no_surrogate} 
\norm{\mathbf{w}_j^{(g)} - \mathbf{w}_j^{(G-1)}} \leq 
\begin{cases}
\sqrt{\frac{K_j\alpha_j \left(e_j^{\mathsf{max}}\right)^2\left(\eta_j^{\mathsf{max}}\right)^2V_2\left(1-\left(K_j\alpha_j\right)^{G-1}\right)}{1 - K_j\alpha_j}} & ~\text{if} ~ K_j\alpha_j \neq 1\\
\sqrt{(G-1) \left(e_j^{\mathsf{max}}\right)^2\left(\eta_j^{\mathsf{max}}\right)^2V_2}  &~\text{if}~ K_j\alpha_j = 1
\end{cases}
\end{equation}
Plugging (\ref{tao_g_1_difference_norm_2_no_surrogate}), (\ref{tao_g_1_difference_norm_1_no_surrogate}) into (\ref{difference_tao_g_1_no_surrogate}), we have
\begin{equation}\label{g_G_min_1_norm}
\begin{aligned}
F_j(\mathbf{w}_j^{(g)}) -  F_j(\mathbf{w}_j^{(G-1)}) &\leq \sqrt{\frac{K_j\alpha_j \left(e_j^{\mathsf{max}}\right)^2\left(\eta_j^{\mathsf{max}}\right)^2 V_1 V_2\left(1-\left(K_j\alpha_j\right)^{G-1}\right)}{1 - K_j\alpha_j}} \\
&+ \frac{\beta}{2}\frac{K_j\alpha_j \left(e_j^{\mathsf{max}}\right)^2\left(\eta_j^{\mathsf{max}}\right)^2V_2\left(1-\left(K_j\alpha_j\right)^{G-1}\right)}{1 - K_j\alpha_j} 
\end{aligned}
\end{equation}
At this point, we need to add the concept drift to the bound based on \Cref{def:concept_drift_active_concept_drift} and \Cref{def:rectangular_functions}. Plugging \eqref{local_no_surrogate}, \eqref{tao_g_1_difference_norm_2_no_surrogate}, and \eqref{g_G_min_1_norm} into \eqref{intermediate_g_g_1_no_surrogate}, we have
\begin{align}
    &\mathsf{E} \left[ F_j(\mathbf{w}_j^{(G)}) - F_j(\mathbf{w}_j^{(G-1)}) \right] \label{G_G_minus_1_rhs}\\
    &\leq -\frac{\alpha_j}{2}\sum_{g=0}^{G-1}\sum_{i \in \mathcal{I}} X_{i,j}^{g, G-1}\eta_{j}^{(g)}\sum_{\ell = 0}^{e_{i,j}^{(g)}-1}\mathsf{E}\left[\twonormsquare{\nabla F_j(\mathbf{w}_{i,j}^{\ell,(g)})}\right] \frac{\alpha_j}{2}\sum_{g=0}^{G-1}\sum_{i \in \mathcal{I}} X_{i,j}^{g, G-1}\eta_{j}^{(g)}e_{i,j}^{(g)}\delta_{i,j}^{(g)} \nonumber \\
    &+ 2 \alpha_j\beta \sum_{g=0}^{G-1}\sum_{i \in \mathcal{I}} X_{i,j}^{g, G-1}\left( \eta_{j}^{(g)} \right)^2\sum_{\ell = 0}^{e_{i,j}^{(g)}-1}\left(1-\frac{B_{i,j}^{\ell,(g)}}{D_{i,j}^{\ell}}\right)\frac{\left( D_{i,j}^{\ell}-1 \right) \Theta^2}{B_{i,j}^{\ell,(g)}D_{i,j}^{\ell}}\tilde{S}_{i,j}^{\ell,(g)} \nonumber\\
    & + \frac{\rho^2 \alpha_j}{2}\sum_{g=0}^{G-1}\sum_{i \in \mathcal{I}} X_{i,j}^{g, G-1}\left(\eta_j^{(g)}\right)^3 e_{i,j}^{(g)}\left(e_{i,j}^{(g)}- 1\right) V_2  + \sum_{g=0}^{G-1}\sum_{i \in \mathcal{I}} X_{i,j}^{g, G-1}\alpha_j\rho\left(\eta_j^{(g)}\right)^2 V_2 \left(e_{i,j}^{(g)}\right)^2 \nonumber\\ 
    & + \sqrt{\frac{K_j\alpha_j^3  \left(e_j^{\mathsf{max}}\right)^2\left(\eta_j^{\mathsf{max}}\right)^2 V_1 V_2\left(1-\left(K_j\alpha_j\right)^{G-1}\right)}{1 - K_j\alpha_j}}\nonumber \\
    &+ \left(\frac{\beta}{2} + 1\right)\frac{K_j\alpha_j^2  \left(e_j^{\mathsf{max}}\right)^2\left(\eta_j^{\mathsf{max}}\right)^2 V_2\left(1-\left(K_j\alpha_j\right)^{G-1}\right)}{1 - K_j\alpha_j} \nonumber\\
    & + \alpha_j\sum_{g=0}^{G-1}\sum_{i \in \mathcal{I}} X_{i,j}^{g, G-1} \left(\sum_{0 \leq t \leq T^G}y_{i,j}^{\mathsf{AC}, (g)}(t)\Delta_{i,j}^{\mathsf{AC}}(t)  + \sum_{0 \leq t \leq T^G}y_{i,j}^{\mathsf{ID}, (g)}(t)\Delta_{i,j}^{\mathsf{ID}}(t) \right) \nonumber
\end{align}
We can further find the upper bound of the first term in the right-hand side of the inequality in \eqref{G_G_minus_1_rhs} as follows
\begin{align}
&-\frac{\alpha_j}{2}\sum_{g=0}^{G-1}\sum_{i \in \mathcal{I}} X_{i,j}^{g, G-1}\eta_{j}^{(g)}\sum_{\ell = 0}^{e_{i,j}^{(g)}-1}\mathsf{E}\left[\twonormsquare{\nabla F_j(\mathbf{w}_{i,j}^{\ell,(g)})}\right] \label{G_G_minus_1_rhs_1} \\
&\leq -\frac{\alpha_j}{2}\sum_{g=0}^{G-1}\sum_{i \in \mathcal{I}} X_{i,j}^{g, G-1}\eta_{j}^{(g)}\mathsf{E}\left[\twonormsquare{\nabla F_j(\mathbf{w}_{i,j}^{0,(g)})}\right] \label{G_G_minus_1_rhs_2}\\ 
& \leq -\frac{\alpha_j}{2}\sum_{g=0}^{G-1}\sum_{i \in \mathcal{I}} X_{i,j}^{g, G-1}\eta_{j}^{(g)}\mathsf{E}\left[\twonormsquare{\nabla F_j(\mathbf{w}_j^{(g)})}\right] \label{G_G_minus_1_rhs_3}
\end{align}
From \eqref{G_G_minus_1_rhs_1} to \eqref{G_G_minus_1_rhs_2}, we retain only one term in the inner summation. From \eqref{G_G_minus_1_rhs_2} to \eqref{G_G_minus_1_rhs_3}, we just used the definition that $\mathbf{w}_{i,j}^{0, (g)} = \mathbf{w}_j^{(g)}$ for the device $i$ that gets $\mathbf{w}_j^{0, (g)}$. Plugging \eqref{G_G_minus_1_rhs_3} into \eqref{G_G_minus_1_rhs}, rearranging the term, and dividing $\alpha_j/2$, we have 
\begin{align*}
    & \sum_{g=0}^{G-1}\sum_{i \in \mathcal{I}} X_{i,j}^{g, G-1}\eta_{j}^{(g)}\mathsf{E}\left[\twonormsquare{\nabla F_j(\mathbf{w}_j^{(g)})}\right] \\
    &\leq \frac{2}{\alpha_j}\mathsf{E} \left[ F_j(\mathbf{w}_j^{(G-1)}) - F_j(\mathbf{w}_j^{(G)}) \right]  + \sum_{g=0}^{G-1}\sum_{i \in \mathcal{I}} X_{i,j}^{g, G-1}\eta_{j}^{(g)}e_{i,j}^{(g)}\delta_{i,j}^{(g)}\\
    &+ 4\beta \sum_{g=0}^{G-1}\sum_{i \in \mathcal{I}} X_{i,j}^{g, G-1}\left( \eta_{j}^{(g)} \right)^2\sum_{\ell = 0}^{e_{i,j}^{(g)}-1}\left(1-\frac{B_{i,j}^{\ell,(g)}}{D_{i,j}^{\ell}}\right)\frac{\left( D_{i,j}^{\ell}-1 \right) \Theta^2}{B_{i,j}^{\ell,(g)}D_{i,j}^{\ell}}\tilde{S}_{i,j}^{\ell,(g)} \\
    & + 2\rho\sum_{g=0}^{G-1}\sum_{i \in \mathcal{I}} X_{i,j}^{g, G-1}\left(\eta_j^{(g)}\right)^2 e_{i,j}^{(g)}V_2\left(\frac{\rho}{2}\eta_j^{(g)}(e_{i,j}^{(g)}-1) + e_{i,j}^{(g)}\right) \\ 
    & + \sqrt{\frac{4 K_j\alpha_j  \left(e_j^{\mathsf{max}}\right)^2\left(\eta_j^{\mathsf{max}}\right)^2 V_1 V_2\left(1-\left(K_j\alpha_j\right)^{G-1}\right)}{1 - K_j\alpha_j}} \\
    &+ \left(\beta+2\right)\frac{K_j\alpha_j  \left(e_j^{\mathsf{max}}\right)^2\left(\eta_j^{\mathsf{max}}\right)^2 V_2\left(1-\left(K_j\alpha_j\right)^{G-1}\right)}{1 - K_j\alpha_j} \nonumber\\
    &+ \sum_{g=0}^{G-1}\sum_{i \in \mathcal{I}} X_{i,j}^{g, G-1} \left(\sum_{0 \leq t \leq T^G}y_{i,j}^{\mathsf{AC}, (g)}(t)\Delta_{i,j}^{\mathsf{AC}}(t)  + \sum_{0 \leq t \leq T^G}y_{i,j}^{\mathsf{ID}, (g)}(t)\Delta_{i,j}^{\mathsf{ID}}(t) \right) \nonumber
\end{align*}
Summing over the global aggregation and dividing by the number of global aggregations $G_j$, we have
\begin{align*}
    & \frac{1}{G_j}\sum_{g' = 0}^{G_j-1}\sum_{g=0}^{g'}\sum_{i \in \mathcal{I}} X_{i,j}^{g, g'}\eta_{j}^{(g)}\mathsf{E}\left[\twonormsquare{\nabla F_j(\mathbf{w}_j^{(g)})}\right] \leq \frac{2}{G_j\alpha_j}\mathsf{E} \left[ F_j(\mathbf{w}_j^{(0)}) - F_j(\mathbf{w}_j^{(G_j)}) \right]  \\
  & + \frac{1}{G_j}\sum_{g' = 0}^{G_j-1}\sum_{g=0}^{g'}\sum_{i \in \mathcal{I}} X_{i,j}^{g, g'}\eta_{j}^{(g)}e_{i,j}^{(g)}\delta_{i,j}^{(g)} \\
  &+ \frac{4\beta}{G_j} \sum_{g' = 0}^{G_j-1}\sum_{g=0}^{g'}\sum_{i \in \mathcal{I}} X_{i,j}^{g, g'}\left( \eta_{j}^{(g)} \right)^2\sum_{\ell = 0}^{e_{i,j}^{(g)}-1}\left(1-\frac{B_{i,j}^{\ell,(g)}}{D_{i,j}^{\ell}}\right)\frac{\left( D_{i,j}^{\ell}-1 \right) \Theta^2}{B_{i,j}^{\ell,(g)}D_{i,j}^{\ell}}\tilde{S}_{i,j}^{\ell,(g)} \\
    & +  \frac{2\rho}{G_j}\sum_{g' = 0}^{G_j-1}\sum_{g=0}^{g'}\sum_{i \in \mathcal{I}} X_{i,j}^{g, g'}\left(\eta_j^{(g)}\right)^2 e_{i,j}^{(g)}V_2\left(\frac{\rho}{2}\eta_j^{(g)}(e_{i,j}^{(g)}-1) + e_{i,j}^{(g)}\right) \\ 
    & + \frac{1}{G_j}\sum_{g' = 0}^{G_j-1}\sqrt{\frac{4 K_j\alpha_j  \left(e_j^{\mathsf{max}}\right)^2\left(\eta_j^{\mathsf{max}}\right)^2 V_1 V_2\left(1-\left(K_j\alpha_j\right)^{g'}\right)}{1 - K_j\alpha_j}} \\
    & + \frac{1}{G_j}\sum_{g' = 0}^{G_j-1}(\beta+2)\frac{K_j\alpha_j  \left(e_j^{\mathsf{max}}\right)^2\left(\eta_j^{\mathsf{max}}\right)^2 V_2\left(1-\left(K_j\alpha_j\right)^{g'}\right)}{1 - K_j\alpha_j} \\
    & + \frac{1}{G_j}\sum_{g' = 0}^{G_j-1}\sum_{g=0}^{g'}\sum_{i \in \mathcal{I}}X_{i,j}^{g, g'} \left(\sum_{0 \leq t \leq T^{G_j}}y_{i,j}^{\mathsf{AC}, (g)}(t)\Delta_{i,j}^{\mathsf{AC}}(t)  + \sum_{0 \leq t \leq T^{G_j}}y_{i,j}^{\mathsf{ID}, (g)}(t)\Delta_{i,j}^{\mathsf{ID}}(t) \right)
\end{align*}
To further simplify the convergence bound, let $\eta_j^{\mathsf{min}} = \min\{\eta_j^{(g)}\}_{g=0}^G$ 
\begin{align*}
    & \frac{1}{G_j}\sum_{g' = 0}^{G_j-1}\sum_{g=0}^{g'}\sum_{i \in \mathcal{I}} X_{i,j}^{g, g'}\mathsf{E}\left[\twonormsquare{\nabla F_j(\mathbf{w}_j^{(g)})}\right] \leq \frac{2}{G_j\eta_j^{\mathsf{min}}\alpha_j}\mathsf{E} \left[ F_j(\mathbf{w}_j^{(0)}) - F_j(\mathbf{w}_j^{(G_j)}) \right]  \\
  & + \frac{1}{G_j\eta_j^{\mathsf{min}}}\sum_{g' = 0}^{G_j-1}\sum_{g=0}^{g'}\sum_{i \in \mathcal{I}} X_{i,j}^{g, g'}\eta_{j}^{(g)}e_{i,j}^{(g)}\delta_{i,j}^{(g)} \\
  & + \frac{4\beta}{G_j\eta_j^{\mathsf{min}}} \sum_{g' = 0}^{G_j-1}\sum_{g=0}^{g'}\sum_{i \in \mathcal{I}} X_{i,j}^{g, g'}\left( \eta_{j}^{(g)} \right)^2\sum_{\ell = 0}^{e_{i,j}^{(g)}-1}\left(1-\frac{B_{i,j}^{\ell,(g)}}{D_{i,j}^{\ell}}\right)\frac{\left( D_{i,j}^{\ell}-1 \right) \Theta^2}{B_{i,j}^{\ell,(g)}D_{i,j}^{\ell}}\tilde{S}_{i,j}^{\ell,(g)} \\
    & +  \frac{2\rho}{G_j\eta_j^{\mathsf{min}}}\sum_{g' = 0}^{G_j-1}\sum_{g=0}^{g'}\sum_{i \in \mathcal{I}} X_{i,j}^{g, g'}\left(\eta_j^{(g)}\right)^2 e_{i,j}^{(g)}V_2\left(\frac{\rho}{2}\eta_j^{(g)}(e_{i,j}^{(g)}-1) + e_{i,j}^{(g)}\right) \\ 
    & + \frac{1}{G_j \eta_j^{\mathsf{min}}}\sum_{g' = 0}^{G_j-1}\sqrt{\frac{4 K_j\alpha_j  \left(e_j^{\mathsf{max}}\right)^2\left(\eta_j^{\mathsf{max}}\right)^2 V_1 V_2\left(1-\left(K_j\alpha_j\right)^{g'}\right)}{1 - K_j\alpha_j}} \\
    & +\frac{1}{G_j \eta_j^{\mathsf{min}}}\sum_{g' = 0}^{G_j-1} (\beta+2)\frac{K_j\alpha_j  \left(e_j^{\mathsf{max}}\right)^2\left(\eta_j^{\mathsf{max}}\right)^2 V_2\left(1-\left(K_j\alpha_j\right)^{g'}\right)}{1 - K_j\alpha_j} \\
    & + \frac{1}{G_j\eta_j^{\mathsf{min}}}\sum_{g' = 0}^{G_j-1}\sum_{g=0}^{g'}\sum_{i \in \mathcal{I}}X_{i,j}^{g, g'} \left(\sum_{t=0}^{T^{G_j}}y_{i,j}^{\mathsf{AC}, (g)}(t)\Delta_{i,j}^{\mathsf{AC}}(t)  + \sum_{t=0}^{T^{G_j}}y_{i,j}^{\mathsf{ID}, (g)}(t)\Delta_{i,j}^{\mathsf{ID}}(t) \right) 
\end{align*}

\section{Proof of Corollary \ref{corollary:simplified_left_hand_side}} \label{proof_of_corollary_1}
According to the definition of our device scheduling tensor in \eqref{eq:device_scheduling_cases}, if whenever the server updates the global model, it activates at least one device for local model training with the updated model, $\mathsf{Conv}_j$ first reduces to the weighted sum of $\mathsf{E}\left[\twonormsquare{\nabla F_j(\mathbf{w}_j^{(g)})}\right]$. Specifically, for $G_j$ global aggregations, let $m_j$ be the number of times that the global model $\mathbf{w}_j^{(g)}$ has been used to activate the local training of devices. Based on our formulation, we naturally have $\sum_j m_j = G_j$. Then, we have 
\begin{equation}
\mathsf{Conv}_j = \frac{\sum_j m_j\mathsf{E}\left[\twonormsquare{\nabla F_j(\mathbf{w}_j^{(g)})}\right]}{G_j} = \frac{\sum_j m_j\mathsf{E}\left[\twonormsquare{\nabla F_j(\mathbf{w}_j^{(g)})}\right]}{\sum_j m_j}
\end{equation}
which is lower bounded by $\min_{g= 0}^{G_j-1}\mathsf{E}[\lVert\nabla F_j(\mathbf{w}_j^{(g)})\rVert^2]$. For the second case, if the server activates exactly one device after every update, then $m_j = 1, \, \forall j$. Thus, we have, 
\begin{equation}
\mathsf{Conv}_j = \frac{\sum_j \mathsf{E}\left[\twonormsquare{\nabla F_j(\mathbf{w}_j^{(g)})}\right]}{G_j} 
\end{equation}

\section{simulation results for the other two non-iid dataset partitions}
\subsection{Simulation Results for Varied Partition}

\begin{figure*}[ht]
\centering
\includegraphics[width = \textwidth]{./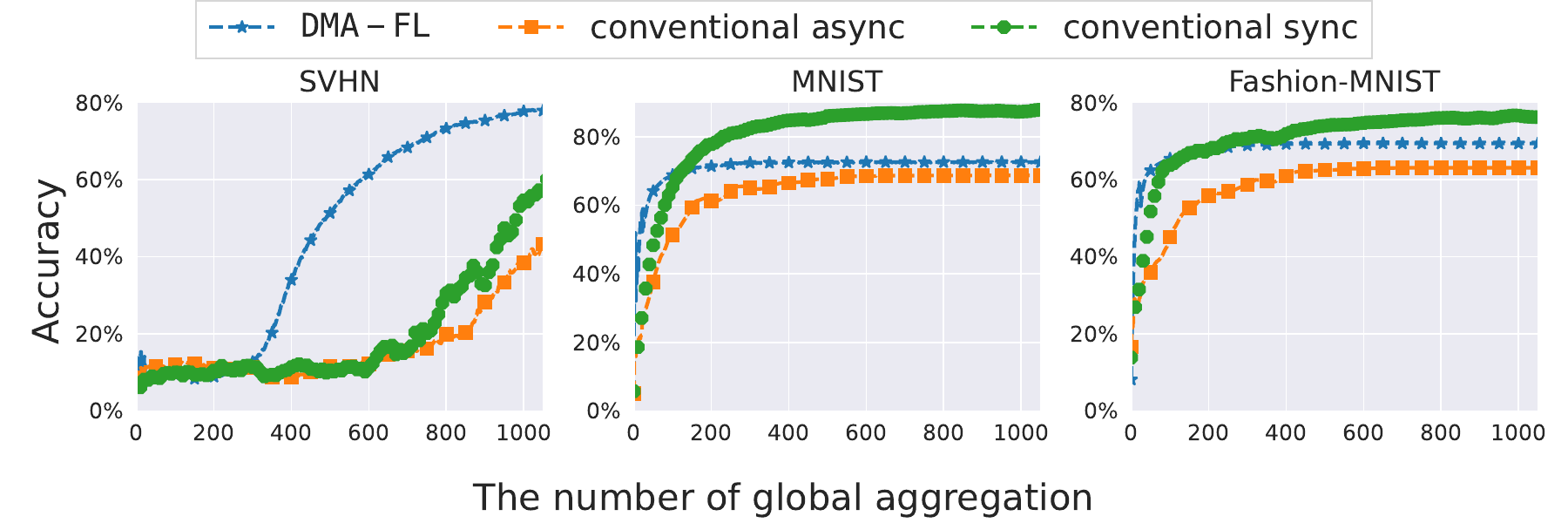}
\caption{ML training convergence of all schemes for all tasks. {\tt DMA-FL} boosts the performance of the naturally emphasized task (SVHN) dramatically while obtaining an accuracy convergence somewhere between the baselines for the other tasks.}
\label{fig:accuracy_vs_num_global_aggregation_all_task}
\vspace{3mm}
\centering
\includegraphics[width = \textwidth]{./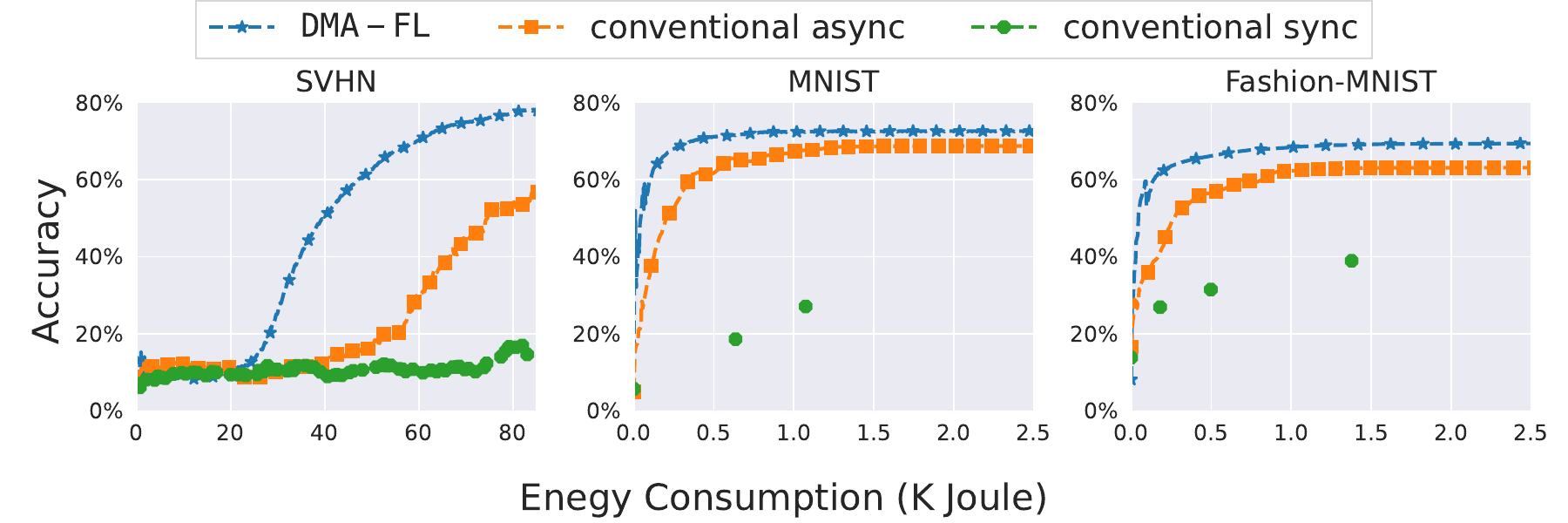}
\caption{Accuracy vs. energy consumption trade-off obtained by each method across all tasks. Even when one task is emphasized (SVHN), {\tt DMA-FL} is able to obtain different target accuracies under lower energy consumption across all the tasks. Specifically, for the synchronous scheme (green curve), the energy needed to reach $65\%$ (for SVHN), $70\%$ (for MNIST) and $70\%$ (for Fasion-MNIST) accuracy is $146.98$, $11.70$, and $28.29$ K Joule respectively.}
\label{fig:accuracy_vs_energy_consumption_all_task}
\end{figure*}

We first compare the performance of {\tt DMA-FL} with optimized device scheduling (obtained through solving $\mathcal{P}$) against two baselines: (i) conventional asynchronous FL \cite{xie2019asynchronous} and (ii) conventional synchronous FL \cite{mcmahan2017communication}.
In conventional asynchronous FL, the idle times of devices are randomly chosen, while the resource allocation is conducted according to $(\bm{\mathcal{P}})$ with the chosen idle times. In conventional asynchronous FL, each global aggregation is performed whenever the server gets a trained model. On the other hand, in the conventional synchronous scheme, each global aggregation is performed when the server receives \textit{all the trained local models}. 

Fig. \ref{fig:accuracy_vs_num_global_aggregation_all_task} compares the convergence behavior of the algorithms. For all tasks, we see that our proposed scheme (blue curve) has a superior performance compared to the conventional asynchronous scheme (orange curve), attributed to our proposed scheme optimizing both device scheduling and idle times of devices. Also, for all tasks, the conventional synchronous (green curve) scheme outperforms the conventional asynchronous scheme. This is because each global aggregation in the synchronous regime translates to having all the devices engaging in uplink transmissions, which naturally would lead to better performance when only one device engages in uplink transmission as in the asynchronous scheme. The substantial accuracy improvements obtained by {\tt DMA-FL} on the SVHN task is due to the optimization allocating more resources to handle its higher training complexity compared with MNIST and FMNIST.

Although the conventional synchronous scheme achieves a higher accuracy than our proposed scheme for the lower complexity tasks (i.e., MNIST and FMNIST), Fig. \ref{fig:accuracy_vs_energy_consumption_all_task} shows that this comes at the expense of much higher resource utilization. Specifically, in Fig. \ref{fig:accuracy_vs_energy_consumption_all_task}, we plot the observed energy consumption for each task to reach the corresponding accuracy levels in Fig. \ref{fig:accuracy_vs_num_global_aggregation_all_task}. 
As can be seen, the asynchronous training styles are much more resource efficient since they can skip engaging the stragglers (i.e., devices with higher energy consumption) in model aggregations.
Further, our proposed scheme can reach the same accuracy as the conventional asynchronous method under less energy consumption due to the optimized device scheduling. 

\subsection{Simulation Results for 2-label Partition}
\begin{figure}[ht]
\centering
\includegraphics[width = \textwidth]{./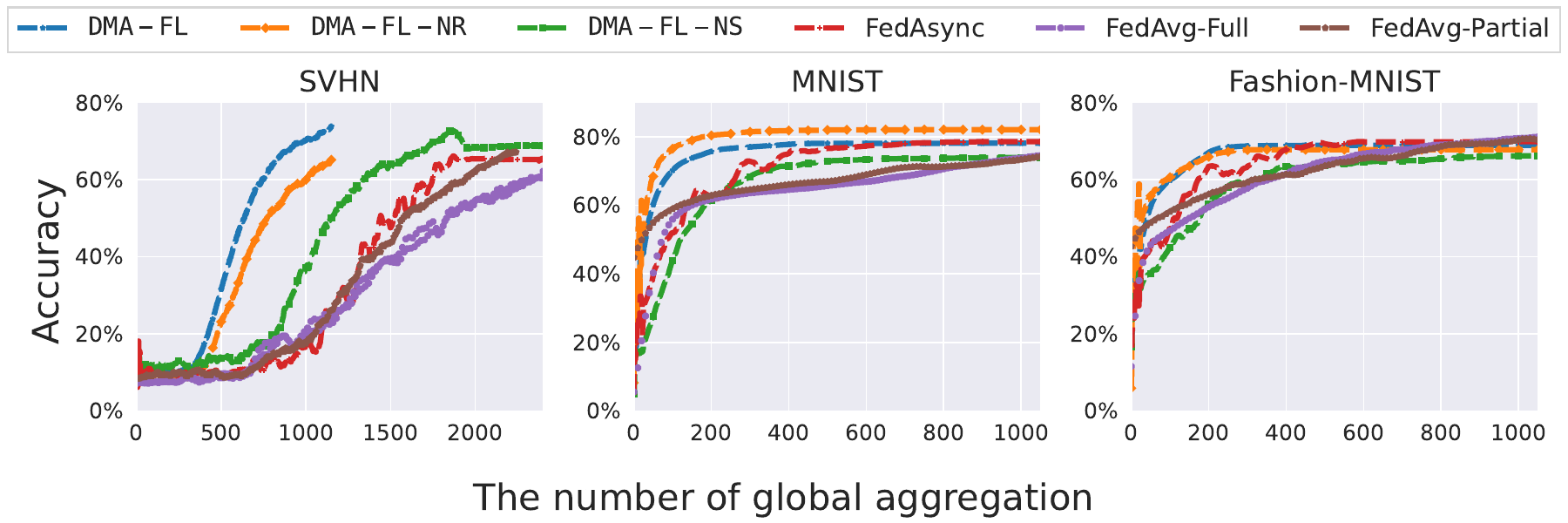}
\caption{ML training performance over global aggregations of all schemes for all tasks, under the non-iid partitioning of each device having 2 labels. {\tt DMA-FL} significantly outperforms the baselines on the SVHN task, and performs comparably to the best baselines on MNIST and Fashion-MNIST. As we will see in Figure \ref{fig:acc_energy_2_labels}, the baselines also incur significantly higher energy consumption on each task.}
\label{fig:acc_iteration_2_labels}
\includegraphics[width = \textwidth]{./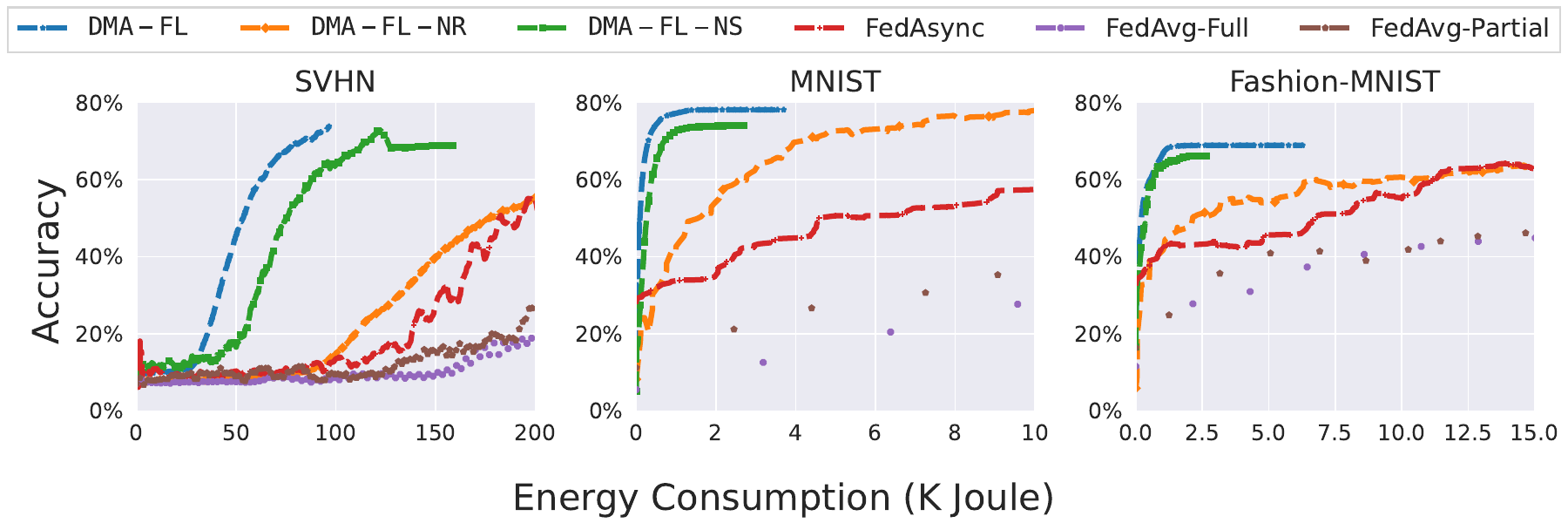}
\caption{Accuracy vs. energy consumption trade-off obtained by each method across all tasks with a non-iid partitioning of each device having 2 labels. Our proposed scheme can reach a target level of accuracy with significantly less energy consumption in comparison with all the baselines on each task.}
\label{fig:acc_energy_2_labels}
\end{figure}
Fig. \ref{fig:acc_iteration_2_labels} compares the convergence behavior of the algorithms under the 2-label partitioning from Comment 3. Fig. \ref{fig:acc_energy_2_labels} presents the corresponding energy consumption plots. Overall, the key messages regarding substantial improvement in accuracy-energy tradeoff obtained by {\tt DMA-FL} remain the same with the addition of these new baselines. Specifically, in Fig. \ref{fig:acc_iteration_2_labels}, we see that {\tt DMA-FL} obtains substantial improvements in convergence compared to all baselines on SVHN, marginal improvement in MNIST, and comparable performance on Fashion-MNIST. The improvement on SVHN is expected due to the higher complexity of this task, translating to larger loss bounds in the objective function of $\bm{\mathcal{P}}$. In Fig. \ref{fig:acc_energy_2_labels}, we see that the baselines require substantially higher energy consumption to reach target accuracy levels, validating the gains provided by {\tt DMA-FL}'s joint optimization of device scheduling and resource allocation for heterogeneous asynchronous FL.

We also see that the rest of the asynchronous schemes ({\tt DMA-FL-NR}, and {\tt DMA-FL-NS}, and FedAsync) outperform the synchronous schemes (FedAvg-Full and FedAvg-Partial) in terms of energy consumption in Fig. \ref{fig:acc_energy_2_labels}. The asynchronous training styles are more resource efficient since they can skip engaging devices with higher energy consumption during specific model aggregation iterations. Among the asynchronous schemes, we can verify the benefits of optimized resource allocation in terms of improved energy efficiency by comparing {\tt DMA-FL} and {\tt DMA-FL-NS} with {\tt DMA-FL-NR} and FedAsync. As shown, {\tt DMA-FL} and {\tt DMA-FL-NS} obtain a better accuracy-energy tradeoff than the other asynchronous schemes in Fig. \ref{fig:acc_energy_2_labels}. On the other hand, in Fig. \ref{fig:acc_iteration_2_labels}, we see that {\tt DMA-FL-NR} performs closest to {\tt DMA-FL} (even outperforming it for MNIST), obtaining better convergence over global aggregations than {\tt DMA-FL-NS}. Since {\tt DMA-FL-NR} is not considering resource consumption, it optimizes the device scheduling for convergence speed, but consumes significant energy on each task. {\tt DMA-FL} balances both of these objectives to obtain the best overall performance.
\end{document}